\documentclass{article}




\usepackage[final]{neurips_2020}
\usepackage[normalem]{ulem}
\usepackage{xcolor}


\usepackage[utf8]{inputenc} 
\usepackage[T1]{fontenc}    
\usepackage[hidelinks]{hyperref}
\usepackage{url}            
\usepackage{booktabs}       
\usepackage{amsfonts}       
\usepackage{nicefrac}       
\usepackage{microtype}      
\usepackage{enumitem}
\usepackage{bbm}
\usepackage{appendix}
\usepackage{makecell}
\usepackage{soul}

\usepackage{amsthm}
\newtheorem{theorem}{Theorem}[section]
\newtheorem{definition}[theorem]{Definition}
\newtheorem{lemma}[theorem]{Lemma}

\newtheorem{proposition}[theorem]{Proposition}
\newtheorem{assump}{Assumption}

\newtheorem{claim}{Claim}[section]


\usepackage{amsmath,amsfonts,bm}


















\def\1{\bm{1}}





\def\rvtheta{{\boldsymbol{\theta}}}
\def\rva{{\mathbf{a}}}
\def\rvb{{\mathbf{b}}}

\def\rvf{{\mathbf{f}}}

\def\rvh{{\mathbf{h}}}

\def\rvx{{\mathbf{x}}}
\def\rvy{{\mathbf{y}}}



\def\rmB{{\mathbf{B}}}

\def\rmD{{\mathbf{D}}}

\def\rmH{{\mathbf{H}}}
\def\rmI{{\mathbf{I}}}

\def\rmK{{\mathbf{K}}}

\def\rmM{{\mathbf{M}}}

\def\rmP{{\mathbf{P}}}
\def\rmQ{{\mathbf{Q}}}

\def\rmW{{\mathbf{W}}}
\def\rmX{{\mathbf{X}}}





\DeclareMathAlphabet{\mathsfit}{\encodingdefault}{\sfdefault}{m}{sl}
\SetMathAlphabet{\mathsfit}{bold}{\encodingdefault}{\sfdefault}{bx}{n}


\def\gE{{\mathcal{E}}}

\def\gG{{\mathcal{G}}}

\def\gK{{\mathcal{K}}}

\def\gZ{{\mathcal{Z}}}










\newcommand{\E}{\mathbb{E}}





\newcommand{\bbE}{{\mathbb{E}}}

\newcommand{\bbN}{{\mathbb{N}}}

\newcommand{\bbP}{{\mathbb{P}}}

\newcommand{\bbR}{{\mathbb{R}}}
\newcommand{\bbS}{{\mathbb{S}}}




\newcommand{\calH}{{\mathcal{H}}}

\newcommand{\calN}{{\mathcal{N}}}

\newcommand{\calZ}{{\mathcal{Z}}}







\newcommand{\diag}{\operatorname{diag}}

\newcommand{\la}{\langle}
\newcommand{\ra}{\rangle}

\newcommand{\bitem}{\begin{itemize}}
\newcommand{\eitem}{\end{itemize}}
\newcommand{\benum}{\begin{enumerate}}
\newcommand{\eenum}{\end{enumerate}}
\newcommand{\beq}{\begin{equation}}
\newcommand{\eeq}{\end{equation}}
\newcommand{\beqs}{\begin{equation*}}
\newcommand{\eeqs}{\end{equation*}}

\newcommand{\sgn}{\textnormal{sign}}

\usepackage{color}
\usepackage[linesnumbered,ruled,vlined]{algorithm2e}
\usepackage{algpseudocode}
\usepackage{amsmath,amssymb}
\usepackage{subfigure}
\usepackage{graphicx}





\renewcommand{\rvx}{\bm{x}}
\renewcommand{\rvy}{y}
\renewcommand{\theta}{w}

\newcommand{\proj}{\textnormal{proj}}
\newcommand{\KK}{{K}^{(2)}} 
\newcommand{\hKK}{{\hat{K}}^{(2)}}

\newcommand{\KKK}{{K}^{(3)}}

\newcommand{\KKKK}{{K}^{(4)}}

\newcommand{\rmKK}{{\rmK}^{(2)}} 
\newcommand{\init}{\textnormal{init}} 
\newcommand{\RN}[1]{%
  \textup{\uppercase\expandafter{\romannumeral#1}}%
}

\newcommand{\WWW}[2][]{\rmW^{{#1}(#2)}}
\newcommand{\bbb}[2][]{\rvb^{{#1}(#2)}}
\newcommand{\xxx}[2][]{\rvx^{{#1}(#2)}}

\newcommand{\agnostic}{label-agnostic}

\newcommand{\lantk}{\textsc{LANTK}}

\newcommand{\lantkKR}{\textsc{LANTK-KR}}
\newcommand{\lantkFJLT}{\textsc{LANTK-FJLT}}

\newcommand{\KHR}{K^{(\textnormal{HR})}}
\newcommand{\KNTH}{K^{(\textnormal{NTH})}}

\title{Label-Aware Neural Tangent Kernel: \\ Toward Better Generalization and Local Elasticity}

\author{Shuxiao Chen \quad~~~ Hangfeng He \quad~~~ Weijie J.~Su\\[0.5em]
University of Pennsylvania\\[0.3em]
\texttt{\{shuxiaoc@wharton, hangfeng@seas, suw@wharton\}.upenn.edu}\\
}

%

\begin{document}

\maketitle


\begin{abstract}
As a popular approach to modeling the dynamics of training overparametrized neural networks (NNs), the neural tangent kernels (NTK) are known to fall behind real-world NNs in generalization ability. This performance gap is in part due to the \textit{label agnostic} nature of the NTK, which renders the resulting kernel not as \textit{locally elastic} as NNs~\citep{he2019local}. In this paper, we introduce a novel approach from the perspective of \emph{label-awareness} to reduce this gap for the NTK. Specifically, we propose two label-aware kernels that are each a superimposition of a label-agnostic part and a hierarchy of label-aware parts with increasing complexity of label dependence, using the Hoeffding decomposition. Through both theoretical and empirical evidence, we show that the models trained with the proposed kernels better simulate NNs in terms of generalization ability and local elasticity.\footnote{Our code is publicly available at \url{https://github.com/HornHehhf/LANTK}.}

\end{abstract}

\section{Introduction}\label{sec:intro}
The last decade has witnessed the huge success of deep neural networks (NNs) in various machine learning tasks \citep{lecun2015deep}. Contrary to its empirical success, however, the theoretical understanding of real-world NNs is still far from complete, hindering its applicability to many domains where interpretability is of great importance, such as autonomous driving and biological research \citep{doshi2017towards}. 


More recently, a venerable line of work relates overparametrized NNs to kernel regression from the perspective of their training dynamics, providing positive evidence towards understanding the optimization and generalization of NNs \citep{jacot2018neural,chizat2018note,cao2019generalization,lee2019wide,arora2019exact,chizat2019lazy,du2019graph,li2019enhanced,zou2020gradient}. To briefly introduce this approach, let $\rvx_i, y_i$ be the feature and label, respectively, of the $i$th data point, and consider the problem of minimizing the squared loss $\frac1n\sum_{i=1}^n (y_i - f(\rvx_i, \rvtheta))^2$ using gradient descent, where $f(\rvx, \rvtheta)$ denotes the prediction of NNs and $\bm w$ are the weights. Starting from an random initialization, researchers demonstrate that the evolution of NNs in terms of predictions  can be well captured by the following kernel gradient descent
\begin{equation}
	\label{eq: grad_flow_f}
	\frac{d}{dt}f(\rvx, \rvtheta_t) = - \frac{1}{n} \sum_{i = 1}^n {K (\rvx, \rvx_i)} \bigl(f(\rvx_i, \rvtheta_t) - \rvy_i\bigr)
\end{equation}
in the infinite width limit, where $\rvtheta_t$ is the weights at time $t$. Above, $K(\cdot, \cdot)$, referred to as \emph{neural tangent kernel} (NTK), is time-independent and associated with the architecture of the NNs. As a profound implication, an infinitely wide NNs is ``equivalent'' to kernel regression with a {deterministic kernel} in the training process.

Despite its immense popularity, many questions still remain unanswered concerning the NTK approach, with the most crucial one, perhaps, being the non-negligible performance gap between a kernel regression using the NTK and a real-world NNs. Indeed, according to an experiment done by \cite{arora2019exact}, on the CIFAR-10 dataset \citep{krizhevsky2009learning}, a vanilla 21-layer CNN can achieve $75.57\%$ accuracy, whereas the corresponding convolutional NTK (CNTK) can only attain $64.09\%$ accuracy. This significant performance gap is widely recognized to be attributed to the \textit{finiteness} of widths in real-world NNs. This perspective has been investigated in many papers \citep{hanin2019finite,huang2019dynamics,bai2019beyond,bai2020taylorized}, where various forms of ``finite-width corrections'' have been proposed. However, the computation of these corrections all involve incremental training. This fact is in sharp contrast to kernel methods, whose computation is ``one-shot'' via solving a simple linear system.


In this paper, we develop a new approach toward closing the gap by recognizing a recently introduced phenomenon termed \textit{local elasticity} in training neural networks~\citep{he2019local}. Roughly speaking, neural networks are observed to be locally elastic in the sense that if the prediction at a feature vector $\bm x$ is not significantly perturbed, after the classifier is updated via stochastic gradient descent at a (labeled) feature vector $\bm x$ that is dissimilar to $\bm x'$ in a certain sense. This phenomenon implies that the interaction between two examples is heavily contingent upon whether their labels are the same or not. Unfortunately, the NTK construction is clearly independent of the labels, thereby being label-agnostic, meaning that it is independent of the labels $\rvy$ in the training data. This ignorance of the label information can cause huge problems in practice, especially when the semantic meanings of the features crucially depends on the \emph{label system}.




\begin{figure}[t]
		\centering
		\scalebox{1.0}{
		\subfigure[NTK: init vs trained]{
			\centering
			\includegraphics[scale=0.22]{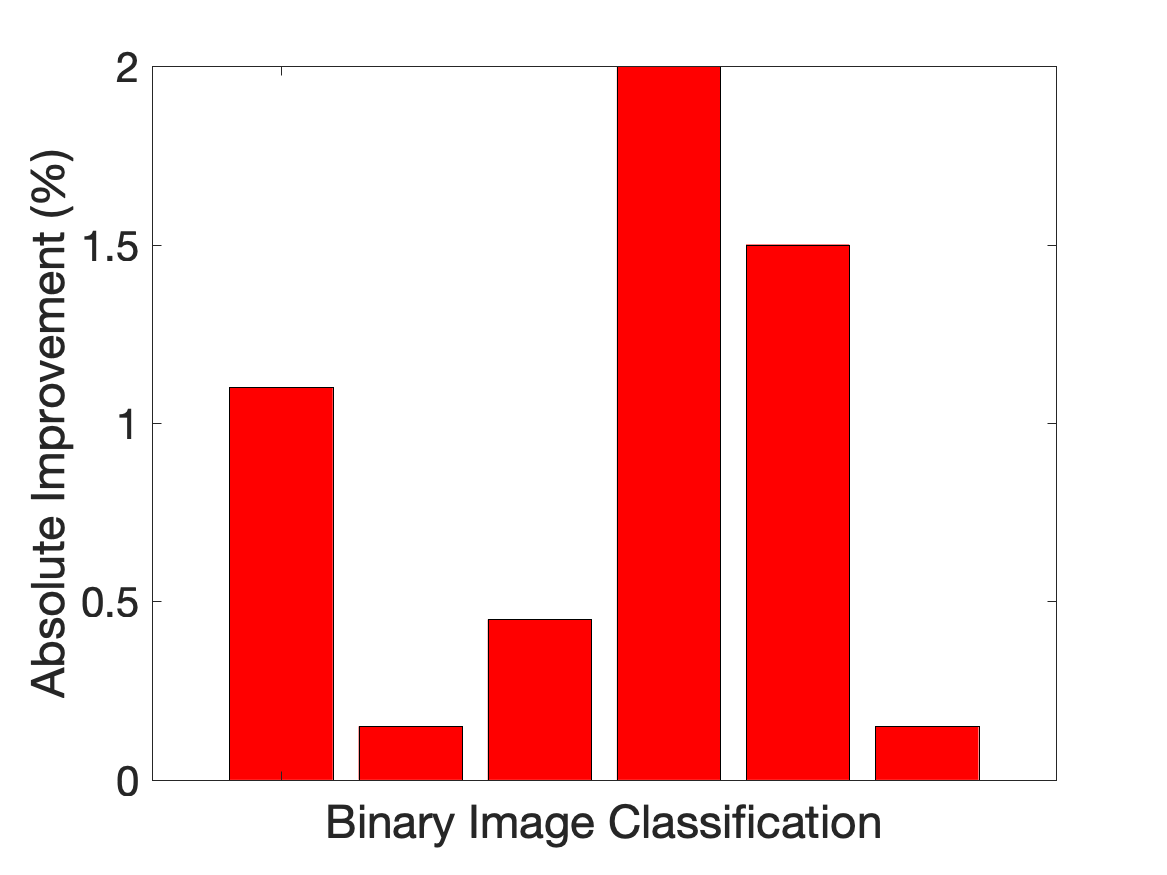}
			\label{fig:ntk-init-trained}}
		\subfigure[Local elasticity: cat vs dog]{
			\centering
			\includegraphics[scale=0.22]{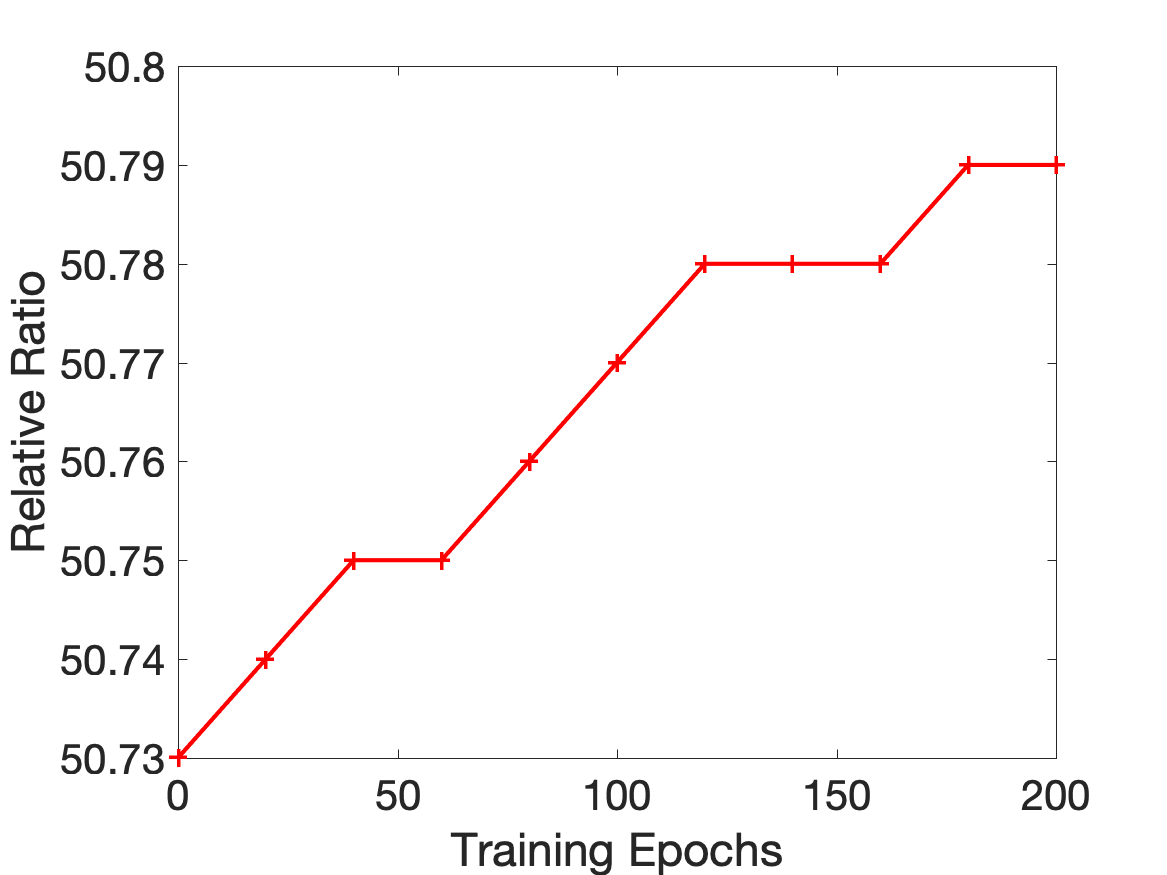}
			\label{fig:le-cat-dog}}
        \subfigure[Local elasticity: chair vs bench]{
			\centering
			\includegraphics[scale=0.22]{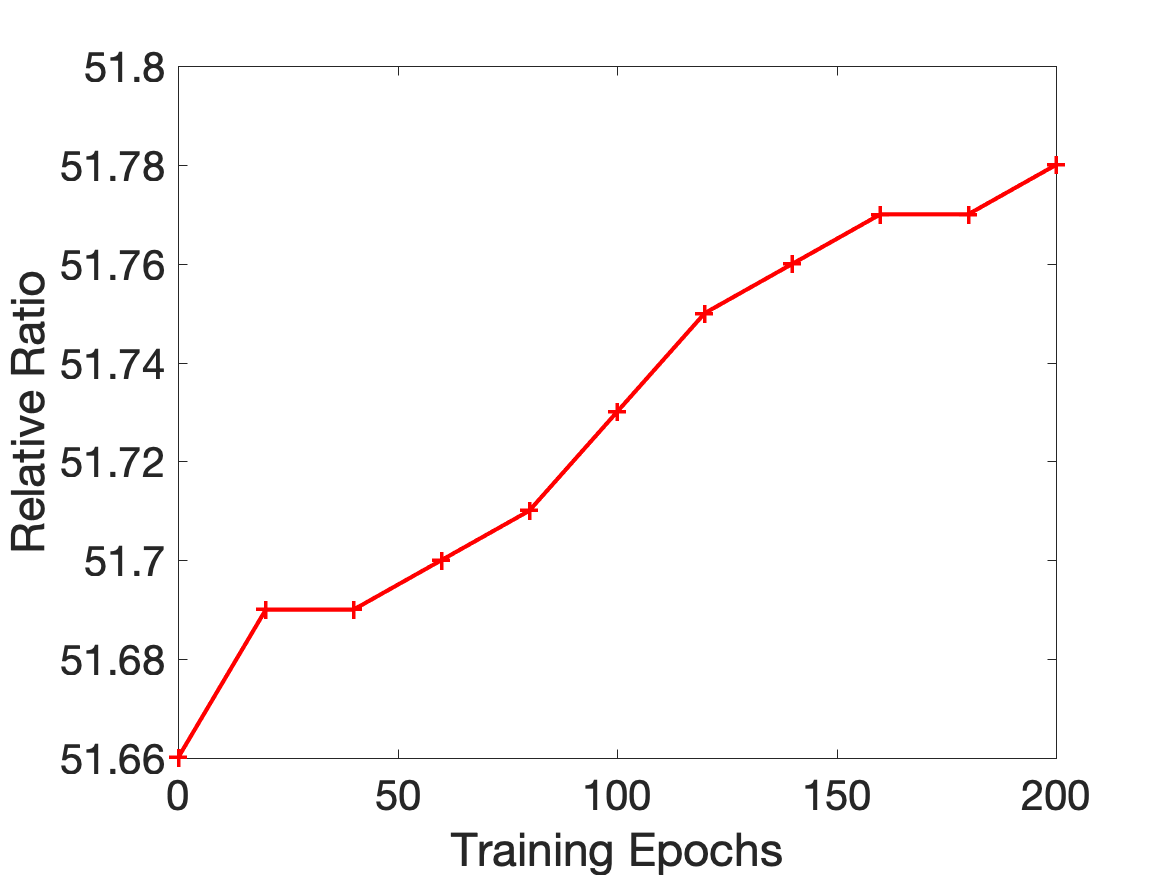}
			\label{fig:le-chair-bench}}
		}
        \caption{The impact of labels on a 2-layer NN in the aspect of generalization ability and local elasticity. Fig. (a) displays the absolute improvement in six binary classification tasks by using $\KK_t$ instead of $\KK_0$, indicating that NNs learn better feature representations with the help of labels along the training process (see Sec.~\ref{subsec:local-elasticity} for more details). Fig. (b) and (c) show that in two label systems (cat v.s. dog and bench v.s. chair), the strength of local elasticity (measured by the relative ratio
        defined in Eq. \eqref{eq: relative_ratio}) both increases with training, but in different patterns, indicating that the labels play an important role in the local elasticity of NNs (See Appx.~\ref{subsec:different-labeling-systems} for more details).}
		\label{fig:labels-impact}
		\vspace{-17pt}
\end{figure}

To shed light on the vital importance of the label information, consider a collection of natural images, where in each image, there are two objects: one object is either a cat or dog, and another is either a bench or chair. 
Take an image $\rvx$ that contains dog+chair, and another $\rvx'$ that contains dog+bench. Then for NTK to work well on both label systems, we would need $\bbE_\init K(\rvx, \rvx')  \approx 1$ if the task is dog v.s. cat, and $\bbE_\init K(\rvx, \rvx')  \approx -1$ if the task is bench v.s. chair,
a fundamental contradiction that cannot be resolved by the label-agnostic NTK (see also Claim \ref{claim: limitations_of_label_agnostic_kernels}). In contrast, NNs can do equally well in both label systems, a favorable property which can be termed adaptivity to label systems. To understand what is responsible for this desired adaptivity, note that \eqref{eq: grad_flow_f} suggests that NNs can be thought of as a time-varying kernel $\KK_t$. This ``dynamic'' kernel differs from the NTK in that it is \emph{label-aware}, because it depends on the trained parameter $\rvtheta_t$ which further depends on $\rvy$. As shown in Fig. \ref{fig:labels-impact}, such an awareness in labels play an import role in the generalization ability and local elasticity of NNs.


Thus, in the search of ``NN-simulating'' kernels, it suffices to limit our focus on the class of label-aware kernels. 
\textcolor{black}{In this paper,} we propose two label-aware NTKs (\lantk s). 
The first one is based on a notion of \emph{higher-order regression}, which extracts higher-order information from the labels by estimating whether two examples are from the same class or not. The second one is based on approximately solving neural tangent hierarchy (NTH), an infinite hierarchy of ordinary different equations that give a precise description of the training dynamics \eqref{eq: grad_flow_f} \citep{huang2019dynamics}, and we show this kernel approximates $\KK_t$ strictly better than $\bbE_\init \KK_0$ does (Theorem \ref{thm: approx_nth_error}). 

\textcolor{black}{Although the two \lantk s stems from very different intuitions, their analytical formulas are perhaps surprisingly similar: they are both quadratic functions of the label vector $y$. This brings a natural question: \emph{is there any intrinsic relationship between the two?} And more generally, \emph{what does a generic label-aware kernel, $K(\rvx, \rvx', S)$, which may have arbitrary dependence structure on the training data $S = \{(\rvx_i, \rvy_i): i\in[n]\}$, look like?}
Using \emph{Hoeffding Decomposition} \citep{hoeffding1948class}, we are able to obtain a structural lemma (Lemma \ref{lemma: hoeffding_decomp}), which asserts that any label-aware kernel can be decomposed into the superimposition of a label-agnostic part and a \emph{hierarchy} of label-aware parts with increasing complexity of label dependence. And the two \lantk s we developed can, in a certain sense, be regarded as the \emph{truncated versions} of this hierarchy. 
} 

We conduct comprehensive experiments to confirm that our proposed \lantk s can indeed better simulate the quantitative and qualitative behaviors of NNs compared to their label-agnostic counterpart. On the one hand, they generalize better: the two \lantk s achieve a $1.04\%$ and $1.94\%$ absolute improvement in test accuracy ($7.4\%$ and $4.1\%$ relative error reductions) in binary and multi-class classification tasks on CIFAR-10, respectively. On the other hand, they are more ``locally elastic'': the relative ratio of the kernelized similarity between intra-class examples and inter-class examples increases, a typical trend observed along the training trajectory of NNs.

\subsection{Related Work}
{\bf Kernels and NNs.} Starting from \citet{neal1996priors}, a line of work considers infinitely wide NNs whose parameters are chosen randomly and only the last layer is optimized \citep{williams1997computing,le2007continuous,hazan2015steps,lee2017deep,matthews2018gaussian,novak2018bayesian,garriga2018deep,yang2019scaling}. 
When the loss is the least squares loss, this gives rise to a class of interesting kernels different from the NTK. 
On the other hand, if all layers are trained by gradient descent, infinitely wide NNs give rise to the NTK
\citep{jacot2018neural,chizat2018note,lee2019wide,arora2019exact,chizat2019lazy,du2019graph,li2019enhanced}, and the NTK also appears implicitly in many works when studying the optimization trajectories of NN training \citep{li2018learning,allen2018convergence,du2018gradient,ji2019polylogarithmic,chen2019much}.

{\bf Limitations of the NTK and corrections.} \cite{arora2019harnessing} demonstrates that in many small datasets, models trained by the NTK can outperform its corresponding NN. But for moderately large scale tasks and for practical architectures, the performance gap between the two is empirically observed in many places and further confirmed by a series of theoretical works \citep{chizat2019lazy,ghorbani2019linearized,yehudai2019power,bietti2019inductive,ghorbani2019limitations,wei2019regularization,allen2019can}. This observation motivates various attempts to mitigate the gap, such as incorporating pooling layers and data augmentation into the NTK \citep{li2019enhanced}, deriving higher-order expansions around the initialization \citep{bai2019beyond,bai2020taylorized}, doing finite-width correction \citep{hanin2019finite}, injecting noise to gradient descent \citep{chen2020generalized}, and most related to our work, the NTH~\citep{huang2019dynamics}. 

{\bf Label-aware kernels.} The idea of incorporating label information into the kernel is not entirely new. For example, \cite{cristianini2002kernel} proposes an alignment measure between two kernels, and argues that aligning a label-agnostic kernel to the ``optimal kernel'' $\rvy\rvy^\top$ can lead to favorable generalization guarantees. This idea is extensively exploited in literature on kernel learning (see, e.g., \citealt{lanckriet2004learning,cortes2012l2,gonen2011multiple}). 
In another related direction, \cite{tishby2000information} proposes the information bottleneck principle, which roughly says that an optimal feature map should simultaneously minimize its mutual information (MI) with the feature distribution and maximize its MI with the label distribution, thus incorporating the label information (see also \citealt{tishby2015deep} in the context of deep learning). 
We refer the readers to Appx. \ref{append:connections} for a detailed discussion of the connections and differences of our proposals to these two lines of research.


\subsection{Preliminaries}

We focus on binary classification tasks for ease of exposition, but our results can be easily extend to multi-class classification tasks. 
Suppose we have i.i.d. data $S = \{(\rvx_i, \rvy_i): i\in [n]\} \subseteq \mathbb{R}^d \times \{\pm 1\}$. 
Let $\rmX \in \mathbb{R}^{n\times d}$ be the feature matrix and let $\rvy \in \{\pm 1\}^n$ be the vector of labels.
Let $\rvtheta = \{(\rmW ^{(\ell)}, \rvb^{(\ell)})\in \bbR^{p_\ell \times q_\ell}\times\bbR^{q_\ell}: \ell \in [L]\}$ be the collection of weights and biases at each layer. A neural network function is recursively defined as $\xxx{0} = \rvx, \xxx{\ell}  = \frac{1}{\sqrt{q_\ell}} \sigma(\WWW{\ell} \xxx{\ell-1} + \bbb{\ell})$ and $f(\rvx, \rvtheta)	 = \xxx{L}$, 
where $\sigma$ is the activation function which applies element-wise to matrices. Note that $q_1 = d$ and $p_L = 1$. 


Consider training NNs with least squares loss: $L(\rvtheta) = \frac{1}{2n} \sum_{i=1}^n \bigl(f(\rvx_i, \rvtheta)- \rvy_i\bigr)^2$.
The gradient flow dynamics (i.e., gradient descent with infinitesimal step sizes) is given by $\dot{\rvtheta_t} = - \frac{\partial}{\partial \rvtheta} L(\rvtheta_t),$
where we use the dot notation to denote the derivative w.r.t. time, and $\rvtheta_t$ is the parameter value at time $t$. The dynamics w.r.t. $f$ is given by a kernel gradient descent \eqref{eq: grad_flow_f}, 
where the corresponding \emph{second-order} kernel is given by $\KK_t(\rvx, \rvx') =  \la \frac{\partial}{\partial \rvtheta} f(\rvx, \rvtheta_t), \frac{\partial}{\partial \rvtheta} f(\rvx', \rvtheta_t)  \ra$.
With sufficient over-parameterization, one can show that $\KK_t \approx \KK_0$, and with Gaussian initialization, one can show that $\KK_0$ concentrates around $\bbE_\init \KK_0$, {where $\bbE_{\init}$ is the expectation operator w.r.t. the random initialization}. Hence, $\KK_t$ can be approximated by the \emph{deterministic kernel} $\bbE_\init \KK_0$, the NTK.

We have heuristically argued in Section \ref{sec:intro} that the label-agnostic property of NTK can cause problems when the semantic meaning of the features is highly dependent on the label system. To further illustrate this point, in Appx.~\ref{subappend:example_limit_of_label_agnostic_kernels}, we construct a simple example that validates the following claim: 
\begin{claim}[Curse of Label-Agnosticism]
\label{claim: limitations_of_label_agnostic_kernels}
If a kernel $K$ is label-agnostic and it works well on one label system, then there exists a natural relabeling, which gives another label system on which $K$ performs arbitrarily bad. In other words, $K$ is not adaptive to different label systems.
\end{claim}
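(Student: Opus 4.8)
The plan is to exhibit a data distribution carrying two genuinely present but mutually uncorrelated binary attributes, and to exploit that kernel ridge regression is a \emph{linear-in-labels} smoother whose weights are fixed once the training features are fixed. Draw latent bits $a,b$ i.i.d.\ uniform on $\{\pm1\}$ and set $\rvx = a\,\vu + b\,\vv + \xi$, where $\vu,\vv$ are unit vectors spanning orthogonal coordinates of $\R^d$ and $\xi$ is a small independent continuous perturbation; then test points are almost surely fresh (so generalization is nontrivial) while $a,b$ are each recoverable from $\rvx$ up to negligible error. Label system $\calY_1$ assigns $y_i = a_i$ (``dog vs.\ cat''); the \emph{natural relabeling} $\calY_2$ assigns $y_i = b_i$ (``bench vs.\ chair''), natural because $b$ is a real semantic attribute of every image. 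The key observation is that the two induced training-label vectors $\rvy,\rvy'$ satisfy $\langle\rvy,\rvy'\rangle = \sum_i a_i b_i = O(\sqrt n) = o(n)$ by Hoeffding's inequality, i.e.\ they are asymptotically orthogonal.

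Since $K$ is label-agnostic, the Gram matrix $\rmG = \rmK_{\rmX\rmX}$, and hence the ridge-regression smoother $\rvx \mapsto \vk(\rvx)^\top(\rmG+\lambda\rmI)^{-1}$, are identical for $\calY_1$ and $\calY_2$. I would use the leave-one-out residual vector $\rmM\rvy$, with $\rmM = \bigl(\diag((\rmG+\lambda\rmI)^{-1})\bigr)^{-1}(\rmG+\lambda\rmI)^{-1}$ a matrix depending only on $\rmX$ and $\lambda$ and satisfying $\tr(\rmM)=n$, as the performance proxy. Then ``$K$ works well on $\calY_1$'' reads $\tfrac1n\|\rmM\rvy\|^2 = o(1)$, while ``$K$ performs arbitrarily badly on $\calY_2$'' reads $\tfrac1n\|\rmM\rvy'\|^2 \gtrsim 1$ (the error of the trivial constant predictor), so the claim reduces to: a fixed matrix $\rmM$ with $\tr(\rmM)=n$ cannot keep $\|\rmM\rvy\|^2$ small on a whole rich family of nearly-orthogonal $\{\pm1\}$ label vectors.

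To make the existential conclusion quantitative I would enlarge the two attributes to a family of $m$ natural relabelings $\rvy^{(1)},\dots,\rvy^{(m)}$ that are pairwise nearly orthogonal --- the XOR combinations $\prod_{j\in S}a_j$ of $k$ independent balanced attributes ($m = 2^k-1$), or, in cleanest form, a Hadamard basis of $\{\pm1\}^n$. Then $\sum_k \rvy^{(k)}\rvy^{(k)\top} \approx n\rmI$, so $\sum_k \|\rmM\rvy^{(k)}\|^2 = \tr\bigl(\rmM^\top\rmM\sum_k \rvy^{(k)}\rvy^{(k)\top}\bigr) \approx n\|\rmM\|_F^2 \ge n\,(\tr\rmM)^2/n = n^2$, whence the average over $k$ of $\tfrac1n\|\rmM\rvy^{(k)}\|^2$ is at least $\approx 1$. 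Hence some relabeling does no better than the trivial predictor; and since $\calY_1$ is only one vector of the family, that bad relabeling is genuinely different from it, which establishes the claim (with the degree of degradation tunable through the distribution). An equivalent framing, connecting to the kernel-alignment literature of \citet{cristianini2002kernel}, is that a single bounded-trace PSD Gram matrix cannot be well aligned with many mutually orthogonal rank-one targets $\rvy^{(k)}\rvy^{(k)\top}$ at once.

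I expect the main obstacle to be making the step ``good on $\calY_1$ $\Rightarrow$ the smoother $\rmM$ has spent essentially all its capacity on $\rvy^{(1)}$, so it must fail on some orthogonal relabeling'' hold for an \emph{arbitrary} label-agnostic kernel rather than an idealized rank-one one: a maximally expressive kernel (e.g.\ the delta kernel) interpolates everything and has small leave-one-out error for every label vector. The argument therefore has to use quantitatively that the NTK --- like any kernel used in practice --- has controlled capacity: bounded trace or effective rank of its integral operator, or bounded $\sup_\rvx K(\rvx,\rvx)$ together with a sample size that does not grow with the number of competing relabelings. Identifying the right ``bounded-capacity'' hypothesis and verifying it for the concrete NTK of the example is where the real work lies; the orthogonality computation itself is routine.
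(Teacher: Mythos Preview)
Your approach is genuinely different from the paper's and considerably more elaborate. The paper gives a two-line geometric construction directly in the kernel's feature space: since $K$ works on $\eta_1$, there is a separating direction $\rvtheta_1$ in $\phi$-space; pick any $\rvtheta_2 \perp \rvtheta_1$ there and relabel by $\eta_2(\rvx) = \mathbbm{1}\{\langle\rvtheta_1,\phi(\rvx)\rangle\cdot\langle\rvtheta_2,\phi(\rvx)\rangle < 0\}$. This is an XOR of two half-spaces in $\phi$-space, hence linearly non-separable there, so $K$ fails on $\eta_2$. No family of labelings, no trace inequality, no capacity hypothesis --- just the observation that a linear classifier cannot learn XOR. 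The trade-off is that the paper's bad relabeling is tailored to $K$ (it is defined through $\phi$), whereas your latent-attribute relabelings are kernel-independent and hence ``natural'' in a stronger sense; but your route then needs $\Theta(n)$ nearly-orthogonal label vectors so that $\sum_k \rvy^{(k)}\rvy^{(k)\top}\approx n\rmI$, and reconciling ``natural attribute of the data'' with ``Hadamard-spanning'' (e.g.\ via $\log_2 n$ independent bits and all their XOR products, with the sampled attribute vectors distinct) is a step you gloss over.

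Your stated worry about bounded capacity is misplaced. The bound $\|\rmM\|_F^2 \ge (\tr\rmM)^2/n = n$ holds for \emph{every} $\rmM$ with unit diagonal, so the Hadamard-averaged LOO MSE is always at least $1$, regardless of how expressive $K$ is. Your own delta-kernel ``counterexample'' has $\rmM=\rmI$ and LOO MSE exactly $1$ on every labeling --- a held-out point has zero kernel similarity to all training points and receives prediction zero --- so it does not contradict the trace argument at all; you have conflated interpolation (zero training error) with small leave-one-out error. The genuine technical gap in your plan is the spanning issue above, not capacity.
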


We make two remarks. The above claim applies to any label-agnostic kernels, and the NTK is only a specific example. 
Moreover, the above claim only applies to the generalization error. A label-agnostic kernel can always achieve zero training error, as long as the corresponding kernel matrix is invertible. 

\section{Construction of Label-Aware NTKs}\label{sec:label_aware_kernels}


We now propose two ways to incorporate label awareness into NTKs. In Section~\ref{sec:conn-with-hoeffd}, we give a unified interpretation of the two approaches using the Hoeffding decomposition.

\subsection{Label-Aware NTK via Higher-Order Regression}\label{subsec:higher-order-regression}
We shall all agree that a ``good'' feature map should map the feature vector $\rvx$ to $\phi(\rvx )$ s.t. a simple linear fit in the $\phi(\cdot)$-space can give satisfactory performance. 
In this sense, the ``optimal'' feature map is obviously its corresponding label: $\phi(\rvx ) := \textnormal{label of } \rvx$. Hence, for $\alpha, \beta\in[n]$, the corresponding ``optimal'' kernel is given by $K^{\star}(\rvx_\alpha, \rvx_\beta) = \rvy_\alpha \rvy_\beta.$
This motivates us to consider interpolating the NTK with the optimal kernel:
$
\bbE_\init K^{(2)}_0(\rvx_\alpha, \rvx_\beta) + \lambda K^{\star}(\rvx_\alpha, \rvx_\beta) = \bbE_\init K^{(2)}_0(\rvx_\alpha, \rvx_\beta)+ \lambda \rvy_\alpha \rvy_\beta, 
$
where $\lambda \geq 0$ controls the strength of the label information. 
However, this interpolated kernel cannot be calculated on the test data. A natural solution is to estimate $\rvy_\alpha \rvy_\beta$. Thus, we propose to use the following kernel, which we term as \lantk-HR:
\begin{equation}
	\label{eq: 2nd_order_reg_kernel}
	K^{(\textnormal{HR})}(\rvx, \rvx') :  = \bbE_\init K^{(2)}_0(\rvx, \rvx') + \lambda \gZ(\rvx, \rvx', S), 
\end{equation}
where $\gZ(\rvx, \rvx', S)$ is an estimator of $(\textnormal{label of }\rvx) \times (\textnormal{label of }\rvx')$ --- a quantity indicating whether both features belong to the same class or not --- using the dataset $S$.

A natural way to obtain $\gZ$ is to form a new \emph{pairwise dataset} $\bigl\{\bigl( (\rvx_i, \rvx_j), \rvy_i\rvy_j\bigr): i, j\in[n]\bigr\}$, where we think of $\rvy_i\rvy_j$ as the label of the augmented feature vector $(\rvx_i, \rvx_j)$. If we use a linear regression model, then we would have $\gZ(\rvx, \rvx', S) = \rvy^\top \rmM(\rvx, \rvx', \rmX) \rvy$ for some matrix $\rmM\in \mathbb{R}^{n\times n}$ depending on the two feature vectors $\rvx, \rvx'$ as well as the feature matrix $\rmX$, hence the name ``higher-order regression''. 

What requirement we do need to impose on the estimator $\gZ$? Heuristically, the inclusion of $\gZ$ is only helpful when additional information in the training data is ``extracted'' apart from the one extracted by the label-agnostic part. We refer the readers to Sec.~\ref{subsec:kernel-regression} and Appx.~\ref{subsec:details-of-Z} for more details on the choice of $\gZ$.


\subsection{Label-Aware NTK via Approximately Solving NTH}
As we have discussed in Section \ref{sec:intro}, different from the label-agnostic NTK $\bbE_\init \KK_0$, the time-dependent kernel $\KK_t$ is indeed label-aware. This suggests that in real-world neural network training, $\KK_t$ must drift away from $\KK_0$ by a non-negligible amount. Indeed, \cite{huang2019dynamics} prove that the second order kernel $\KK_t$ evolves according to the following infinite \emph{hierarchy} of ordinary differential equations (that is, the NTH): 
\begin{align}
	\label{eq: nth}
	\dot{K}^{(r)}_t(\rvx_{\alpha_1}, \cdots, \rvx_{\alpha_r}) & = -\frac{1}{n} \sum_{i\in[n]} K^{(r+1)}_t(\rvx_{\alpha_1}, \cdots, \rvx_{\alpha_r}, \rvx_i)(f_t(\rvx_i) - \rvy_i), \ \ \ r\geq 2,
\end{align}
which, along with \eqref{eq: grad_flow_f}, gives a \emph{precise description} of the training dynamics of NNs.
Here, $K_t^{(r)}$ is an \emph{$r$-th order kernel} which takes $r$ feature vectors as the input. 



Our second construction of \lantk~is to obtain a finer approximation of $\KK_t$. Let $\rvf_t$ be the vector of $f_t(\rvx_i)$'s. Using \eqref{eq: nth}, we can re-write 
$\KK_t(\rvx, \rvx')$ as $\KK_0(\rvx, \rvx') -\frac{1}{n} \int_{0}^t \langle \KKK_0(\rvx, \rvx', \cdot), \rvf_u-\rvy \rangle du + \frac{1}{n^2} \int_0^t \int_0^u (\rvf_u - \rvy)^\top \KKKK_v(\rvx, \rvx', \cdot, \cdot) (\rvf_v - \rvy) dvdu,$
where $\KKK_t(\rvx,\rvx', \cdot)$ is an $n$-dimensional vector whose $i$-th coordinate is $\KKK_t(\rvx,\rvx', \rvx_i)$, and $\KKKK_t(\rvx,\rvx', \cdot,\cdot)$ is an $n\times n$ matrix, whose $(i, j)$-th entry is $\KKKK_t(\rvx,\rvx', \rvx_i, \rvx_j)$.
Now, let $\rmKK_0 \in \mathbb{R}^{n\times n}$ be the kernel matrix corresponding to $\KK_0$ computed on the training data. The kernel gradient descent using $\rmKK_0$ is characterized by $\dot{\rvh}_t = -\frac{1}{n} \rmKK_0(\rvh_t - \rvy)$ with the initial value condition $\rvh_0 = \rvf_0$,
whose solution is given by $\rvh_t = (\rmI_n - e^{-t \rmKK_0/n}) \rvy + \rvh_0.$
For a wide enough network, we expect $\rvh_t \approx \rvf_t$, at least when $t$ is not too large. On the other hand, it has been shown in \cite{huang2019dynamics} that $\KKKK_t$ varies at a rate of $\tilde{O}(1/m^2)$, where $\tilde O$ hides the poly-log factors and $m$ is the (minimum) width of the network. Hence, it is reasonable to approximate $\KKKK_t$ by $\KKKK_0$. This motivates us to write
\begin{align}
    \label{eq: approx_nth_full}
	\KK_t(\rvx, \rvx') & = \hat{K}^{(2)}_t(\rvx, \rvx') + \gE 
\end{align}
where $\gE$ is a small error term and $\hat{K}^{(2)}_t(\rvx, \rvx') = \KK_0(\rvx, \rvx') -\frac{1}{n} \int_{0}^t \langle \KKK_0(\rvx, \rvx', \cdot), \rvh_u-\rvy \rangle du  + \frac{1}{n^2} \int_0^t \int_0^u (\rvh_u - \rvy)^\top \KKKK_0(\rvx, \rvx', \cdot, \cdot) (\rvh_v - \rvy) dvdu$.



In Appx.~\ref{subappend:proof_approx_nth_zero_init}, we show that $\hKK_t$ can be evaluated analytically, and the analytical expression is of the form $\KK_0 + \rva^\top \rvy + \rvy^\top \rmB \rvy$ for some vector $\rva$ and matrix $\rmB$ (see Proposition \ref{prop: approx_nth_zero_init} for the exact formula). 
Moreover, the approximation \eqref{eq: approx_nth_full} can be justified formally under mild regularity conditions: 
\begin{theorem}[Validity of $\hKK_t$]
	\label{thm: approx_nth_error}
	Consider a neural network with $\rvb^{(\ell)} = 0 \ \forall \ell$ and $p_\ell = q_\ell = m$ except for $q_1 = d, p_L = 1$. We train this net using gradient flow with the least squares loss and Gaussian initialization. Under Assumption \ref{assump: justify_approx_error} in Appendix \ref{subappend:proof_approx_nth_error}, with probability at least $1-e^{-Cm}$ w.r.t. the initialization, the error of the approximation \eqref{eq: approx_nth_full} at time $t$ satisfies 
	\[
		|\gE| \lesssim \frac{(\log m)^c}{m^2} \times\bigg[ t^2(t^2\lor 1) \bigg(1+\frac{t^2}{m}\bigg)\bigg(t \land \frac{n}{\lambda}\bigg) + t^3(1\lor t) \bigg] \textnormal{ provided }  t \lesssim \bigg({\frac{\sqrt{\lambda m/n}}{(\log m)^{c'}}} \land \frac{m^{1/3}}{(\log m)^{c''}}\bigg),
	\]
	where $C, c, c', c''$ are some absolute constants and $\lambda$ is a quantity defined in Asumption \ref{assump: justify_approx_error}. On the other hand, on the same high probability event, the three terms in $\hat{K}^{(2)}_t$ are at most $\tilde{O}(1), \tilde{O}(t/m)$ and $\tilde{O}(t^2/m),$
	respectively.
\end{theorem}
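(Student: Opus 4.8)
I would begin from the exact identity $\gE=\KK_t(\rvx,\rvx')-\hKK_t(\rvx,\rvx')$, obtained by subtracting the two closed-form expressions for $\KK_t$ and $\hKK_t$. The $\KK_0(\rvx,\rvx')$ terms cancel, and $\gE$ splits into a first-order mismatch $-\tfrac1n\int_0^t\langle\KKK_0(\rvx,\rvx',\cdot),\,\rvf_u-\rvh_u\rangle\,du$ (from replacing $\rvf_u$ by $\rvh_u$) plus a double integral of $(\rvf_u-\rvy)^\top\KKKK_v(\rvx,\rvx',\cdot,\cdot)(\rvf_v-\rvy)-(\rvh_u-\rvy)^\top\KKKK_0(\rvx,\rvx',\cdot,\cdot)(\rvh_v-\rvy)$. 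Telescoping this last integrand produces one kernel-drift term $(\rvf_u-\rvy)^\top(\KKKK_v-\KKKK_0)(\rvf_v-\rvy)$ and two residual-mismatch terms $(\rvf_u-\rvy)^\top\KKKK_0(\rvf_v-\rvh_v)$ and $(\rvf_u-\rvh_u)^\top\KKKK_0(\rvh_v-\rvy)$, each carrying exactly one factor of the trajectory gap $\rvf-\rvh$. So the whole estimate reduces to bounding, on a single high-probability event, four primitive quantities: the a priori sizes of $\KKK_0$ and $\KKKK_0$; the slow-variation gap $\|\KKKK_v-\KKKK_0\|$; the trajectory gap $\|\rvf_u-\rvh_u\|$; and the residuals $\|\rvf_u-\rvy\|$, $\|\rvh_u-\rvy\|$ together with their time integrals.

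Three of these I would import from the NTH machinery of \cite{huang2019dynamics}: the quantitative $\tilde O(1/m^2)$ variation bound for $\KKKK_t$ recalled earlier, and the companion a priori estimates showing that the relevant contractions of $\KKK_0$ and $\KKKK_0$ have size $\tilde O(1/m)$. These are exactly what Assumption \ref{assump: justify_approx_error} is tailored to supply, and together with Gaussian-initialization concentration they hold simultaneously with probability at least $1-e^{-Cm}$ by a union bound. The residuals I would handle by solving the linearized flow in closed form: $\rvh_u-\rvy=e^{-u\rmKK_0/n}(\rvf_0-\rvy)$, so the nondegeneracy $\rmKK_0\succeq\lambda\rmI_n$ from Assumption \ref{assump: justify_approx_error}, with $\|\rvf_0\|=O(\sqrt n)$ at Gaussian initialization, gives $\|\rvh_u-\rvy\|\lesssim\sqrt n\,e^{-u\lambda/n}$ and hence $\int_0^t\|\rvh_u-\rvy\|\,du\lesssim\sqrt n\,(t\land n/\lambda)$; this is where the $(t\land n/\lambda)$ factor in the bound originates. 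A matching $O(\sqrt n)$ bound for $\|\rvf_u-\rvy\|$ holds on the same event, from positivity of $\rmKK_t$ or from the bootstrap below.

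The main work is the trajectory gap. From $\dot\rvf_t=-\tfrac1n\rmKK_t(\rvf_t-\rvy)$ and $\dot\rvh_t=-\tfrac1n\rmKK_0(\rvh_t-\rvy)$, variation of constants gives $\rvf_t-\rvh_t=-\tfrac1n\int_0^t e^{-(t-s)\rmKK_0/n}(\rmKK_s-\rmKK_0)(\rvf_s-\rvy)\,ds$; feeding in the integrated form of the $r=2$ equation of \eqref{eq: nth} (namely $\|\rmKK_s-\rmKK_0\|_{\op}\lesssim\tilde O(sn/m)$, from the $\KKK_0$ size bound and the residual bound) yields $\|\rvf_t-\rvh_t\|\lesssim\tilde O(t^2\sqrt n/m)$, up to a $(1+t^2/m)$ correction obtained by running the bootstrap a second time (the $\rvf_s-\rvy$ in the integrand is itself $\rvh_s-\rvy$ plus $\rvf_s-\rvh_s$). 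Substituting all of these bounds into the four mismatch terms of $\gE$ and carrying out the elementary time integrals — each integration against a residual contributing a factor $t\land n/\lambda$, each against a bounded object a factor $t$ — collapses the drift and residual-mismatch contributions to $\tfrac{(\log m)^c}{m^2}\,t^2(t^2\lor1)(1+t^2/m)(t\land n/\lambda)$ and the first-order contribution to $\tfrac{(\log m)^c}{m^2}\,t^3(1\lor t)$, which is the claimed bound. The constraint $t\lesssim\sqrt{\lambda m/n}/(\log m)^{c'}$ is precisely the window in which this bootstrap closes — it keeps $\|\rvf_t-\rvy\|=O(\sqrt n)$ and $\rvf_t$ inside the neighbourhood of initialization where all the a priori kernel estimates are valid — while $t\lesssim m^{1/3}/(\log m)^{c''}$ keeps the $t^2/m$ and $t^3/m$ correction factors bounded. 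The stated sizes of the three summands of $\hKK_t$ then follow directly: $\KK_0=\tilde O(1)$; the first-order integral is at most $\tfrac1n\|\KKK_0(\rvx,\rvx',\cdot)\|\int_0^t\|\rvh_u-\rvy\|\,du=\tilde O(t/m)$ using the crude $\int_0^t\|\rvh_u-\rvy\|\,du\lesssim\sqrt n\,t$; and the double integral is at most $\tfrac1{n^2}\|\KKKK_0\|_{\op}\bigl(\int_0^t\|\rvh_u-\rvy\|\,du\bigr)^2=\tilde O(t^2/m)$.

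The main obstacle is exactly this trajectory-comparison step: the bound on $\|\rvf_t-\rvh_t\|$ and the a priori concentration bounds on $\rmKK_t,\KKK_t,\KKKK_t$ must be proved jointly by a self-consistent (bootstrap) argument, and one must track carefully the decay that $\lambda$ supplies so that after two time-integrations against $\KKKK_0$ the error lands at order $1/m^2$ rather than at a weaker power — this is what forces the admissible time window to scale like $\sqrt{\lambda m/n}$ (and, for a separate reason, like $m^{1/3}$). A secondary, more bookkeeping obstacle is verifying that Assumption \ref{assump: justify_approx_error} indeed delivers, with the stated probability, every a priori input used — nondegeneracy of $\rmKK_0$, the $\tilde O(1/m)$ sizes of the higher-order kernels, and their slow variation — so that a single event of probability $\ge1-e^{-Cm}$ supports the entire argument.
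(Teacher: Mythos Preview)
Your proposal is correct and follows essentially the same route as the paper: telescope $\gE$ into a first-order mismatch term, a kernel-drift term, and two residual-mismatch terms, then bound each using the a~priori NTH estimates from \cite{huang2019dynamics} (sizes of $\KKK_0,\KKKK_0$, slow variation of $\KKKK_t$, the residual $\|\rvf_u-\rvy\|=O(\sqrt n)$, and the trajectory gap $\|\rvf_u-\rvh_u\|$). The only cosmetic difference is that the paper imports the trajectory-gap bound $\|\rvf_u-\rvh_u\|\lesssim \frac{u(1+u)\sqrt n}{m}(u\land n/\lambda)$ directly from Huang--Yau rather than re-deriving it, so in the paper the $(t\land n/\lambda)$ factor enters through $\|\rvf_u-\rvh_u\|$ itself (in terms \RN{1}, \RN{2}, \RN{4}) rather than through integrating a decaying residual as you describe; the resulting bound is the same.
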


By the above theorem, the error term is much smaller than the main terms for large $m$, justifying the validity of the approximation \eqref{eq: approx_nth_full}. Meanwhile, this approximation is \emph{strictly better} than $\bbE_\init \KK_0$, whose error term is shown to be $\tilde O(1/m)$ in \cite{huang2019dynamics}.

Based on the approximation \eqref{eq: approx_nth_full}, we propose the following \lantk-NTH:
\begin{align}
	K^{(\textnormal{NTH})}(\rvx, \rvx') &:= \bbE_\init \KK_0(\rvx, \rvx') + \rvy^\top (\E_\init\rmKK_0)^{-1} \E_\init[\KKKK_0(\rvx, \rvx', \cdot, \cdot)] (\E_\init\rmKK_0)^{-1} \rvy \nonumber\\
	\label{eq: approx_nth_kernel}
	& \qquad - \rvy^\top \bar\rmP\bar\rmQ(\rvx, \rvx')\bar\rmP^\top \rvy,
\end{align}
where $\bar \rmP\bar \rmD\bar \rmP^\top$ is the eigen-decomposition of $\bbE_\init \rmK_0^{(2)}$ and the $(i, j)$-th entry of $\bar\rmQ(\rvx, \rvx')$ is given by ${\bigl(\bar\rmP^\top \bbE_\init[\KKKK_0(\rvx, \rvx', \cdot, \cdot)] \bar\rmP \bar\rmD^{-1}\bigr)_{ij}}\bigr/({\bar\rmD_{ii} + \bar\rmD_{jj}}).$ 
In a nutshell, we take the formula for $\hKK$, integrate it w.r.t. the initialization, and send $t\to \infty$. The linear term $\rva^\top \rvy$ vanishes as $\bbE_\init [\rva] = 0$.\footnote{We can in principle prove a similar result as Theorem \ref{thm: approx_nth_error} for $\KNTH$ by exploiting the concentration of $\KKK_0$ and $\KKKK_0$, but since it is not the focus of this paper, we omit the details.}




\subsection{Understanding the connection between \lantk-HR and \lantk-NTH}
\label{sec:conn-with-hoeffd}
\textcolor{black}{The formulas for the two \lantk s are perhaps unexpectedly similar --- they are both quadratic functions of $\rvy$ (here we focus on linear regression based $\gZ$ in $\KHR$). 
Is there any intrinsic relationship between the two? In this section, we propose to understand their relationship via a classical tool from asymptotic statistics called \emph{Hoeffding decomposition} \citep{hoeffding1948class}. Let us start by considering a generic label-aware kernel $K(\rvx, \rvx') \equiv K(\rvx, \rvx', S)$, which may have arbitrary dependence on the training data $S$.}
For a fixed $A \subseteq [n]$, define
\begin{equation}
    \label{eq: func_space_hoeffding_decomp}
    \gG_A := \bigg\{g: \rvy_A\mapsto f(\rvy_A) \ \bigg| \ \E_{\rvy|\rmX} g(\rvy_A)^2 < \infty, \E_{\rvy|\rmX}[g(\rvy_A) | \rvy_B] = 0 \ \forall B \textnormal{ s.t. } |B| < |A|\bigg\},
\end{equation}
The requirement of $\E_{\rvy|\rmX}[g(\rvy_A) | \rvy_B] = 0 \ \forall B \textnormal{ s.t. } |B| < |A|$ means that for any $B$ whose cardinality is less than $A$, the function $g$ has no information in $\rvy_B$, which reflects our intention to orthogonally decompose a generic function of $\rvy$ into parts whose dependence on $\rvy$ is increasingly complex. 

The celebrated work of \cite{hoeffding1948class} tells that the function spaces $\{\gG_A : A \subseteq [n]\}$ form an orthogonal decomposition of  $\sigma(\rvy \ | \ \rmX)$, the space of all measurable functions of $\rvy$ (w.r.t. the conditional law of $\rvy \ | \ \rmX$). Specifically, we have:
\begin{lemma}[Hoeffding decomposition of a generic label-aware kernel]
\label{lemma: hoeffding_decomp}
We can decompose a generic label-aware $K$ into
\begin{equation}
	\label{eq: hoeffding_decomp}
	K(\rvx, \rvx', S)  = \sum_{A \subseteq[n]}  \proj_AK(\rvx, \rvx', S) = \sum_{r = 0}^n \sum_{A \subseteq[n]: |A| = r}  \proj_AK(\rvx, \rvx', S),
\end{equation}
where $\proj_A$ is the $L^2$ projection operator onto the function space $\gG_A$. In particular, we can write 
\begin{align}
\label{eq: hoefdding_decomp_layerwise} 
	K(\rvx, \rvx', S) & =  
	\gK^{(2)}_\varnothing(\rvx, \rvx') + \sum_{i\in[n]}\gK^{(3)}_i(\rvx, \rvx', \rvy_i) + \sum_{i\neq j}\gK^{(4)}_{ij}(\rvx, \rvx', \rvy_i, \rvy_j) + \cdots ,
\end{align}
where $\{\gK^{(r+2)}_{A}: 0\leq r\leq n , |A| = r\}$ is a collection of measurable functions s.t. $\gK^{(r+2)}_A$ only depends on $\rvy_A$.
\end{lemma}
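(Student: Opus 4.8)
The plan is to invoke the classical Hoeffding/ANOVA decomposition theorem and translate it into the language of kernels. The underlying probability space is the conditional law of $\rvy$ given $\rmX$; since the data are i.i.d., conditionally on $\rmX$ the labels $\rvy_1,\dots,\rvy_n$ are independent (each $\rvy_i$ depends only on $\rvx_i$). For a fixed pair $(\rvx,\rvx')$, the map $\rvy \mapsto K(\rvx,\rvx',S)$ is a (by assumption square-integrable) element of $L^2(\sigma(\rvy\mid\rmX))$. The first step is therefore to record the abstract fact, due to Hoeffding, that under independence the spaces $\{\gG_A : A\subseteq[n]\}$ defined in \eqref{eq: func_space_hoeffding_decomp} are mutually orthogonal and their (algebraic) direct sum is dense in $L^2$; equivalently, $\mathrm{id} = \sum_{A\subseteq[n]} \proj_A$ as operators on $L^2(\sigma(\rvy\mid\rmX))$, where $\proj_A$ is orthogonal projection onto $\gG_A$. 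Applying this identity to the function $\rvy\mapsto K(\rvx,\rvx',S)$ and noting $[n]$ is finite (so the sum is finite, no density/limit subtleties) yields \eqref{eq: hoeffding_decomp} immediately.

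Second, I would make the orthogonal projections explicit so that the ``only depends on $\rvy_A$'' claim is transparent. The standard construction is inclusion–exclusion over conditional expectations: set $P_A K := \E_{\rvy\mid\rmX}[K(\rvx,\rvx',S)\mid \rvy_A]$ (a function of $\rvy_A$ only, by independence), and then define $\proj_A K := \sum_{B\subseteq A} (-1)^{|A|-|B|} P_B K$. One checks (i) $\proj_A K$ depends only on $\rvy_A$, since each $P_B K$ does and $B\subseteq A$; (ii) $\E_{\rvy\mid\rmX}[\proj_A K \mid \rvy_C] = 0$ whenever $A\not\subseteq C$, in particular whenever $|C|<|A|$, which is exactly the defining orthogonality property of $\gG_A$; and (iii) $\sum_{A\subseteq[n]}\proj_A K = P_{[n]}K = K$ by a telescoping/Möbius-inversion argument. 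Property (ii) is the routine computation: $\E[P_B K\mid\rvy_C]=P_{B\cap C}K$ by the tower property plus independence, so $\E[\proj_A K\mid\rvy_C]=\sum_{B\subseteq A}(-1)^{|A|-|B|}P_{B\cap C}K$, and grouping terms by the value of $B\cap C$ produces a factor $\sum_{\varnothing\ne D\subseteq A\setminus C}(-1)^{|D|}=0$ whenever $A\setminus C\ne\varnothing$.

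Third, to get the layerwise form \eqref{eq: hoefdding_decomp_layerwise} I simply reindex \eqref{eq: hoeffding_decomp} by the cardinality $r=|A|$: the $r=0$ term is the constant-in-$\rvy$ part, which I would identify with (an expectation of) the label-agnostic second-order kernel and hence name $\gK^{(2)}_\varnothing(\rvx,\rvx')$; the $r=1$ terms are the functions $\gK^{(3)}_i(\rvx,\rvx',\rvy_i):=\proj_{\{i\}}K$ depending on a single label; the $r=2$ terms give $\gK^{(4)}_{ij}(\rvx,\rvx',\rvy_i,\rvy_j):=\proj_{\{i,j\}}K$; and so on. Defining $\gK^{(r+2)}_A:=\proj_A K$ for $|A|=r$, the conclusion that each $\gK^{(r+2)}_A$ depends only on $\rvy_A$ is exactly property (i) above, and the superscript convention $(r+2)$ is just bookkeeping matching the ``second-order kernel plus $r$ extra label arguments'' reading. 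Since $\rvy_i\in\{\pm1\}$, one could additionally remark that each $\proj_A K$ is a multilinear (square-free) polynomial in $\{\rvy_i\}_{i\in A}$ — this is the Fourier–Walsh picture and it is what makes the later identification of $\KHR$ and $\KNTH$ as truncations concrete — but this is a bonus observation, not needed for the lemma.

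The only genuine subtlety, and the step I would be most careful about, is the appeal to conditional independence of $\rvy\mid\rmX$: the Hoeffding decomposition in the clean orthogonal form above requires the coordinates $\rvy_1,\dots,\rvy_n$ to be independent under the relevant measure, and here that holds only after conditioning on $\rmX$ because the samples are i.i.d.\ pairs $(\rvx_i,\rvy_i)$. I would state this conditional-independence reduction explicitly at the start and carry the conditioning on $\rmX$ through every expectation (hence the subscripts $\E_{\rvy\mid\rmX}$ everywhere, and the harmless fact that the components $\gK^{(r+2)}_A$ may depend on $\rmX$ as well). Everything else — orthogonality, the Möbius-inversion formula for $\proj_A$, and the telescoping that recovers $K$ — is the textbook ANOVA computation and I would not belabor it.
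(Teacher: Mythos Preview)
Your proposal is correct and follows the same route as the paper: both invoke the classical Hoeffding/ANOVA decomposition under the conditional law of $\rvy\mid\rmX$. The paper's proof is in fact a single line---it notes that the finite-variance hypothesis is vacuous for binary labels and then defers everything else to Section~11.4 of van der Vaart's \emph{Asymptotic Statistics}---so your version, with the explicit M\"obius-inversion formula for $\proj_A$ and the careful flagging of the conditional-independence step, is considerably more detailed than what the paper itself provides.
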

Thus, we have decomposed a generic label-aware kernel into the superimposition of a hierarchy of smaller kernels with \emph{increasing complexity} on the label dependence structure: at the zero-th level, $\gK^{(2)}_\varnothing$ is label-agnostic; at the first level, $\{\gK^{(3)}_i\}$ depend only on a single coordinate of $\rvy$; at the second level, $\{\gK^{(4)}_{ij}\}$ depend on a pair of labels, etc. Moreover, \emph{the information at each level is orthogonal}. 

\textcolor{black}{In view of the above lemma, the two \lantk s we proposed can be regarded as \emph{truncated versions} of \eqref{eq: hoefdding_decomp_layerwise}, where the truncation happens at the second level.}

\textcolor{black}{The above observation motivates us to ask a more ``fine-grained'' question regarding the two \lantk s: \emph{is the label-aware part in $\KNTH$ implicitly doing a higher-order regression?}}

\textcolor{black}{To offer some intuitions for this question, we randomly choose $5$ planes and $5$ ships in CIFAR-10\footnote{We only sample such a small number of images because the computation cost of $\KNTH$ is at least $O(n^4)$.}, and we compute the intra-class and inter-class values of the label-aware component in a two-layer fully-connected $\KNTH$ (See Appx.~\mbox{\ref{append:nth_calculations}} for the exact formulas). We find that the mean of the intra-class values is $98$, whereas the mean of the inter-class values is $48$. The difference between the two means indicates that $\KNTH$ may implicitly try to increase the similarity between two examples if they come from the same class and decrease it otherwise, and this behavior agrees with the \emph{intention} of $\KHR$.}

\textcolor{black}{We conclude this section by remarking that our above arguments are, of course, largely heuristic, and a rigorous study on the relationship between the two \lantk s is left as future work.}

\section{Experiments}\label{sec:experiment}

Though we have proved in Theorem \mbox{\ref{thm: approx_nth_error}} that $\KNTH$ possesses favorable theoretical guarantees, it's computational cost is prohibitive for large-scale experiments. Hence, in the rest of this section, all experiments on \lantk s are based on $\KHR$. However, in view of their close connections established in Sec. \ref{sec:conn-with-hoeffd}, we conjecture that similar results would hold for $\KNTH$.

\subsection{Generalization Ability}
\label{subsec:kernel-regression}
We compare the generalization ability of the \lantk~to its \agnostic~counterpart in both binary and multi-class image classification tasks on CIFAR-10.
We consider an $8$-layer CNN and its corresponding CNTK. The implementation of CNTK is based on \cite{neuraltangents2020} and the details of the architecture can be found in Appx. \ref{subsec:CNN-architecture}. 

{\bf Choice of higher-order regression methods.} 
Due to the high computational cost of kernel methods, our choice of $\gZ$ is particularly simple. We consider two methods: one is a kernelized regression (\lantkKR-V1 and V2\footnote{V1 and V2 stand for two ways of extracting features.}), and the other is a linear regression with Fast-Johnson-Lindenstrauss-Transform (\lantkFJLT-V1 and V2), a sketching method that accelerates the computation \citep{ailon2009fast}. We refer the readers to Appx.~\ref{subsec:details-of-Z} for further details. The choice of ``fancier'' $\gZ$ is deferred to future work.

\begin{table*}[t]
\centering
\scalebox{0.8}{
\begin{tabular}{c||c|c|c|c|c||c|c}
\Xhline{2\arrayrulewidth}
 & {deer vs dog} & {cat vs deer}  & { cat vs frog} & { deer vs frog} & {bird vs frog} & Avg & Imp. (abs./rel.) \\ \hline \hline
 CNTK & 85.15 & 83.55 & 86.95 & 87.55& 86.35 & 85.91 & -\\
 \lantk-best & {\bf 86.75}& {\bf 84.85} & {\bf 87.40} & {\bf 88.30} & {\bf 87.45} & {\bf 86.95}& 1.04 / 7.4\% \\ \hline \hline
 \lantkKR-V1 & 85.65 & 83.90 & 87.20 & 87.85& {\bf 87.45} & 86.41 & 0.50 / 3.5\%\\
 \lantkKR-V2 & 85.80 & 83.90 & 87.15& 87.90 & 87.25 & 86.40 & 0.49 / 3.5\%  \\
 \lantkFJLT-V1 & {\bf 86.75} & {\bf 84.85} & {\bf 87.40} & 88.10 & 86.40 & {\bf 86.70} & 0.79 / 5.6\% \\
 \lantkFJLT-V2 & 86.25& 84.55 & 87.10 & {\bf 88.30} & 87.00 & 86.64& 0.73 / 5.2\% \\ \hline \hline
 CNN & 89.00 & 88.50 & 89.00 & 92.50 & 90.15 & 89.83 & 3.92 / 27.8\% \\
 \Xhline{2\arrayrulewidth}
\end{tabular}}
\caption{Performance of \lantk~, CNTK and CNN on binary image classification tasks on CIFAR-10. 
Note that the best \lantk (indicated as \lantk-best) significantly outperforms CNTK. 
Here, ``abs'' stands for absolute improvement and ``rel'' stands for the relative error reduction. 
}
\label{table:binary-kernel-regression}
\end{table*}

{\bf Binary classification.} We first choose five pairs of categories on which the performance of the 2-layer fully-connected NTK is neither too high (o.w. the improvement will be marginal) nor too low (o.w. it may be difficult for $\gZ$ to extract useful information). 
We then randomly sample $10000$ examples as the training data and another $2000$ as the test data, under the constraint that the sizes of positive and negative examples are equal.
The performance on the five binary classification tasks are shown in Table \ref{table:binary-kernel-regression}. 
We see that \lantkFJLT-V1 works the best overall, followed by \lantkFJLT-V2 (which uses more features).
The improvement compared to CNTK indicates the importance of label information and confirms our claim that \lantk~better simulates the quantitative behaviors of NNs.

\begin{table*}[t]
\centering
\scalebox{0.8}{
\begin{tabular}{c||c|c|c|c||c|c}
\Xhline{2\arrayrulewidth}
Training size & 2000 & 5000  & 10000 & 20000 &  Avg & Imp. (abs./rel.) \\ \hline \hline
 CNTK &44.01 & 50.44& 55.18& 60.12& 52.44 & - \\
 \lantk-best & {\bf 46.31} & {\bf 52.66} & {\bf 56.96} & {\bf 61.58} & {\bf 54.38} & 1.94 / 4.1\%  \\ \hline \hline
 \lantkKR-V1  &44.49 & 51.39 & 55.74 & 60.87& 53.12& 0.68 / 1.4\%\\
 \lantkKR-V2  & 44.64& 51.38& 55.89 & 60.87& 53.20 & 0.76 / 1.6\%   \\
 \lantkFJLT-V1  &{\bf 46.31} & {\bf 52.66} & {\bf 56.96} & {\bf 61.58} & {\bf 54.38} & 1.94 / 4.1\%  \\
 \lantkFJLT-V2  & 45.72& 52.50&  56.26& Failed &- & 1.62 / 3.4\%  \\ \hline \hline
 CNN  &45.30 & 52.70& 60.23 & 68.15& 56.60 & 4.16 / 8.7\% \\
 \Xhline{2\arrayrulewidth}
\end{tabular}}
\caption{Performance of \lantk~, CNTK, and CNN on multi-class image classification tasks on CIFAR-10. The improvement is  more evident than the binary classification.
}
\label{table:multiple-kernel-regression}
\end{table*}

{\bf Multi-class classification.} Our current construction of \lantk~is under binary classification tasks, but it can be easily extended to multi-class classification tasks by changing the definition of the ``optimal'' kernel to $K^{\star}(\rvx_\alpha, \rvx_\beta) = \mathbf{1}\{\rvy_\alpha=\rvy_\beta\}$. Here, we again samples from CIFAR-10 under the constraint that the classes are balanced, and we vary the size of the training data in $\{2000, 5000, 10000, 20000\}$ while fixing the size of the test data to be $10000$. 
The results are shown in Fig. \ref{table:multiple-kernel-regression}. ``Failed'' in the table means that the experiment is failed due to memory constraint.
Similarly, \lantkFJLT-V1 works the best among all training sizes, and the performance gain is even higher compared to binary classification tasks. This supports our claim that the label information is particularly important when the semantic meanings of the features are highly dependent on the label system.

\subsection{Local Elasticity}
\label{subsec:local-elasticity}
Local elasticity, originally proposed by \citet{he2019local}, refers to the phenomenon that the prediction of a feature vector $\rvx'$ by a classifier (not necessarily a NN) is not significantly perturbed, after this classifier is updated via stochastic gradient descent at another feature vector $\rvx$, if $\rvx$ and $\rvx'$ are \emph{dissimilar} according to some geodesic distance which captures the semantic meaning of the two feature vectors. 
Through extensive experiments, \citet{he2019local} demonstrate that NNs are locally elastic, whereas linear classifiers are not. Moreover, they show that models trained by NTK is far less locally elastic compared to NNs, but more so compared to linear models. 

In this section, we show that \lantk~is significantly more locally elastic than NTK. We follow the experimental setup in \citet{he2019local}.
Specifically, for a kernel $K$, define the \emph{normalized kernelized similarity}\footnote{This similarity measure is closely related stiffness \citep{fort2019stiffness}, but the difference lies in that stiffness takes the loss function into account.} as $\bar{K}(\rvx, \rvx'):= K(\rvx, \rvx')/\sqrt{K(\rvx, \rvx) K(\rvx', \rvx')}$. Then, the strength of local elasticity of the corresponding kernel regression can be quantified by the relative ratio of $\bar K$ between intra-class examples and inter-class examples:
\begin{equation}
    \label{eq: relative_ratio}
    \textnormal{RR}(K):= \frac{\sum_{(\rvx, \rvx')\in S_1} \bar K(\rvx, \rvx') / |S_1|}{\sum_{(\rvx, \rvx')\in S_1} \bar K(\rvx, \rvx') / |S_1| + \sum_{(\tilde\rvx, \tilde\rvx')\in S_2} \bar K(\tilde\rvx, \tilde\rvx') / |S_2|},
\end{equation}
where $S_1$ is the set of intra-class pairs and $S_2$ is the set of inter-class pairs. Note that under this setup, the strength of local elasticity of NNs corresponds to $\textnormal{RR}(\KK_t)$. In practice, we can take the pairs to both come from the training set (train-train pairs), or one from the test set and another from the training set (test-train pairs).


\begin{table*}[t]
\centering
\scalebox{0.8}{
\begin{tabular}{c||c|c|c|c|c|c}
\Xhline{2\arrayrulewidth}
 Train-train & frog vs ship& frog vs truck & deer vs ship & dog vs truck & bird vs truck & deer vs truck  \\ \hline
NN-init & 58.37 & 55.07& 57.50 & 54.75 & 52.93& 54.86 \\ 
NN-trained & {\bf 71.99 } $\uparrow$  & {\bf 68.36 }  $\uparrow$ & {\bf 69.98}  $\uparrow$& {\bf 66.35}  $\uparrow$& {\bf 63.99}  $\uparrow$& {\bf 65.96}  $\uparrow$\\ \hline
NTK & 63.83 &58.31 & 62.43 & 58.05 & 55.01 & 58.02 \\ 
\lantk & {\bf 66.62}  $\uparrow$& {\bf 60.57}  $\uparrow$& {\bf 64.90}  $\uparrow$& {\bf 59.75}  $\uparrow$& {\bf 55.94}  $\uparrow$& {\bf 59.59}  $\uparrow$\\ 
\hline \hline
Test-train & frog vs ship& frog vs truck & deer vs ship & dog vs truck & bird vs truck & deer vs truck  \\ \hline 
NN-init & 58.31 & 55.06 & 57.64 & 54.62 & 52.93 & 54.94 \\ 
NN-trained & {\bf 71.45}  $\uparrow$& {\bf 67.91}  $\uparrow$& {\bf 69.73} $\uparrow$& {\bf 65.80}  $\uparrow$& {\bf 63.58} $\uparrow$ & {\bf 65.53}  $\uparrow$\\ \hline
NTK & 63.76 & 58.30 & 62.67 & 57.84 & 55.00 & 58.14 \\ 
\lantk & {\bf 66.53} $\uparrow$& {\bf 60.08} $\uparrow$& {\bf 65.20} $\uparrow$ & {\bf 59.54} $\uparrow$& {\bf 55.97} $\uparrow$ & {\bf 59.77} $\uparrow$ \\ 
 \Xhline{2\arrayrulewidth}
\end{tabular}
}
\caption{Strength of local elasticity in binary classification tasks on CIFAR-10. 
The training makes NNs more locally elastic, and \lantk~successfully simulates this behavior.}
\label{table:label-le}
\end{table*}

We first compute the strength of local elasticity for two-layer NNs with $40960$ hidden neurons at initialization and after training. We find that for all $\binom{10}{2} = 45$ binary classification tasks in CIFAR-10, $\textnormal{RR}$ significantly increases after training, under both train-train and test-train settings, agreeing with the result in Fig.~\ref{fig:le-cat-dog} and Fig.~\ref{fig:le-chair-bench}. The top $6$ tasks with the largest increase in $\textnormal{RR}$ in Table \ref{table:label-le}. The absolute improvement in the test accuracy is shown in Fig.~\ref{fig:ntk-init-trained}.

We then compute the strength of local elasticity for the corresponding NTK and \lantk~(specifically, \lantk-KR-V1) based on the formulas derived in Appx. \ref{subsubsection:label-agnostic-kernel},
and the results for the same $6$ binary classification tasks are shown in Table \ref{table:label-le}. We find that \lantk~is more locally elastic than NTK, indicating that \lantk~better simulates the qualitative behaviors of NNs.



\section{Conclusion}\label{sec:discussion}

In this paper, we proposed the notion of label-awareness to explain the performance gap between a model trained by NTK and real-world NNs. 
Inspired by the Hoeffding Decomposition of a generic label-aware kernel, we proposed two label-aware versions of NTK, both of which are shown to better simulate the behaviors of NNs via a theoretical study and comprehensive experiments. 

\textcolor{black}{We conclude this paper by mentioning several potential future directions.}


\textcolor{black}{{\bf More efficient implementations.} Our implementation of $K^{(\textnormal{HR})}$ requires forming a pairwise dataset, which can be cumbersome in practice (indeed, we need to use fast FJLT to accelerate the computation). Moreover, the exact computation of $K^{(\textnormal{NTH})}$ requires at least $O(n^4)$ time, since the dimension of the matrix $\bbE_\init [\KKKK_0(\rvx, \rvx', \cdot, \cdot)]$ is $n^2\times n^2$. It would greatly improve the practical usage of our proposed kernels if there are more efficient implementations.}

\textcolor{black}{{\bf Higher-level truncations.} As discussed in Sec. \ref{sec:conn-with-hoeffd}, our proposed $K^{(\textnormal{HR})}$ and $K^{(\textnormal{NTH})}$ can be regarded as \emph{second-level truncations} of the Hoeffding Decomposition \eqref{eq: hoeffding_decomp}. In principle, our constructions can be generalized to higher-level truncations, which may give rise to even better ``NN-simulating'' kernels. However, such generalizations would incur even higher computational costs. It would be interesting to see even such generalizations can be done with a reasonable amount of computational resources.}

\textcolor{black}{{\bf Going beyond least squares and gradient flows.} Our current derivation is based on a neural network trained by squared loss and gradient flows. While such a formulation is common in the NTK literature and makes the theoretical analysis simpler, it is of great interest to extend the current analysis to more practical loss functions and optimization algorithms.}

\section*{Acknowledgments}

This work was in part supported by NSF through CAREER DMS-1847415 and CCF-1934876, an Alfred Sloan Research Fellowship, the Wharton Dean's Research Fund, and Contract FA8750-19-2-0201 with the US Defense Advanced Research Projects Agency (DARPA). 


\section*{Broader Impact}


While this work may have certain implications on the design and analysis of new kernel methods, here we focus on how this work can potentially influence the interpretation of deep learning systems. In real-world decision-making problems, interpretability is almost always a crucial factor to consider if one is to deploy a machine learning system. For example, in autonomous driving where NNs are used to detect pedestrians and traffic lights, it is important to understand why this detection network outputs a certain prediction and how confident it is for such a prediction, lack of which can cause damages to the surrounding pedestrians and other drivers. Such a call for interpretability is underlying many works on the ``calibration'' of NNs (see, e.g., \citealt{guo2017calibration}).

Kernel methods, due to its linearity in the feature space, are easier to interpret than highly non-linear NNs, which is typically treated as a black-box. Thus, having a high-quality ``neural-network-simulating'' kernel can greatly simplify the design of ``neural network interpreters'' (like prediction intervals) and may lead to savings of computational resources. However, depending on the user of our technology, there may be negative outcomes. For example, if the user is ignorant of the underlying assumptions behind the validity of our proposed kernels, he/she may have an overt optimism or undue trust on these kernels and make misleading decisions.

We see many potential research directions on improving neural network interpretability by using our kernels. For example, our constructions can be generalized to higher-level truncations of the Hoeffding composition, which may give rise to even better ``neural-network-simulating'' kernels. 
However, to mitigate the risks associated with the question of ``when a kernel is indeed simulating a neural network'', we encourage researchers to carefully examine the validity of the imposed assumptions in a case-by-case manner.

\bibliography{ntkl}
\bibliographystyle{iclr2020_conference}

\appendix
\newpage
\section{Technical Details} 
\label{append:proofs}

\subsection{A Example Validating Claim \ref{claim: limitations_of_label_agnostic_kernels}}\label{subappend:example_limit_of_label_agnostic_kernels}
We first formally define the notion of a ``label system'':
\begin{definition}[Label system]
\label{def: label_system}
Let $(X, Y)$ be a fresh sample from the data distribution. The label system is the function $\eta: \bbR^d \to [0, 1]$ defined by
\[
\eta(\rvx) = \bbP(Y = 1\ | \ X = \rvx).
\]
\end{definition}
Let $\phi(\cdot)$ be the feature map corresponding to $K$, so that $K(\rvx, \rvx') = \la \phi(\rvx), \phi(\rvx') \ra$, where $\la\cdot, \cdot\ra$ is the inner product in the $\phi$-space. By our assumption, $\phi$ is also label-agnostic. 

Suppose $\phi$ works well on the label system $\eta_1$. This means that a simple linear fit in the $\phi$ space suffices to achieve satisfactory accuracy. Geometrically, this translates to the existence of a hyperplane which can almost perfectly separate the two classes. In other words, we can find a normal vector $\rvtheta_1$\footnote{Here we implicitly assume this hyperplane crosses the origin. The construction without this assumption is similar.}, such that $\bbP_{\eta_1}\bigl(Y \cdot \la \rvtheta_1, \phi(X)\ra > 0\bigr) \approx 1$, where we use $\bbP_{\eta_1}$ to stress that $Y$ comes from the label system $\eta_1$. 

Now, choose any vector $\rvtheta_2$ in the $\phi$ space, such that $\rvtheta_2$ is orthogonal to $\rvtheta_1$. We consider the following relabelling procedure, which produces another label system $\eta_2$:
\begin{equation}
	\label{eq: relabelling}
	\eta_2(\rvx) = 
	\begin{cases}
		1 & \textnormal{ if } \langle \rvtheta_1, \phi(\rvx)\rangle \cdot \langle \rvtheta_2, \phi(\rvx)\rangle < 0\\
		0 & \textnormal{ otherwise}.
	\end{cases}
\end{equation}
Under $\eta_2$, the $\phi$-space is partitioned into four quadrants, which we label as $\RN{1}, \RN{2}, \RN{3}, \RN{4}$ counterclockwise. Then, any $\rvx$ in $\RN{1}$ and $\RN{3}$ is labelled as $-1$, and any $\rvx$ in $\RN{2}$ and $\RN{4}$ is labelled as $+1$. Obviously, there is no hyper-plane that can separate the $+1$ and $-1$ examples, meaning that $\phi$ can be arbitrarily bad under $\eta_2$, hence validating Claim. See Fig. \ref{fig:limitation_of_label_agnostic_kernel} for a pictorial illustration.

\begin{figure}[t]
		\centering
		\scalebox{0.6}{
		\includegraphics[width=\textwidth]{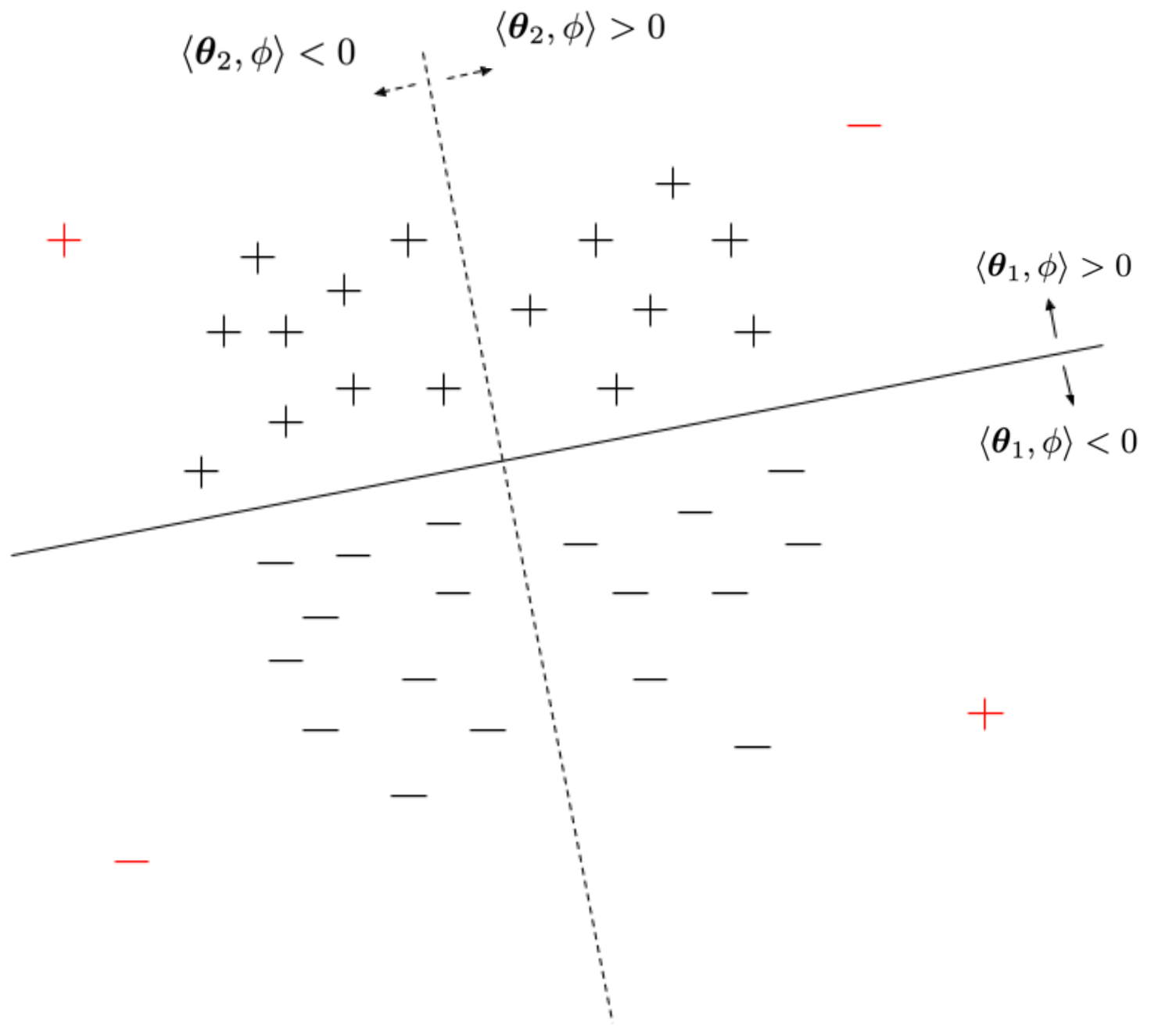}
		}
        \caption{Pictorial illustration of the example that validates Claim \ref{claim: limitations_of_label_agnostic_kernels}. The black ``$+$'' and ``$-$'' represent positive and negative examples according to the label system induced by $\rvtheta_1$, and the solid line represents the corresponding separating hyperplane. The dashed line represents the hyperplane whose normal vector is $\rvtheta_2$, which, along with the solid line, partition that $\phi$ space into four quadrants. The red ``$+$'' and ``$-$'' represents the positive and negative examples under the new label system.}
		\label{fig:limitation_of_label_agnostic_kernel}
\end{figure}

\subsection{Exact integrability of the approximate \texorpdfstring{$\KK_t$}{K2t}} \label{subappend:proof_approx_nth_zero_init}
In this subsection, we prove the following result:
\begin{proposition}[Exact integrability of the approximate $\KK_t$]
	\label{prop: approx_nth_zero_init}
	Assume $\rvf_0 = \mathbf{0}_n$ \footnote{The formula for nonzero $\rvf_0$ can be obtained by similar but lengthier calculations. Since this is not the focus of this paper,  we omit the details.}, then the first three terms in RHS of \eqref{eq: approx_nth_full} is equal to
	\begin{align*}
		 & \KK_0(\rvx, \rvx') + \bigg\langle \KKK_0(\rvx, \rvx', \cdot), (\rmKK_0)^{-1}\bigg(\rmI_n - e^{-t \rmKK_0/n}\bigg) \rvy \bigg\rangle  \\
		 & \qquad + \rvy^\top (\rmKK_0)^{-1}\bigg(\rmI_n - e^{-t \rmKK_0/n}\bigg) \KKKK_0(\rvx, \rvx', \cdot, \cdot) (\rmKK_0)^{-1} \rvy - \rvy^\top \rmP\rmQ(\rvx, \rvx')\rmP^\top \rvy,
	\end{align*}
	where
	\begin{align*}
		\bigg(\rmQ(\rvx, \rvx')\bigg)_{ij} = (1-e^{-t(\rmD_{ii} + \rmD_{jj})/n}) \times {\bigg(\rmP^\top \KKKK_0(\rvx, \rvx', \cdot, \cdot) \rmP \rmD^{-1}\bigg)_{ij}}\bigg/\bigg({\rmD_{ii} + \rmD_{jj}}\bigg)  ,
	\end{align*}
	and $\rmP \rmD \rmP^\top$ is the eigen-decomposition of $\rmKK_0$.
\end{proposition}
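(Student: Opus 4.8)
The plan is to plug the closed form of $\rvh_u$ into the three terms of $\hKK_t$ and evaluate each resulting integral in closed form; the only structural input needed is that $\rmKK_0$ is symmetric positive definite (the standard a.s.\ positive-definiteness of the empirical NTK under Gaussian initialization and sufficient over-parametrization), so that $\rmKK_0$ is invertible and admits an eigen-decomposition $\rmKK_0 = \rmP\rmD\rmP^\top$ with $\rmD\succ 0$. Note all functions of $\rmKK_0$ (in particular $e^{-u\rmKK_0/n}$ and $(\rmKK_0)^{-1}$) commute with one another, though none of them commutes with the matrix $\KKKK_0(\rvx,\rvx',\cdot,\cdot)$; keeping the order straight is the only thing to watch. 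First I would reduce $\rvh_u - \rvy$: since $\rvf_0 = \mathbf{0}_n$, the solution $\rvh_u = (\rmI_n - e^{-u\rmKK_0/n})\rvy + \rvh_0$ gives $\rvh_u - \rvy = -e^{-u\rmKK_0/n}\rvy$, and $(\rvh_u-\rvy)^\top = -\rvy^\top e^{-u\rmKK_0/n}$ by symmetry of $\rmKK_0$. This single substitution feeds both the linear and the quadratic terms.

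\emph{Linear term.} Substituting, the second term of $\hKK_t$ equals $\tfrac1n\int_0^t \la \KKK_0(\rvx,\rvx',\cdot),\, e^{-u\rmKK_0/n}\rvy\ra\,du = \big\la \KKK_0(\rvx,\rvx',\cdot),\, \big(\tfrac1n\int_0^t e^{-u\rmKK_0/n}\,du\big)\rvy\big\ra$. Evaluating the matrix-valued integral eigen-component by eigen-component, $\tfrac1n\int_0^t e^{-u\rmKK_0/n}\,du = (\rmKK_0)^{-1}(\rmI_n - e^{-t\rmKK_0/n})$, which is exactly the claimed second summand.

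\emph{Quadratic term (main computation).} Using $\rvh_u - \rvy = -e^{-u\rmKK_0/n}\rvy$ twice, the third term of $\hKK_t$ becomes $\tfrac1{n^2}\int_0^t\!\int_0^u \rvy^\top e^{-u\rmKK_0/n}\,\KKKK_0(\rvx,\rvx',\cdot,\cdot)\,e^{-v\rmKK_0/n}\,\rvy\,dv\,du$. I would insert $\rmKK_0 = \rmP\rmD\rmP^\top$, absorb the outer $\rmP$'s into $\rvy$, and set $\tilde\rmM := \rmP^\top \KKKK_0(\rvx,\rvx',\cdot,\cdot)\,\rmP$, so the core is $e^{-u\rmD/n}\tilde\rmM e^{-v\rmD/n}$ with $(i,j)$ entry $\tilde\rmM_{ij}\, e^{-u\rmD_{ii}/n} e^{-v\rmD_{jj}/n}$. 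The inner integral gives $\int_0^u e^{-v\rmD_{jj}/n}\,dv = \tfrac{n}{\rmD_{jj}}(1-e^{-u\rmD_{jj}/n})$, and then integrating $e^{-u\rmD_{ii}/n}$ and $e^{-u(\rmD_{ii}+\rmD_{jj})/n}$ over $u\in[0,t]$ produces the $(i,j)$ entry $\tfrac{\tilde\rmM_{ij}}{\rmD_{jj}}\big(\tfrac{1-e^{-t\rmD_{ii}/n}}{\rmD_{ii}} - \tfrac{1-e^{-t(\rmD_{ii}+\rmD_{jj})/n}}{\rmD_{ii}+\rmD_{jj}}\big)$. I would then split this entry into $\tfrac{1-e^{-t\rmD_{ii}/n}}{\rmD_{ii}}\cdot\tfrac{\tilde\rmM_{ij}}{\rmD_{jj}}$ and $-(1-e^{-t(\rmD_{ii}+\rmD_{jj})/n})\cdot\tfrac{\tilde\rmM_{ij}/\rmD_{jj}}{\rmD_{ii}+\rmD_{jj}}$: the first is the $(i,j)$ entry of $\rmD^{-1}(\rmI_n - e^{-t\rmD/n})\tilde\rmM\rmD^{-1}$, which after conjugating back by $\rmP$ becomes $(\rmKK_0)^{-1}(\rmI_n - e^{-t\rmKK_0/n})\KKKK_0(\rvx,\rvx',\cdot,\cdot)(\rmKK_0)^{-1}$; the second is precisely $-\rmQ(\rvx,\rvx')_{ij}$ with $\rmQ$ as in the statement, since $(\rmP^\top\KKKK_0\rmP\rmD^{-1})_{ij} = \tilde\rmM_{ij}/\rmD_{jj}$. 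Sandwiching with $\rvy^\top(\cdot)\rvy$ and $\rvy^\top\rmP(\cdot)\rmP^\top\rvy$ yields the last two claimed summands, and summing the three terms finishes the proof.

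\textbf{Main obstacle.} There is no conceptual difficulty; the work is in the bookkeeping of the quadratic term: keeping $\KKKK_0$ between the two matrix exponentials (it does not commute with $\rmKK_0$), correctly assigning $\rmD_{ii}$ to the row index and $\rmD_{jj}$ to the column index in the iterated integral, and — the one genuinely nonobvious step — recognizing that the single closed form obtained from the iterated integral decomposes exactly into the difference of a $(\rmKK_0)^{-1}(\cdot)(\rmKK_0)^{-1}$ term and the $\rmQ$ term of the statement rather than some other regrouping. A minor point worth flagging in the write-up is that the final formula presupposes $\rmD_{ii}>0$ for all $i$ (so the factors $1/\rmD_{ii}$ and $1/(\rmD_{ii}+\rmD_{jj})$ are well-defined); this follows from $\rmKK_0\succ 0$, or by a limiting argument from the $\rmKK_0\succeq 0$ case.
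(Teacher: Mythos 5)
Your proposal is correct and follows essentially the same route as the paper's proof: substitute $\rvh_u-\rvy=-e^{-u\rmKK_0/n}\rvy$, evaluate the single integral to get $(\rmKK_0)^{-1}(\rmI_n-e^{-t\rmKK_0/n})\rvy$, and handle the double integral via the eigen-decomposition $\rmKK_0=\rmP\rmD\rmP^\top$, splitting the resulting closed form into the $(\rmKK_0)^{-1}(\cdot)(\rmKK_0)^{-1}$ piece and the $\rmQ$ piece. The only (immaterial) difference is that the paper first integrates the inner integral in matrix form and diagonalizes only the leftover cross term $\int_0^t e^{-u\rmKK_0/n}\KKKK_0\rmKK_0^{-1}e^{-u\rmKK_0/n}\,du$, whereas you diagonalize at the outset and do the whole iterated integral entrywise.
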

\begin{proof}
For notational simplicity, we let $\rmH \equiv \rmKK_0$.
We have
\begin{align*}
	\int_0^t \rvh_u - \rvy du & = \int_0^t e^{-u \rmH/n} \rvy du =  -n\rmH^{-1}\rvy + n\rmH^{-1}e^{-t\rmH/n}\rvy.
\end{align*}
Hence, the second term in the RHS of \eqref{eq: approx_nth_full} is
\begin{align*}
-\frac{1}{n} \int_{0}^t \bigg\langle \KKK_0, \rvh_u-\rvy \bigg\rangle du= \bigg\langle \KKK_0, \rmH^{-1}\bigg(\rmI_n - e^{-t\rmH/n}\bigg) \rvy \bigg\rangle.
\end{align*}
We now deal with the third term in the RHS of \eqref{eq: approx_nth_full}. We have
\begin{align*}
& \int_0^t \int_0^u (\rvh_u - \rvy)^\top \KKKK_0 (\rvh_v - \rvy) dvdu \\
	& = \rvy^\top \int_0^t\int_0^u e^{-u\rmH/n}  \KKKK_0 e^{-v\rmH/n} \rvy dvdu\\
	& = \rvy^\top \int_0^t e^{-u\rmH/n} \KKKK_0 (n\rmH^{-1} - n\rmH^{-1} e^{-u\rmH/n}) \rvy \\
	& = n^2\rvy^\top \rmH^{-1} \bigg(\rmI_n - e^{-t\rmH/n} \bigg) \KKKK_0 \rmH^{-1} \rvy - n\rvy^\top \int_0^t e^{-u\rmH/n} \KKKK_0 \rmH^{-1}e^{-u\rmH/n} \rvy du.
\end{align*}
Let $\rmP \rmD \rmP^\top$ be an eigen-decomposition of $\rmH$. Then we have
\begin{align*}
	\int_0^t e^{-u\rmH/n} \KKKK_0 \rmH^{-1}e^{-u\rmH/n} du& = \int_0^t \rmP e^{-u\rmD/n} \underbrace{\rmP^\top \KKKK_0 \rmP \rmD^{-1}}_{:= \rmM} e^{-u\rmD/n} \rmP^\top du\\
	& = \rmP \int_0^t e^{-t\rmD/n} \rmM e^{-\rmD/n} du \rmP^\top.
\end{align*}
Note that
\[
	\bigg(e^{-t\rmD/n} \rmM e^{-\rmD/n}\bigg)_{ij}  = \rmM_{ij} e^{-u(d_i+d_j)/n},
\]
here $d_i : = \rmD_{ii}$. Hence, 
\[
	\int_0^t \bigg(e^{-t\rmD/n} \rmM e^{-\rmD/n}\bigg)_{ij} du = \rmM_{ij} \cdot \bigg(\frac{n}{d_i + d_j} - \frac{n}{d_i + d_j} e^{-t(d_i+d_j)/n}\bigg) ,
\]
which yields
\[
	\int_0^t e^{-u\rmH/n} \KKKK_0 \rmH^{-1}e^{-u\rmH/n} du = n\rmP  \rmQ \rmP^\top.
\]
Putting the above equations together gives the desired result.
\end{proof}

\subsection{Proof of Lemma \ref{lemma: hoeffding_decomp}}\label{subappend:proof_hoeffding_decomp}
Note that the finite variance assumption in the definition of $\gG_A$ is vacuous, because our label is binary. The rest of the proof is standard (see, e.g., Section 11.4 of \citealt{van2000asymptotic}).

\subsection{Proof of Theorem \ref{thm: approx_nth_error}} \label{subappend:proof_approx_nth_error}

\begin{assump}
	\label{assump: justify_approx_error}
	There exists a small constant $c>0$ such that $c < \|\rvx_i \|_2\leq c^{-1}$ for all $i\in[n]$. Moreover, there exists an integer $p\geq 4$ such that for any $1\leq r\leq 2p+1$, the following two things happen: 
	\begin{enumerate}[itemsep=0mm]
		\item the activation function has a bounded $r$-th derivative; 
		\item there exists a constant $c_r>0$ such that for any distinct indices $1\leq \alpha_1, \alpha_2, \cdots, \alpha_r\leq n$, the smallest singular value of the data matrix $[\rvx_{\alpha_1}, \rvx_{\alpha_2}, \cdots, \rvx_{\alpha_r}]$ is at least $c_r$.
	\end{enumerate}
\end{assump}

To prove Theorem \ref{thm: approx_nth_error}, we first collect some useful details into the following lemma.
\begin{lemma}
	\label{lemma: nth_estimate}
	Under the assumptions of Theorem \ref{thm: approx_nth_error}, with high probability, for any $u\leq t$, we have
	\begin{align*}
		\| \KKK_u\|_\infty & = \tilde{O}\bigg(\frac{1+u}{m}\bigg), \ \ \ \| \KKKK_u\|_\infty = \tilde{O}\bigg(\frac{1}{m}\bigg), \ \ \ \| K_u^{(5)}\|_\infty = \tilde{O}\bigg(\frac{1+u}{m^2}\bigg), \\
		\| \rvf_u -\rvh_u \|_2 & \lesssim \frac{u(1+u)\sqrt{n}}{{m}} \bigg(u \land \frac{n}{\lambda}\bigg), \ \ \ \|\rvf_u - \rvy \|_2 = O(\sqrt{n}).
	\end{align*}
\end{lemma}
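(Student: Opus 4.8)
\textbf{The plan} is to run the bootstrap-along-the-hierarchy argument of \cite{huang2019dynamics}: first pin down size estimates for the higher-order kernels at \emph{initialization}, then propagate them forward in time by integrating the NTH \eqref{eq: nth}, and finally read off the discrepancy $\|\rvf_u-\rvh_u\|_2$ between the true gradient-flow dynamics \eqref{eq: grad_flow_f} and the NTK-linearized dynamics $\dot{\rvh}_u=-\tfrac1n\rmKK_0(\rvh_u-\rvy)$ through a Duhamel estimate. Everything is stated on one high-probability event (probability $\ge 1-e^{-Cm}$) carrying the initialization bounds. First I would expand the $r$-fold parameter derivatives $\partial_\rvtheta^r f(\rvx_\alpha,\rvtheta_0)$ of the depth-$L$, width-$m$ network of Theorem \ref{thm: approx_nth_error} and, using Gaussian concentration for products of the independent weight matrices together with the non-degeneracy hypotheses of Assumption \ref{assump: justify_approx_error} (bounded activation derivatives, and uniformly lower-bounded least singular values of every $\le(2p{+}1)$-tuple of inputs, which keep the neuron sums from collapsing), obtain --- exactly as in \cite{huang2019dynamics} --- that $\|K^{(r)}_0\|_\infty=\tilde O(m^{1-\lceil r/2\rceil})$ for all $2\le r\le 2p+1$; in particular $\|\KKK_0\|_\infty,\|\KKKK_0\|_\infty=\tilde O(1/m)$ and $\|K^{(5)}_0\|_\infty=\tilde O(1/m^2)$, and moreover $|f_0(\rvx_i)|=\tilde O(1)$ and $\lambda_{\min}(\rmKK_0)\ge\lambda$, the quantity appearing in Assumption \ref{assump: justify_approx_error}.

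The residual bound is immediate: gradient flow is monotone in the loss, so $\|\rvf_u-\rvy\|_2^2=2n\,L(\rvtheta_u)\le 2n\,L(\rvtheta_0)=\|\rvf_0-\rvy\|_2^2=O(n)$ by the previous step, which also yields the repeatedly-used fact $\tfrac1n\sum_i|f_u(\rvx_i)-y_i|\le n^{-1/2}\|\rvf_u-\rvy\|_2=O(1)$. To close the hierarchy I would introduce the stopping time $\tau$ at which one of the asserted $\|\cdot\|_\infty$ bounds on $K^{(r)}_u$, $2\le r\le 2p+1$, first fails with the implied constant doubled. For $u\le\tau$, integrating \eqref{eq: nth} gives
\[
\|K^{(r)}_u\|_\infty\le\|K^{(r)}_0\|_\infty+\int_0^u\|K^{(r+1)}_v\|_\infty\Bigl(\tfrac1n\sum_i|f_v(\rvx_i)-y_i|\Bigr)\,dv,
\]
so level $r$ is governed by the time-integral of level $r+1$ times the $O(1)$ factor above. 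Starting from $r=2p+1$ --- whose bound comes straight from the explicit formula and Assumption \ref{assump: justify_approx_error}, not from a further NTH equation --- and descending, with $p\ge4$ providing enough levels of $m$-slack to absorb the one power of $u$ picked up per integration, a standard continuity argument gives $\tau\ge t$ for $t\lesssim\sqrt{\lambda m/n}(\log m)^{-c'}\land m^{1/3}(\log m)^{-c''}$; the $(1+u)$ factors on $\KKK_u$ and $K^{(5)}_u$ are precisely what $\int_0^u\|K^{(r+1)}_v\|_\infty dv$ contributes (the $\tilde O(1/m)$ in $\|\KKK_u\|_\infty$ being $\|\KKK_0\|_\infty$, and the $\tilde O(u/m)$ being $\int_0^u\|\KKKK_v\|_\infty dv$).

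For the last bound, subtracting $\dot{\rvh}_u=-\tfrac1n\rmKK_0(\rvh_u-\rvy)$ from $\dot{\rvf}_u=-\tfrac1n\rmKK_u(\rvf_u-\rvy)$ and using $\rvh_0=\rvf_0$, Duhamel's formula gives $\rvf_u-\rvh_u=-\tfrac1n\int_0^u e^{-(u-v)\rmKK_0/n}(\rmKK_v-\rmKK_0)(\rvf_v-\rvy)\,dv$. Then $\|e^{-(u-v)\rmKK_0/n}\|_{\op}\le e^{-(u-v)\lambda/n}$, while entrywise integration of \eqref{eq: nth} gives $\|\rmKK_v-\rmKK_0\|_{\op}\le n\|\rmKK_v-\rmKK_0\|_\infty\le n\int_0^v\|\KKK_w\|_\infty\cdot O(1)\,dw\lesssim n\,v(1+v)/m$; plugging in $\|\rvf_v-\rvy\|_2=O(\sqrt n)$, bounding the monotone factor $v(1+v)$ by $u(1+u)$ and pulling it out, and using $\int_0^u e^{-(u-v)\lambda/n}\,dv=\tfrac{n}{\lambda}(1-e^{-u\lambda/n})\le u\land\tfrac{n}{\lambda}$, one obtains $\|\rvf_u-\rvh_u\|_2\lesssim\tfrac{u(1+u)\sqrt n}{m}(u\land\tfrac{n}{\lambda})$, as claimed.

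\textbf{The main obstacle} is the initialization step together with its companion, the direct control of the top-level kernel $K^{(2p+1)}_u$: both require careful combinatorial bookkeeping of the high-order derivatives of a deep network and matching concentration inequalities, plus a judicious choice of time-dependent invariants so that the descending recursion actually closes over the stated window of $t$. This is exactly the machinery of \cite{huang2019dynamics}, which I would invoke rather than reprove; everything downstream is Gr\"onwall-type bookkeeping, the one genuinely new ingredient being the refined $u\land(n/\lambda)$ factor above, obtained by retaining the exponential $e^{-(u-v)\lambda/n}$ inside the Duhamel integral rather than bounding it by $1$.
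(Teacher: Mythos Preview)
Your proposal is correct and follows the same route as the paper, which simply cites \cite{huang2019dynamics} (their Equations (C.12), (C.13), (C.16), (C.28)) for all five estimates. What you have written is essentially a faithful sketch of the Huang--Yau argument: initialization bounds by Gaussian concentration, the loss-monotonicity one-liner for $\|\rvf_u-\rvy\|_2$, the bootstrap down the hierarchy to propagate the $\|K^{(r)}_u\|_\infty$ bounds, and the Duhamel estimate for $\|\rvf_u-\rvh_u\|_2$. One small remark: the refined factor $u\land n/\lambda$ that you flag as ``the one genuinely new ingredient'' is in fact already present in \cite{huang2019dynamics} (it is their (C.28)), so it is not new here; your derivation of it via the retained exponential $e^{-(u-v)\lambda/n}$ is nonetheless the right way to see where it comes from.
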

\begin{proof}
	This is implied by Equations (C.12), (C.13), (C.16), and (C.28) in \cite{huang2019dynamics}.
\end{proof}

We are now ready to present the proof.
\begin{proof}[Proof of Theorem \ref{thm: approx_nth_error}]
	We have
	\begin{align*}
	& \KK_t(\rvx_\alpha, \rvx_\beta)\\
	& = \KK_0(\rvx_\alpha, \rvx_\beta) - \frac{1}{n} \bigg\langle \KKK_0,\int_0^t \rvh_u - \rvy + \rvf_u - \rvh_u du  \bigg\rangle \\
	& \ \ \ + \frac{1}{n^2} \int_0^t \int_0^u (\rvh_u - \rvy + \rvf_u - \rvh_u)^\top (\KKKK_0 + \KKKK_v - \KKKK_0) (\rvh_v - \rvy + \rvf_v - \rvh_v) dvdu\\
	& = \KK_0(\rvx_\alpha, \rvx_\beta) - \frac{1}{n}  \bigg\langle \KKK_0,\int_0^t \rvh_u - \rvy du \bigg\rangle  + \frac{1}{n^2}  \int_0^t\int_0^u (\rvh_u - y)^\top \KKKK_0(\rvh_v-y) dvdu + \gE,
	\end{align*}
	where the error term $\gE$ is given by
	\begin{align*}
		\gE & = - \frac{1}{n}  \bigg\langle \KKK_0,\int_0^t \rvf_u - \rvh_u du  \bigg\rangle + \frac{1}{n^2}\int_0^t\int_0^u (\rvh_u-\rvy)^\top \KKKK_0(\rvf_v - \rvh_v) dvdu\\
	& \qquad  + \frac{1}{n^2} \int_0^t\int_0^u (\rvh_u-\rvy)^\top (\KKKK_v - \KKKK_0)(\rvf_v - \rvy) dvdu \\
	& \qquad + \frac{1}{n^2} \int_0^t\int_0^u (\rvf_u - \rvh_u)^\top \KKKK_v(\rvf_v - \rvy) dvdu.
	\end{align*}
	We label the four terms in the RHS above as $\RN{1}, \RN{2}, \RN{3}$, and $\RN{4}$. By lemma \ref{lemma: nth_estimate}, for the first term, w.h.p. we have
	\begin{align*}
		\RN{1} & \leq \frac{1}{n} \| \KKK_0(\rvx_\alpha,\rvx_\beta, \cdot)\|_2 \cdot \int_0^t  \| \rvf_u-\rvh_u\|_2 du \\ 
		& \lesssim \frac{1}{{m}} \| \KKK_0\|_\infty \cdot \int_0^t u(1+u) \bigg(u\land \frac{n}{\lambda}\bigg) du\\
		& \lesssim \frac{(\log m)^c}{m^2} \cdot \bigg((t^3 + t^4) \land  \frac{n (t^2 + t^3)}{\lambda}\bigg)\\
		& \lesssim \frac{(\log m)^c \cdot t^2(1+t)}{m^2} \bigg(t \land \frac{n}{\lambda}\bigg).
	\end{align*}
	For the second term, we have
	\begin{align*}
		\RN{2} & \leq \frac{1}{n^2} \int_0^t \int_0^u \| \rvh_u - \rvy\|_2 \| \rvf_v -\rvh_v\|_2 \| \KKKK_0(\rvx_\alpha, \rvx_\beta, \cdot, \cdot)\| dvdu\\
		& \lesssim\frac{1}{m} \int_0^t \int_0^u v(1+v) \bigg(v \land \frac{n}{\lambda}\bigg) \| \KKKK_0\|_\infty dvdu\\
		& \lesssim \frac{(\log m)^c}{m^2} \int_0^t \int_0^u v(1+v) \bigg(v \land \frac{n}{\lambda}\bigg)  \\
		& \lesssim \frac{(\log m)^c}{m^2} (t^4 +t^5) \land \frac{n (t^3 + t^4)}{\lambda}\\
		& = \frac{(\log m)^c\cdot t^3(1+t)}{m^2} \bigg(t \land \frac{n}{\lambda}\bigg) .
	\end{align*}
	For the third term, we have
	\begin{align*}
		\RN{3} & \leq \frac{1}{n^2} \int_0^t \int_0^u \|\rvh_u - \rvy \|_2 \| \rvf_v-\rvy\|_2 \|\KKKK_v(\rvx_\alpha, \rvx_\beta, \cdot, \cdot) - \KKKK_0(\rvx_\alpha, \rvx_\beta, \cdot, \cdot) \|_2 dvdu.
	\end{align*}
	Note that 
	\begin{align*}
		\| \rvh_u -\rvy\|_2&  \leq \|\rvh_u -\rvf_u \|_2 + \| \rvf_u -\rvy\|_2 \lesssim \frac{u(1+u)\sqrt{n}}{m}\bigg(u \land \frac{n}{\lambda}\bigg) + \sqrt{n},\\
		\| \rvf_v- \rvy\|_2 & \lesssim \sqrt{n},
	\end{align*}	
	and
	\begin{align*}
		& \|\KKKK_v(\rvx_\alpha, \rvx_\beta, \cdot, \cdot) - \KKKK_0(\rvx_\alpha, \rvx_\beta, \cdot, \cdot) \|_2\\	
		& \leq n \max_{i, j\in[n]}|\KKKK_v(\rvx_\alpha, \rvx_\beta, \rvx_i, \rvx_j) - \KKKK_0(\rvx_\alpha, \rvx_\beta, \rvx_i, \rvx_j)|\\
		& \leq n \int_0^v \max_{i, j\in[n]}| \dot{K}_w^{(4)}(\rvx_\alpha, \rvx_\beta, \rvx_i, \rvx_j)| dw\\
		& = n\int_0^v \max_{i, j\in[n]} \frac{1}{n} \bigg|\sum_{k\in[n]} K^{(5)}_w(\rvx_\alpha, \rvx_\beta, \rvx_i, \rvx_j, \rvx_k) (f_w(\rvx_k) - \rvy_k)\bigg| dw\\
		& \leq n \int_0^v \| K^{(5)}_w\|_\infty \frac{1}{\sqrt{n}}  \| \rvf_w - \rvy\|_2 dw\\
		& \lesssim \frac{n(\log m)^c}{m^2}\int_0^v  (1+w) dw\\
		& \lesssim \frac{n(\log m)^c\cdot (v+ v^2)}{m^2} .
	\end{align*}
	Hence, we can bound the third term by
	\begin{align*}
		\RN{3} & \lesssim \frac{(\log m)^c}{m^2} \int_0^t \int_0^u (v+v^2) \bigg(1 + \frac{u(1+u)(u \land n/\lambda)}{m}\bigg) dvdu\\
		& \lesssim  \frac{(\log m)^c \cdot t^3(1+t)}{m^2} +  \frac{(\log m)^c \cdot t^4(1+t+t^2)}{m^3}  \bigg(t \land \frac{n}{\lambda}\bigg).
	\end{align*}
	Finally, we bound the fourth term by
	\begin{align*}
		\RN{4} & \leq \frac{1}{n^2} \int_0^t\int_0^u \| \rvf_u-\rvh_u\|_2 \|\rvf_v-\rvy \|_2 \| \KKKK_v(\rvx_\alpha,\rvx_\beta,\cdot,\cdot)\|_2 dvdu\\
		& \lesssim \frac{(\log m)^c}{m^2}  \int_0^t\int_0^u u(1+u) \bigg(u\land \frac{n}{\lambda}\bigg) dvdu\\
		& \lesssim \frac{(\log m)^c \cdot t^3(1+t)}{m^2} \bigg(t \land \frac{n}{\lambda}\bigg).
	\end{align*}
	Combining the above four bounds gives the desired bound on $\gE$. The bounds for the three terms in the RHS of \eqref{eq: approx_nth_full} are derived similarly, and we omit the details.
\end{proof}

\section{NTH-Related Calculations}\label{append:nth_calculations}

\subsection{An Symbolic Program to Compute NTH}
In this section, we develop a recursive program to symbolically compute $K^{(r)}_t$, the $r$-th order kernel in NTH. For simplicity, we consider neural nets without biases \footnote{The method developed in this section applies to neural nets with biases, but with lengthier calculations.}:
\[
f(\rvx, \rvtheta) = \WWW{L} \sigma\bigg(\WWW{L-1} \cdots \bigl(\WWW{2} \sigma(\WWW{1} \rvx ) \bigr) \bigg).
\]

We begin by noting that $K^{(2)}_t$ can be written as an inner product of two gradients. We will see that $K^{(3)}_t$ can be written as a quadratic form, and $K^{(4)}$ can be written as a cubic form, etc. 

To this end, let us denote $\nabla f(\rvx, \rvtheta)$ to be the partial derivative of $f(\rvx, \rvtheta)$ w.r.t. $\rvtheta$. We sometimes drop the dependence on $t$ and write $f(\rvx_\alpha, \rvtheta_t) \equiv f_\alpha$ when there is no ambiguity. 

We will regard $\rvtheta = \rmW  = \{\rmW ^{(\ell)}_{j k }: \ell\in[L], j\in[p_\ell], j \in [q_\ell]\}$ as a rank-$3$ tensor. 
We write $\WWW{\ell}_{jk}\equiv \rmW _{\ell jk}$ when there is no ambiguity. 
In our current notations, $\nabla f(\rvx, \rvtheta) = \{\bigl(\nabla f(\rvx, \rvtheta)\bigr)_{\ell j k}: \ell\in[L], j\in p_\ell, k\in q_\ell\}$ is also a rank-$3$ tensor. We begin by writing
\begin{align*}
	\KK_t(\rvx_\alpha, \rvx_\beta) & = \la \nabla f(\rvx_\alpha, \rvtheta_t) , \nabla f(\rvx_\beta, \rvtheta_t)\ra \\
	& = \sum_{\ell\in[L]} \sum_{j\in[p_\ell]} \sum_{k\in[q_\ell]} \frac{\partial f(\rvx_\alpha, \rvtheta_t)}{\partial \WWW{\ell}_{j k}} \cdot \frac{\partial f(\rvx_\beta, \rvtheta_t)}{\partial \WWW{\ell}_{j k}} \\
	& = (\nabla f_\alpha)^{\ell j k } (\nabla f_\beta)_{\ell j k},
\end{align*}
where in the last line we have used the \emph{Einstein notation} \footnote{That is, if an index appears twice, we take the sum over this index.}. Taking derivative w.r.t. $t$ gives
\begin{align*}
	\frac{d}{dt}\KK_t(\rvx_\alpha, \rvx_\beta) & =  (\frac{d}{dt} \nabla f_\alpha)^{\ell j k } (\nabla f_\beta)_{\ell j k} + (\nabla f_\alpha)^{\ell j k } (\frac{d}{dt} \nabla f_\beta)_{\ell j k}.
\end{align*}
We have
\begin{align*}
	 (\frac{d}{dt} \nabla f_\alpha)_{\ell j k } & = \frac{\partial (\nabla f_\alpha)_{\ell j k }}{\partial \rmW _{suv}}  \cdot \frac{d \rmW _{suv}}{dt} \\
	 & = \frac{\partial (\nabla f_\alpha)_{\ell j k }}{\partial \rmW _{suv}}  \cdot 
	 {\frac{-d L(\rvtheta_t)}{d \rmW _{suv}}} \\
	 & = \frac{\partial (\nabla f_\alpha)_{\ell j k }}{\partial \rmW _{suv}} \cdot \bigg(-\frac{1}{n} \sum_{\gamma\in[n]} (f_\gamma - \rvy_\gamma) \cdot (\nabla f_\gamma)_{suv} \bigg) \\
	 & = -\frac{1}{n} \sum_{\gamma\in[n]} (f_\gamma- \rvy_\gamma) \cdot  \frac{\partial (\nabla f_\alpha)_{\ell j k }}{\partial \rmW _{suv}} \cdot (\nabla f_\gamma)_{suv} \\
	 &{=} -\frac{1}{n} \sum_{\gamma\in[n]} (f_\gamma- \rvy_\gamma) \cdot (\nabla^2 f_\alpha)^{\ell j k}_{suv} \cdot (\nabla f_\gamma)_{suv},
\end{align*}
where we denote $\nabla^2 f$ to be the rank-$6$ tensor, whose $(\ell, j, k, s, u, v)$-th entry is given by 
\[
{\partial}(\nabla f)_{\ell j k } / \partial \rmW _{suv} = \frac{\partial^2 f}{\partial \rmW _{suv} \rmW _{\ell j k}}.
\] 
A similar computation gives
\[
	(\frac{d}{dt} \nabla f_\beta)_{\ell j k } = -\frac{1}{n} \sum_{\gamma\in[n]} (f_\gamma- \rvy_\gamma) \cdot (\nabla^2 f_\beta)^{\ell j k}_{suv} \cdot (\nabla f_\gamma)_{suv}.
\]
Hence, we arrive at
\begin{align*}
	& \KKK_t(\rvx_\alpha,\rvx_\beta,\rvx_\gamma) \nonumber\\
	& = (\nabla^2 f_\alpha)^{\ell j k }_{s u v} (\nabla f_\beta)_{\ell j k } (\nabla f_\gamma)_{suv}  + (\nabla f_\alpha)_{\ell j k } (\nabla^2 f_\beta)^{\ell j k }_{s u v} (\nabla f_\gamma)_{s u v} \\
	& = \sum_{\ell \in [L]} \sum_{j \in[p_\ell]} \sum_{k\in[q_\ell]} \sum_{s \in [L]} \sum_{u\in [p_s]} \sum_{v \in [q_s]} \frac{\partial f(\rvx_\beta, \rvtheta_t)}{\partial \rmW _{\ell j k }} \cdot \frac{\partial^2 f(\rvx_\alpha, \rvtheta_t)}{\partial \rmW _{\ell j k} \partial \rmW _{s u v}} \cdot \frac{\partial f(\rvx_\gamma, \rvtheta_t)}{ {\partial} \rmW _{suv}} \\
	& \qquad + \frac{\partial f(\rvx_\alpha, \rvtheta_t)}{\partial \rmW _{\ell j k }} \cdot \frac{\partial^2 f(\rvx_\beta, \rvtheta_t)}{\partial \rmW _{\ell j k} \partial \rmW _{s u v}} \cdot \frac{\partial f(\rvx_\gamma, \rvtheta_t)}{{\partial} \rmW _{suv}} \nonumber.
\end{align*}	

Note that the above expression is a quadratic form:
\[
	\KKK_t(\rvx_\alpha,\rvx_\beta,\rvx_\gamma) = (\nabla f_{\beta})^\top (\nabla^2 f_\alpha) (\nabla f_\gamma) + (\nabla f_{\alpha})^\top (\nabla^2 f_\beta) (\nabla f_\gamma).
\]

We have already seen some patterns showing up. To obtain $\KKK_t$ from $\KK_t$, we simply conduct the following program:
\begin{enumerate}[itemsep=0mm]
	\item Start with $\KK_t = (\nabla f_\alpha)^{\ell j k } (\nabla f_\beta)_{\ell j k}$ in Einstein notation, which is a function of $r = 2$ gradients;
	\item Introduce a new index $\gamma$ for data points, and a new set of indices $(s u v)$ for weights;
	\item Replicate $\KK_t$ for $r=2$ times, and append $(\nabla f_\gamma)_{suv}$ to the end of each term: 
	\[
		(\nabla f_\alpha)^{\ell j k } (\nabla f_\beta)_{\ell j k} (\nabla f_\gamma)_{suv}  + (\nabla f_\alpha)^{\ell j k } (\nabla f_\beta)_{\ell j k} (\nabla f_\gamma)_{suv} ;
	\]
	\item Choose a term from a total of $r=2$ terms in $\KK_t$, raise its gradient to one higher level, add the new indices $(suv)$ to this term, and do this operation in all possible ways:
	\[
		\KKK_t = (\nabla^2 f_\alpha)^{\ell j k }_{suv} (\nabla f_\beta)_{\ell j k} (\nabla f_\gamma)_{suv}  + (\nabla f_\alpha)_{\ell j k } (\nabla^2 f_\beta)^{\ell j k}_{s u v} (\nabla f_\gamma)_{suv}.
	\]
\end{enumerate}
We now apply the above program to obtain $\KKKK_t$ from $\KKK_t$:
\begin{enumerate}
	\item Introduce a new index $\xi$ for data points, and a new set of indices $(abc)$;
	\item Since $\KKK$ is a function of $r = 3$ gradients, we replicate $\KKK_t$ for $3$ times, and append $(\nabla f_\xi)_{abc}$ to the end of each term:
	\begin{align*}
		& (\nabla^2 f_\alpha)^{\ell j k }_{suv} (\nabla f_\beta)_{\ell j k} (\nabla f_\gamma)_{suv} (\nabla f_\xi)_{abc} \\
		& + (\nabla^2 f_\alpha)^{\ell j k }_{suv} (\nabla f_\beta)_{\ell j k} (\nabla f_\gamma)_{suv} (\nabla f_\xi)_{abc} \\
		& + (\nabla^2 f_\alpha)^{\ell j k }_{suv} (\nabla f_\beta)_{\ell j k} (\nabla f_\gamma)_{suv} (\nabla f_\xi)_{abc}  \\
		& +  (\nabla f_\alpha)_{\ell j k } (\nabla^2 f_\beta)^{\ell j k}_{s u v} (\nabla f_\gamma)_{suv}  (\nabla f_\xi)_{abc}\\
		& +  (\nabla f_\alpha)_{\ell j k } (\nabla^2 f_\beta)^{\ell j k}_{s u v} (\nabla f_\gamma)_{suv} (\nabla f_\xi)_{abc}\\
		& +  (\nabla f_\alpha)_{\ell j k } (\nabla^2 f_\beta)^{\ell j k}_{s u v} (\nabla f_\gamma)_{suv}  (\nabla f_\xi)_{abc};
	\end{align*}	
	\item Raise a term's gradient (from the former $3$ terms) to a higher level, add the new indices, and do it in all possible ways:
	\begin{align}
		\KKKK_t = &   (\nabla^3 f_\alpha)^{\ell j k }_{suv, abc} (\nabla f_\beta)_{\ell j k} (\nabla f_\gamma)_{suv} (\nabla f_\xi)_{abc}\nonumber \\
		& + (\nabla^2 f_\alpha)^{\ell j k }_{suv} (\nabla^2 f_\beta)^{\ell j k}_{abc} (\nabla f_\gamma)_{suv} (\nabla f_\xi)_{abc} \nonumber\\
		& + (\nabla^2 f_\alpha)^{\ell j k }_{suv} (\nabla f_\beta)_{\ell j k} (\nabla^2 f_\gamma)^{suv}_{abc} (\nabla f_\xi)_{abc} \nonumber \\
		& +  (\nabla^2 f_\alpha)^{\ell j k }_{abc} (\nabla^2 f_\beta)^{\ell j k}_{s u v} (\nabla f_\gamma)_{suv}  (\nabla f_\xi)_{abc} \nonumber\\
		& +  (\nabla f_\alpha)_{\ell j k } (\nabla^3 f_\beta)^{\ell j k}_{s u v, abc} (\nabla f_\gamma)_{suv} (\nabla f_\xi)_{abc} \nonumber\\
		\label{eq: K_4_general}
		& +  (\nabla f_\alpha)_{\ell j k } (\nabla^2 f_\beta)^{\ell j k}_{s u v} (\nabla^2 f_\gamma)^{suv}_{abc}  (\nabla f_\xi)_{abc}.
	\end{align}
\end{enumerate}
In the above program, we have regarded $\nabla^3 f$ as a rank-$9$ tensor, whose $(\ell, j, k, s, u, v, a, b, c)$-th entry is given by
\[
	\frac{\partial^3 f}{\partial \rmW _{abc} {\partial} \rmW _{suv} {\partial} \rmW _{\ell j k} }.
\]

The correctness of the above recursive symbolic program can be proved by straightforward induction, and we omit the details.

\subsection{Explicit Expressions for NTH for Two-Layer Nets}
Taking advantage of the recursive program, it's relatively easy to get explicit expressions for $K^{(r)}$ when $r$ is not too large. We focus on the following two-layer net
\begin{equation}
	f(\rvx, \rvtheta) = \frac{1}{\sqrt{m}} \rva^\top \sigma(\rmW  \rvx).
\end{equation}

\begin{proposition}[Expression for $\KK_t, \KKK_t$ and $\KKKK_t$ in a two-layer net]
\label{prop: K_2_K_3_two_layer_net}
	For the two layer neural network $f(\rvx, \rvtheta) = \frac{1}{\sqrt{m}}\rva^\top \sigma(\rmW  \rvx)$, where $\rva\in \bbR^m, \rmW \in \bbR^{m\times d}$, and $\sigma$ an activation function, we have
	\begingroup
	\allowdisplaybreaks
	\begin{align}
		& \KK_t(\rvx_\alpha, \rvx_\beta) \nonumber\\
		\label{eq: K_2_two_layer}	
		&= \frac{1}{m} \rvx_\alpha^\top  \rvx_\beta \cdot \bigg\la \sigma'(\rmW  \rvx_\alpha)\odot \rva_t, \sigma'(\rmW  \rvx_\beta)\odot \rva_t  \bigg\ra + \frac{1}{m} \bigg\la \sigma(\rmW  \rvx_\alpha), \sigma(\rmW  \rvx_\beta) \bigg\ra ,\\
		& \KKK_t(\rvx_\alpha, \rvx_\beta, \rvx_\gamma) \nonumber\\
		& =\frac{1}{m\sqrt{m}} \rvx_\alpha^\top \rvx_\gamma \cdot  \rvx_\alpha^\top \rvx_\beta \cdot \bigg\la \rva_t \odot \rva_t\odot \rva_t , \sigma''(\rmW  \rvx_\alpha)\odot \sigma'(\rmW  \rvx_\beta)\odot \sigma'(\rmW  \rvx_\gamma) \bigg\ra \nonumber \\
		& \qquad + \frac{1}{m\sqrt{m}}\rvx_\beta^\top\rvx_\gamma \cdot \rvx_\alpha^\top\rvx_\beta \cdot \bigg\la\rva_t \odot\rva_t\odot\rva_t , \sigma'(\rmW\rvx_\alpha)\odot \sigma''(\rmW\rvx_\beta) \odot \sigma'(\rmW\rvx_\gamma) \bigg\ra \nonumber \\
		& \qquad + \frac{2}{m\sqrt{m}} \rvx_\alpha^\top\rvx_\beta \cdot \bigg\la\rva_t, \sigma'(\rmW\rvx_\alpha)\odot \sigma'(\rmW\rvx_\beta)\odot \sigma(\rmW\rvx_\gamma)\bigg\ra \nonumber\\ 
		& \qquad + \frac{1}{m\sqrt{m}}\rvx_\alpha^\top\rvx_\gamma \cdot \bigg\la\rva_t, \sigma'(\rmW \rvx_\alpha)\odot \sigma(\rmW \rvx_\beta)\odot \sigma'(\rmW \rvx_\gamma) \bigg\ra \nonumber \\
		\label{eq: K_3_two_layer}
		& \qquad + \frac{1}{m\sqrt{m}}\rvx_\beta^\top\rvx_\gamma \cdot \bigg\la\rva_t, \sigma(\rmW \rvx_\alpha)\odot \sigma'(\rmW \rvx_\beta)\odot \sigma'(\rmW \rvx_\gamma) \bigg\ra ,\\
		& \KKKK_t(\rvx_\alpha,\rvx_\beta,\rvx_\gamma,\rvx_\xi)\nonumber\\
		& =  \frac{1}{m^2}\rvx_\alpha^\top\rvx_\beta \cdot \rvx_\alpha^\top\rvx_\gamma \cdot\rvx_\alpha^\top\rvx_\xi \cdot \bigg\la\rva_t^{\odot 4}, \sigma'''(\rmW \rvx_\alpha) \odot \sigma'(\rmW \rvx_\beta) \odot \sigma'(\rmW  \rvx_\gamma) \odot \sigma'(\rmW \rvx_\xi) \bigg\ra \nonumber\\
		& + \frac{1}{m^2}\rvx_\beta^\top\rvx_\alpha \cdot\rvx_\beta^\top\rvx_\gamma \cdot\rvx_\beta^\top\rvx_\xi \cdot \bigg\la\rva_t^{\odot 4}, \sigma'(\rmW \rvx_\alpha) \odot \sigma'''(\rmW \rvx_\beta) \odot \sigma'(\rmW \rvx_\gamma) \odot \sigma'(\rmW \rvx_\xi) \bigg\ra \nonumber \\
		& + \frac{1}{m^2}\rvx_\alpha^\top\rvx_\beta \cdot (\rvx_\alpha^\top\rvx_\gamma \cdot\rvx_\beta^\top\rvx_\xi +\rvx_\alpha^\top\rvx_\xi \cdot\rvx_\beta^\top\rvx_\gamma) \cdot \bigg\la\rva^{\odot 4}, \sigma''(\rmW \rvx_\alpha) \odot \sigma''(\rmW \rvx_\beta) \odot \sigma'(\rmW \rvx_\gamma) \odot \sigma'(\rmW \rvx_\xi)\bigg\ra \nonumber \\
		& + \frac{1}{m^2}\rvx_\alpha^\top\rvx_\gamma \cdot\rvx_\alpha^\top\rvx_\beta \cdot\rvx_\gamma^\top\rvx_\xi \cdot \bigg\la\rva^{\odot 4} , \sigma''(\rmW \rvx_\alpha) \odot \sigma'(\rmW \rvx_\beta) \odot \sigma''(\rmW \rvx_\gamma) \odot \sigma'(\rmW \rvx_\xi) \bigg\ra \nonumber \\
		& + \frac{1}{m^2}\rvx_\beta^\top\rvx_\gamma \cdot\rvx_\beta^\top\rvx_\alpha  \cdot\rvx_\gamma^\top\rvx_\xi \cdot \bigg\la\rva^{\odot 4}, \sigma'(\rmW \rvx_\alpha) \odot \sigma''(\rmW \rvx_\beta) \odot \sigma''(\rmW \rvx_\gamma) \odot \sigma'(\rmW \rvx_\xi) \bigg\ra \nonumber \\
		& + \frac{3}{m^2}\rvx_\alpha^\top\rvx_\beta \cdot\rvx_\alpha^\top\rvx_\gamma \cdot \bigg\la\rva^{\odot 2},  \sigma''(\rmW \rvx_\alpha) \odot \sigma'(\rmW \rvx_\beta) \odot \sigma'(\rmW \rvx_\gamma) \odot \sigma(\rmW \rvx_\xi)  \bigg\ra \nonumber \\
		& + \frac{3}{m^2}\rvx_\beta^\top\rvx_\alpha \cdot\rvx_\beta^\top\rvx_\gamma \cdot \bigg\la\rva^{\odot 2},  \sigma'(\rmW \rvx_\alpha) \odot \sigma''(\rmW \rvx_\beta) \odot \sigma'(\rmW \rvx_\gamma) \odot \sigma(\rmW \rvx_\xi)  \bigg\ra \nonumber \\
		& + \frac{2}{m^2}\rvx_\alpha^\top\rvx_\beta \cdot\rvx_\alpha^\top\rvx_\xi \cdot \bigg\la\rva^{\odot 2},\sigma''(\rmW \rvx_\alpha) \odot \sigma'(\rmW \rvx_\beta) \odot \sigma(\rmW \rvx_\gamma) \odot \sigma'(\rmW \rvx_\xi)   \bigg\ra \nonumber \\
		& + \frac{2}{m^2}\rvx_\beta^\top\rvx_\alpha\cdot\rvx_\beta^\top\rvx_\xi \cdot \bigg\la\rva^{\odot 2},\sigma'(\rmW \rvx_\alpha) \odot \sigma''(\rmW \rvx_\beta) \odot \sigma(\rmW \rvx_\gamma) \odot \sigma'(\rmW \rvx_\xi)   \bigg\ra \nonumber \\
		& + \frac{2}{m^2}\rvx_\alpha^\top\rvx_\beta\cdot\rvx_\gamma^\top\rvx_\xi \cdot \bigg\la\rva^{\odot 2},\sigma'(\rmW \rvx_\alpha) \odot \sigma'(\rmW \rvx_\beta) \odot \sigma'(\rmW \rvx_\gamma) \odot \sigma'(\rmW \rvx_\xi)   \bigg\ra \nonumber \\
		& + \frac{1}{m^2}\rvx_\alpha^\top\rvx_\gamma\cdot\rvx_\alpha^\top\rvx_\xi \cdot \bigg\la\rva^{\odot 2},\sigma''(\rmW \rvx_\alpha) \odot \sigma(\rmW \rvx_\beta) \odot \sigma'(\rmW \rvx_\gamma) \odot \sigma'(\rmW \rvx_\xi)   \bigg\ra \nonumber \\
		& + \frac{1}{m^2}\rvx_\gamma^\top\rvx_\alpha\cdot\rvx_\gamma^\top\rvx_\xi \cdot \bigg\la\rva^{\odot 2},\sigma'(\rmW \rvx_\alpha) \odot \sigma(\rmW \rvx_\beta) \odot \sigma''(\rmW \rvx_\gamma) \odot \sigma'(\rmW \rvx_\xi)   \bigg\ra \nonumber \\
		& + \frac{1}{m^2}\rvx_\beta^\top\rvx_\gamma\cdot\rvx_\beta^\top\rvx_\xi \cdot \bigg\la\rva^{\odot 2},\sigma(\rmW \rvx_\alpha) \odot \sigma''(\rmW \rvx_\beta) \odot \sigma'(\rmW \rvx_\gamma) \odot \sigma'(\rmW \rvx_\xi)   \bigg\ra \nonumber \\
		& + \frac{1}{m^2}\rvx_\gamma^\top\rvx_\beta\cdot\rvx_\gamma^\top\rvx_\xi \cdot \bigg\la\rva^{\odot 2},\sigma(\rmW \rvx_\alpha) \odot \sigma'(\rmW \rvx_\beta) \odot \sigma''(\rmW \rvx_\gamma) \odot \sigma'(\rmW \rvx_\xi)   \bigg\ra \nonumber \\
		& + \frac{1}{m^2} (\rvx_\alpha^\top\rvx_\gamma\cdot\rvx_\beta^\top\rvx_\xi +\rvx_\alpha^\top\rvx_\xi\cdot\rvx_\beta^\top\rvx_\gamma) \cdot \bigg\la\rva^{\odot 2},\sigma'(\rmW \rvx_\alpha) \odot \sigma'(\rmW \rvx_\beta) \odot \sigma'(\rmW \rvx_\gamma) \odot \sigma'(\rmW \rvx_\xi)   \bigg\ra \nonumber \\
		& + \frac{2}{m^2}\rvx_\alpha^\top\rvx_\beta \cdot \bigg\la \mathbf{1}_m, \sigma'(\rmW \rvx_\alpha) \odot \sigma'(\rmW \rvx_\beta) \odot \sigma(\rmW \rvx_\gamma) \odot \sigma(\rmW \rvx_\xi)  \bigg\ra \nonumber \\
		& + \frac{1}{m^2}\rvx_\alpha^\top\rvx_\gamma \cdot \bigg\la \mathbf{1}_m, \sigma'(\rmW \rvx_\alpha) \odot \sigma(\rmW \rvx_\beta) \odot \sigma'(\rmW \rvx_\gamma) \odot \sigma(\rmW \rvx_\xi)  \bigg\ra \nonumber \\
		\label{eq: K_4_two_layer}	
		& + \frac{1}{m^2}\rvx_\beta^\top\rvx_\gamma \cdot \bigg\la \mathbf{1}_m, \sigma(\rmW \rvx_\alpha) \odot \sigma'(\rmW \rvx_\beta) \odot \sigma'(\rmW \rvx_\gamma) \odot \sigma(\rmW \rvx_\xi)  \bigg\ra , 
	\end{align}
	\endgroup
	where we let $\rva^{\odot r}$ to be the vector whose $i$-th entry is $\rva_i^r$.
\end{proposition}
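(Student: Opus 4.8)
The plan is to run the recursive symbolic program of the previous subsection, specialized to the two-layer net, and then reduce every tensor contraction to a single sum over the $m$ hidden neurons. Write $f(\rvx,\rvtheta) = \frac{1}{\sqrt m}\sum_{i\in[m]}\rva_i\,\sigma(\rvw_i^\top\rvx)$, where $\rvw_i$ denotes the $i$-th row of $\rmW$, so that the $i$-th summand depends only on the parameter block $\rvtheta_i := (\rva_i,\rvw_i)$. Since distinct blocks are disjoint, every mixed partial derivative of $f$ is supported on the ``block-diagonal'' of the parameter index set: an entry of $\nabla^p f$ is nonzero only when all $p$ differentiated coordinates lie in the same block $\rvtheta_i$. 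Consequently the contractions defining $\KK_t,\KKK_t,\KKKK_t$ — namely $\la\nabla f_\alpha,\nabla f_\beta\ra$, $(\nabla f_\beta)^\top(\nabla^2 f_\alpha)(\nabla f_\gamma)+(\alpha\!\leftrightarrow\!\beta)$, and the six-term expression \eqref{eq: K_4_general} — collapse into sums $\sum_{i\in[m]}(\,\cdot\,)$, which is exactly the source of the Hadamard products $\odot$ over the neuron coordinates. The only other input is a short table of the per-block derivatives of $\rva_i\sigma(\rvw_i^\top\rvx)$ up to third order: $\partial_{\rva_i}=\sigma(\rvw_i^\top\rvx)$, $\nabla_{\rvw_i}=\rva_i\sigma'(\rvw_i^\top\rvx)\,\rvx$, $\nabla_{\rvw_i}\partial_{\rva_i}=\sigma'(\rvw_i^\top\rvx)\,\rvx$, $\nabla^2_{\rvw_i}=\rva_i\sigma''(\rvw_i^\top\rvx)\,\rvx\rvx^\top$, together with the obvious third-order analogues; these I would record once and reuse throughout, with $\rmW=\rmW_t,\rva=\rva_t$ understood when evaluating the time-$t$ kernels.

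I would then substitute the table into each contraction and collect. For $\KK_t=\la\nabla f_\alpha,\nabla f_\beta\ra$, splitting the neuron-$i$ gradient into its $\rva_i$-part and its $\rvw_i$-part leaves precisely two surviving cross-terms: the $\rvw$-$\rvw$ one, contributing $\rvx_\alpha^\top\rvx_\beta$ times $\rva_{t,i}^2\,\sigma'(\rvw_i^\top\rvx_\alpha)\sigma'(\rvw_i^\top\rvx_\beta)$, and the $\rva$-$\rva$ one, contributing $\sigma(\rvw_i^\top\rvx_\alpha)\sigma(\rvw_i^\top\rvx_\beta)$; summing over $i$ and tracking the $\frac1m$ prefactors gives \eqref{eq: K_2_two_layer}. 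For $\KKK_t$ I would plug into $(\nabla f_\beta)^\top(\nabla^2 f_\alpha)(\nabla f_\gamma)$ and its $\alpha\!\leftrightarrow\!\beta$ mirror: each of the two gradient legs and each index of the Hessian leg can independently be an $\rva$- or a $\rvw$-derivative, but a Hessian leg differentiated twice in $\rva_i$ vanishes, so only a handful of assignments survive; expanding these, turning every leftover coordinate index of $\rvx$ into an inner product such as $\rvx_\alpha^\top\rvx_\gamma$, and merging the terms that coincide after the $\alpha\!\leftrightarrow\!\beta$ symmetrization yields the five lines of \eqref{eq: K_3_two_layer}, the coefficient $2$ in the third line being the count of the two symmetric ways that monomial is produced. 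Exactly the same mechanism applied to the six parent terms of \eqref{eq: K_4_general} produces \eqref{eq: K_4_two_layer}; there the coefficients $1,2,3$ are nothing but the multiplicities with which each distinct monomial — built from a power $\rva^{\odot r}$, a Hadamard product of derivatives $\sigma^{(k)}(\rmW\rvx_\alpha),\dots,\sigma^{(k)}(\rmW\rvx_\xi)$, and scalar inner products $\rvx_\alpha^\top\rvx_\beta$ — arises, summed over the six terms and over the admissible $\rva$/$\rvw$ assignments of the legs.

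The bulky-but-routine step, and the one that needs genuine care, is the $\KKKK_t$ bookkeeping: for each of the six terms of \eqref{eq: K_4_general} one must track which of $\rvx_\alpha,\rvx_\beta,\rvx_\gamma,\rvx_\xi$ sits in a Hessian leg versus a plain-gradient leg, which legs carry the $\rva$-derivative (this governs both the power of $\rva$ in $\rva^{\odot r}$ and the order of the $\sigma$-derivative that appears), and how the resulting products of inner products and Hadamard products of $\sigma,\sigma',\sigma'',\sigma'''$ group together once the $\alpha\!\leftrightarrow\!\beta$ and $\gamma\!\leftrightarrow\!\xi$ symmetries are folded in; verifying that these multiplicities sum to exactly the stated coefficients is the main obstacle, though it is a finite and mechanical check. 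Finally, the correctness of the symbolic program itself — that the six-term recursion indeed equals $\KKKK_t$ — follows from the induction already indicated in the previous subsection: one differentiates $K^{(r)}_t$ in $t$ using $\dot\rvtheta_t = -\partial L(\rvtheta_t)/\partial\rvtheta$, applies the chain rule exactly as in the displayed derivation of $\KKK_t$, and matches the outcome against the NTH \eqref{eq: nth}.
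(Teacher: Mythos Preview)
Your approach is essentially the paper's: it too invokes the recursive symbolic program, computes the first-, second-, and third-order partials of $f$ with respect to the parameters, and then enumerates the cases according to whether each differentiated index is an $\rva$-coordinate or a $\rmW$-coordinate (the paper phrases this as a case split over layer indices $\ell,s,a\in\{1,2\}$, with Kronecker deltas encoding exactly the block-diagonal structure you describe). One caveat: the $\gamma\!\leftrightarrow\!\xi$ symmetry you plan to ``fold in'' does not exist---in \eqref{eq: K_4_general} the factor $\nabla f_\xi$ is always a plain gradient, never a Hessian, so $\KKKK_t$ is symmetric only in its first two arguments (as is visible in \eqref{eq: K_4_two_layer}, where $\sigma''(\rmW\rvx_\gamma)$ appears but $\sigma''(\rmW\rvx_\xi)$ never does); your bookkeeping should not assume that symmetry when tallying multiplicities.
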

\begin{proof}
	{\bf Computation of $\KK_t$.} We first compute $\KK_t(\rvx_\alpha,\rvx_\beta) = (\nabla f_\alpha)_{\ell j k } (\nabla f_\beta)_{\ell j k}$. We have
	\begin{align*}
		\frac{\partial f(\rvx , \rvtheta)}{\partial \rva } & =  \frac{1}{\sqrt{m}} \sigma(\rmW \rvx) \\
		\frac{\partial f(\rvx , \rvtheta)}{\partial \rmW} & = \frac{1}{\sqrt{m}} \diag(\sigma'(\rmW \rvx)  \rva \rvx^\top = \frac{1}{\sqrt{m}} \bigg(\sigma'(\rmW \rvx \odot \rva )\bigg) x^\top.
	\end{align*}
	Then Equation \eqref{eq: K_2_two_layer} follows by trivial algebra. 

	{\bf Computation of $\KKK_t$.} Now consider $\KKK_t$. We have
	\begin{align*}
		\KKK_t(\rvx_\alpha,\rvx_\beta,\rvx_\gamma) = (\nabla^2 f_\alpha)^{\ell j k }_{suv} (\nabla f_\beta)_{\ell j k} (\nabla f_\gamma)_{suv}  + (\nabla f_\alpha)_{\ell j k } (\nabla^2 f_\beta)^{\ell j k}_{s u v} (\nabla f_\gamma)_{suv}.
	\end{align*}
	For the $\ell = s = 2$ term, we have $(\nabla^2 f)^{2 j k}_{2 u v} = 0$, since $\partial f/ \partial \rva $ is constant in $\rva$. For the $\ell = s = 1$ term, we have
	\begin{align*}
		(\nabla^2 f)^{1 j k}_{1 u v} & = \frac{\partial^2 f(\rvx , \rvtheta)}{\partial \rmW_{uv} W_{jk}} \\
		& = \frac{\partial }{\partial \rmW_{uv}} \frac{1}{\sqrt{m}} \sigma'(\rmW \rvx_j) \rva_j\rvx_k \\
		& = \frac{1}{\sqrt{m}} \sigma''(\rmW \rvx_j) \delta_{ju}\rvx_v \rva_j\rvx_k,
	\end{align*}
	where $\delta_{ju}$ is the Kronecker delta function. Hence we have
	\begin{align*}
		(\nabla^2 f_\alpha)^{1 j k }_{1uv} (\nabla f_\beta)_{1 j k} (\nabla f_\gamma)_{1uv} & =  \frac{1}{m\sqrt{m}} \sigma''(\rmW \rvx_\alpha)_j \sigma'(\rmW \rvx_\beta)_j \sigma'(\rmW \rvx_\gamma)_j \rva_j^3\rvx_{\alpha, v}\rvx_{\gamma, v}\rvx_{\alpha, k}\rvx_{\beta, k} \\
		& = \frac{1}{m\sqrt{m}}\rvx_\alpha^\top\rvx_\gamma \cdot \rvx_\alpha^\top\rvx_\beta \cdot \bigg\la \rva \odot  \rva\odot \rva , \sigma''(\rmW \rvx_\alpha)\odot \sigma'(\rmW \rvx_\beta)\odot \sigma'(\rmW \rvx_\gamma) \bigg\ra.
	\end{align*}
	A similar computation gives
	\[
		(\nabla f_\alpha)_{1 j k } (\nabla^2 f_\beta)^{1 j k}_{1uv} (\nabla f_\gamma)_{1uv}
		= \frac{1}{m\sqrt{m}}\rvx_\beta^\top\rvx_\gamma \cdot \rvx_\alpha^\top\rvx_\beta \cdot \bigg\la \rva \odot  \rva\odot \rva  , \sigma'(\rmW \rvx_\alpha)\odot \sigma''(\rmW \rvx_\beta) \odot \sigma'(\rmW \rvx_\gamma) \bigg\ra.
	\]
	For the $\ell=1, s = 2$ term, note that $a$ is a vector (also a row matrix), so $u = 1$. This gives 
	\begin{align*}
		(\nabla^2 f_\alpha)^{1jk}_{21v} (\nabla f_\beta)_{1jk} (\nabla f_\gamma)_{21 v} & = \frac{\partial}{\partial \rva_v}\bigg(\frac{1}{\sqrt{m}} \sigma'(\rmW \rvx_\alpha)_j \rva_j\rvx_{\alpha, k}\bigg) \cdot \frac{1}{\sqrt{m}} \sigma'(\rmW \rvx_\beta)_j \rva_j\rvx_{\beta, k}  \cdot \frac{1}{\sqrt{m}} \sigma(\rmW \rvx_\gamma)_v \\
		& = \frac{1}{m\sqrt{m}} \sigma'(\rmW \rvx_\alpha)_j \delta_{jv}\rvx_{\alpha, k} \cdot \sigma'(\rmW \rvx_\beta)_j \rva_j\rvx_{\beta, k} \cdot \sigma(\rmW \rvx_\gamma)_v \\
		& = \frac{1}{m\sqrt{m}} \sigma'(\rmW \rvx_\alpha)_j \sigma'(\rmW \rvx_\beta)_j \sigma(\rmW \rvx_\gamma)_j \rva_j\rvx_{\alpha, k}\rvx_{\beta, k} \\
		& = \frac{1}{m\sqrt{m}} \rvx_\alpha^\top\rvx_\beta \cdot \bigg\la a, \sigma'(\rmW \rvx_\alpha)\odot \sigma'(\rmW \rvx_\beta)\odot \sigma(\rmW \rvx_\gamma)\bigg\ra.
	\end{align*}
	By symmetry, the term $(\nabla f_\alpha)_{1jk} (\nabla^2 f_\beta)^{1jk}_{21v} (\nabla f_\gamma)_{21 v}$ is also equal to the above quantity. Finally, we calculate the $\ell = 2, s= 1$ term. Note that in this case, $j = 1$. Hence, we have
	\begin{align*}
		(\nabla^2 f_\alpha)^{21 k }_{1uv} (\nabla f_\beta)_{21 k} (\nabla f_\gamma)_{1uv}	& = \frac{\partial}{\partial \rmW_{uv}}\bigg( \frac{1}{\sqrt{m}} \sigma(\rmW \rvx_\alpha)_k\bigg)  \cdot \frac{1}{\sqrt{m}} \sigma(\rmW \rvx_\beta)_k  \cdot \frac{1}{\sqrt{m}} \sigma'(\rmW \rvx_\gamma)_u \rva_u\rvx_{\gamma, v}\\
		& = \frac{1}{m\sqrt{m}} \sigma'(\rmW \rvx_\alpha)_k \delta_{uk}\rvx_{\alpha, v}  \cdot \sigma(\rmW \rvx_\beta)_k \cdot \sigma'(\rmW \rvx_\gamma)_u \rva_u\rvx_{\gamma, v} \\
		& = \frac{1}{m\sqrt{m}} \sigma'(\rmW \rvx_\alpha)_k \sigma(\rmW \rvx_\beta)_k \sigma'(\rmW \rvx_\gamma)_k \rva_k\rvx_{\alpha, v}\rvx_{\gamma, v} \\
		& = \frac{1}{m\sqrt{m}}\rvx_\alpha^\top\rvx_\gamma \cdot \bigg\la a, \sigma'(\rmW \rvx_\alpha)\odot \sigma(\rmW \rvx_\beta)\odot \sigma'(\rmW \rvx_\gamma) \bigg\ra.
	\end{align*}
	A similar computation gives
	\[
		(\nabla f_\alpha)_{21 k } (\nabla^2 f_\beta)^{21 k}_{1uv} (\nabla f_\gamma)_{1uv}= \frac{1}{m\sqrt{m}}\rvx_\beta^\top\rvx_\gamma \cdot \bigg\la a, \sigma(\rmW \rvx_\alpha)\odot \sigma'(\rmW \rvx_\beta)\odot \sigma'(\rmW \rvx_\gamma) \bigg\ra.	
	\]
	Combining above terms proves Equation \eqref{eq: K_3_two_layer}.

	{\bf Computation of $\KKKK_4$.} Recall that
	\begin{align*}
		\KKKK_t = &   (\nabla^3 f_\alpha)^{\ell j k }_{suv, abc} (\nabla f_\beta)_{\ell j k} (\nabla f_\gamma)_{suv} (\nabla f_\xi)_{abc}\\
		& + (\nabla^2 f_\alpha)^{\ell j k }_{suv} (\nabla^2 f_\beta)^{\ell j k}_{abc} (\nabla f_\gamma)_{suv} (\nabla f_\xi)_{abc} \\
		& + (\nabla^2 f_\alpha)^{\ell j k }_{suv} (\nabla f_\beta)_{\ell j k} (\nabla^2 f_\gamma)^{suv}_{abc} (\nabla f_\xi)_{abc} \\
		& +  (\nabla^2 f_\alpha)^{\ell j k }_{abc} (\nabla^2 f_\beta)^{\ell j k}_{s u v} (\nabla f_\gamma)_{suv}  (\nabla f_\xi)_{abc} \\
		& +  (\nabla f_\alpha)_{\ell j k } (\nabla^3 f_\beta)^{\ell j k}_{s u v, abc} (\nabla f_\gamma)_{suv} (\nabla f_\xi)_{abc} \\
		& +  (\nabla f_\alpha)_{\ell j k } (\nabla^2 f_\beta)^{\ell j k}_{s u v} (\nabla^2 f_\gamma)^{suv}_{abc}  (\nabla f_\xi)_{abc}.
	\end{align*}
	We denote the six terms above as $\RN{1}, \RN{2}, \RN{3}, \RN{4}, \RN{5}, \RN{6}$.
	We first calculate some useful quantities. Recall that
	\[
		(\nabla f)_{21k} = \frac{1}{\sqrt{m}} \sigma(\rmW \rvx_k), \qquad (\nabla f)_{1jk} = \frac{1}{\sqrt{m}} \sigma'(\rmW  \rvx_j) \rva_j\rvx_k.
	\]
	In the calculation for $\KKK_t$, we have shown that
	\begin{align*}
		& (\nabla^2 f)^{2 1 k}_{2 1v}   = 0 \\
		& (\nabla^2 f)^{2 1 k}_{1 u v} = (\nabla^2 f)^{1 u v}_{2 1 k} = \frac{1}{\sqrt{m}} \sigma'(\rmW  \rvx_k) \delta_{uk}\rvx_v \\
		& (\nabla^2 f)^{1 j k}_{1 u v} =(\nabla^2 f)^{1 u v}_{1 j k } = \frac{1}{\sqrt{m}} \sigma''(\rmW  \rvx_j) \rva_j \delta_{ju}\rvx_k\rvx_v.
	\end{align*}
	For the third derivative, we note that the expression $(\nabla^3 f)^{\ell j k}_{suv, abc}$ is invariant to permutations of the three triplets $(\ell j k), (s u v), (a b c)$. Some algebra gives the following identities:
	\begin{align*}
		& (\nabla^3 f)^{2 1 k}_{2 1 v, 21 c} = (\nabla^3 f)^{2 1 k}_{2 1 v, 1 b c} = 0\\
		& (\nabla^3 f)^{1 j k}_{1 u v, 2 1 c} =  \frac{1}{\sqrt{m}} \sigma''(\rmW  \rvx_j) \delta_{ju} \delta_{jc}\rvx_k\rvx_v \\
		& (\nabla^3 f)^{1 j k}_{1 u v, 1 b c}= \frac{1}{\sqrt{m}} \sigma'''(\rmW  \rvx_j )\rva_j \delta_{bj}\delta_{ju}\rvx_k\rvx_v \rvx_c.
	\end{align*}
	We now calculate expression for $\KKKK_t$ based on different configurations of layer indices $(\ell, s, a) \in \{1, 2\}^{3}$. 
	\begin{enumerate}[itemsep=0mm]
		\item If $\ell = s = a = {2}$, or if $\ell = s = 2, a = 1$, then all six terms are zero. 
		\item If $\ell = a = 2, s = 1$, then $\RN{1}=\RN{2}=\RN{4}=\RN{5} = 0$. The third term is
		\begin{align*}
			\RN{3} & = (\nabla^2 f_\alpha)^{21k}_{1uv} (\nabla f_\beta)_{21k} (\nabla^2 f_\gamma)^{1uv}_{21c} (\nabla f_\xi)_{21c} \\
			& = \frac{1}{m^2} \sigma'(\rmW \rvx_\alpha)_{k} \delta_{uk}\rvx_{\alpha, v} \cdot \sigma(\rmW \rvx_\beta)_k \cdot \sigma'(\rmW \rvx_\gamma)_c \delta_{uc}\rvx_{\gamma, v} \cdot \sigma(\rmW \rvx_\xi)_c \\
			& = \frac{1}{m^2}\rvx_\alpha^\top\rvx_\gamma \bigg\la\mathbf{1}_m, \sigma'(\rmW \rvx_\alpha) \odot \sigma(\rmW \rvx_\beta) \odot \sigma'(\rmW \rvx_\gamma) \odot \sigma(\rmW \rvx_\xi) \bigg\ra.
		\end{align*}
		A similar calculation gives
		\[
			\RN{6} = \frac{1}{m^2}\rvx_\beta^\top\rvx_\gamma \bigg\la\mathbf{1}_m, \sigma(\rmW \rvx_\alpha) \odot \sigma'(\rmW \rvx_\beta) \odot \sigma'(\rmW \rvx_\gamma) \odot \sigma(\rmW \rvx_\xi) \bigg\ra.
		\]
		\item If $s = a = 2, \ell = 1$, then $\RN{1}=\RN{3}=\RN{5} = \RN{6} = 0$. And we have
		\begin{align*}
			\RN{2} & = (\nabla^2 f_\alpha )^{1jk}_{21 v} (\nabla^2 f_\beta)^{1jk}_{21c} (\nabla f_\gamma)_{21 v} (\nabla f_\xi)_{21c} \\
			& = \frac{1}{m^2}\rvx_\alpha^\top\rvx_\beta \bigg\la\mathbf{1}_m, \sigma'(\rmW \rvx_\alpha) \odot \sigma'(\rmW \rvx_\beta) \odot \sigma(\rmW \rvx_\gamma) \odot \sigma(\rmW \rvx_\xi) \bigg\ra.
		\end{align*}
		A similar calculation shows that $\RN{4} = \RN{2}$.
		\item If $\ell = s = 1, a = 2$, then we have
		\begin{align*}
			\RN{1} & = (\nabla^3 f_\alpha)^{1jk}_{1uv, 21 c} (\nabla f_\beta)_{1jk} (\nabla f_\gamma)_{1uv} (\nabla f_\xi)_{21 c} \\
			& = \frac{1}{m^2}\rvx_\alpha^\top\rvx_\gamma \cdot\rvx_\alpha^\top\rvx_\beta \bigg\la\rva^{\odot 2} , \sigma''(\rmW \rvx_\alpha) \odot \sigma'(\rmW \rvx_\beta) \odot \sigma'(\rmW \rvx_\gamma) \odot \sigma(\rmW \rvx_\xi) \bigg\ra.
		\end{align*}
		Meanwhile, we have
		\begin{align*}
			\RN{2} & = (\nabla^2 f_\alpha)^{1jk}_{1uv} (\nabla^2 f_\beta)^{1jk}_{21 c} (\nabla f_\gamma)_{1uv} (\nabla f_\xi)_{21c} \\
			& = \frac{1}{m^2}\rvx_\alpha^\top\rvx_\beta \cdot\rvx_\alpha^\top\rvx_\gamma \bigg\la\rva^{\odot 2} , \sigma''(\rmW \rvx_\alpha) \odot \sigma'(\rmW \rvx_\beta) \odot \sigma'(\rmW \rvx_\gamma) \odot \sigma(\rmW \rvx_\xi) \bigg\ra \\
			& = \RN{1}.
		\end{align*}
		A similar calculation shows that $\RN{3} = \RN{2} = \RN{1}$. On the other hand, it's easy to check that
		\[
			\RN{4} = \RN{5} = \RN{6} = 
			\frac{1}{m^2}\rvx_\beta^\top\rvx_\alpha \cdot\rvx_\beta^\top\rvx_\gamma \bigg\la\rva^{\odot 2} , \sigma'(\rmW \rvx_\alpha) \odot \sigma''(\rmW \rvx_\beta) \odot \sigma'(\rmW \rvx_\gamma) \odot \sigma(\rmW \rvx_\xi) \bigg\ra.
		\]
		\item If $\ell = a = 1, s = 2$, then one can check that
		\[
			\RN{1} = \RN{4} = 
			\frac{1}{m^2}\rvx_\alpha^\top\rvx_\beta \cdot\rvx_\alpha^\top\rvx_\xi \bigg\la\rva^{\odot 2} , \sigma''(\rmW \rvx_\alpha) \odot \sigma'(\rmW \rvx_\beta) \odot \sigma(\rmW \rvx_\gamma) \odot \sigma'(\rmW \rvx_\xi) \bigg\ra,
		\]
		and that
		\[
			\RN{2} = \RN{5} = 
			\frac{1}{m^2}\rvx_\beta^\top\rvx_\alpha \cdot\rvx_\beta^\top\rvx_\xi \bigg\la\rva^{\odot 2} , \sigma'(\rmW \rvx_\alpha) \odot \sigma''(\rmW \rvx_\beta) \odot \sigma(\rmW \rvx_\gamma) \odot \sigma'(\rmW \rvx_\xi) \bigg\ra,
		\]
		On the other hand, we have
		\[
			\RN{3} = \RN{6}
			\frac{1}{m^2}\rvx_\alpha^\top\rvx_\beta \cdot\rvx_\gamma^\top\rvx_\xi \bigg\la\rva^{\odot 2} , \sigma'(\rmW \rvx_\alpha) \odot \sigma'(\rmW \rvx_\beta) \odot \sigma'(\rmW \rvx_\gamma) \odot \sigma'(\rmW \rvx_\xi) \bigg\ra.
		\]
		\item If $s = a = 1, \ell = 2$, then we have
		\begingroup
		\allowdisplaybreaks
		\begin{align*}
			\RN{1} & = 
			\frac{1}{m^2}\rvx_\alpha^\top\rvx_\gamma \cdot\rvx_\alpha^\top\rvx_\xi \bigg\la\rva^{\odot 2} , \sigma''(\rmW \rvx_\alpha) \odot \sigma(\rmW \rvx_\beta) \odot \sigma'(\rmW \rvx_\gamma) \odot \sigma'(\rmW \rvx_\xi) \bigg\ra \\
			\RN{2} & = 
			\frac{1}{m^2}\rvx_\alpha^\top\rvx_\gamma \cdot\rvx_\beta^\top\rvx_\xi \bigg\la\rva^{\odot 2} , \sigma'(\rmW \rvx_\alpha) \odot \sigma'(\rmW \rvx_\beta) \odot \sigma'(\rmW \rvx_\gamma) \odot \sigma'(\rmW \rvx_\xi) \bigg\ra \\ 
			\RN{3} & = 
			\frac{1}{m^2}\rvx_\gamma^\top\rvx_\alpha \cdot\rvx_\gamma^\top\rvx_\xi \bigg\la\rva^{\odot 2} , \sigma'(\rmW \rvx_\alpha) \odot \sigma(\rmW \rvx_\beta) \odot \sigma''(\rmW \rvx_\gamma) \odot \sigma'(\rmW \rvx_\xi) \bigg\ra \\ 
			\RN{4} & = 
			\frac{1}{m^2}\rvx_\alpha^\top\rvx_\xi \cdot\rvx_\beta^\top\rvx_\gamma \bigg\la\rva^{\odot 2} , \sigma'(\rmW \rvx_\alpha) \odot \sigma'(\rmW \rvx_\beta) \odot \sigma'(\rmW \rvx_\gamma) \odot \sigma'(\rmW \rvx_\xi) \bigg\ra \\ 
			\RN{5} & = 
			\frac{1}{m^2}\rvx_\beta^\top\rvx_\gamma \cdot\rvx_\beta^\top\rvx_\xi \bigg\la\rva^{\odot 2} , \sigma(\rmW \rvx_\alpha) \odot \sigma''(\rmW \rvx_\beta) \odot \sigma'(\rmW \rvx_\gamma) \odot \sigma'(\rmW \rvx_\xi) \bigg\ra \\ 
			\RN{6} & = 
			\frac{1}{m^2}\rvx_\gamma^\top\rvx_\beta \cdot\rvx_\gamma^\top\rvx_\xi \bigg\la\rva^{\odot 2} , \sigma(\rmW \rvx_\alpha) \odot \sigma'(\rmW \rvx_\beta) \odot \sigma''(\rmW \rvx_\gamma) \odot \sigma'(\rmW \rvx_\xi) \bigg\ra.
		\end{align*}
		\endgroup
		\item If $\ell = a = s = 1$, then we have
		\begin{align*}
			\RN{1} & = (\nabla^3 f_\alpha)^{1jk}_{1uv, 1bc} (\nabla f_\beta)_{1jk} (\nabla f_\gamma)_{1uv} (\nabla f_\xi)_{1bc} \\
			& = \frac{1}{m^2} \sigma'''(\rmW \rvx_\alpha)_j \rva_j \delta_{bj} \delta_{ju}\rvx_{\alpha, c}\rvx_{\alpha, v}\rvx_{\alpha, k} \cdot \sigma'(\rmW \rvx_\beta)_j \rva_j\rvx_{\beta, k} \cdot \sigma'(\rmW \rvx_\gamma)_u \rva_u\rvx_{\gamma, v}  \cdot \sigma'(\rmW \rvx_\xi)_b \rva_b\rvx_{\xi, c} \\
			& = 
			\frac{1}{m^2}\rvx_\alpha^\top\rvx_\xi \cdot\rvx_\alpha^\top\rvx_\gamma \cdot\rvx_\alpha^\top\rvx_\beta \bigg\la\rva^{\odot 4} , \sigma'''(\rmW \rvx_\alpha) \odot \sigma'(\rmW \rvx_\beta) \odot \sigma'(\rmW \rvx_\gamma) \odot \sigma'(\rmW \rvx_\xi) \bigg\ra.
		\end{align*}
		Meanwhile, we have
		\begin{align*}
			\RN{2} & = (\nabla^2 f_\alpha)^{1jk}_{1uv}  (\nabla^2 f_{\beta})^{1jk}_{1bc} (\nabla f_\gamma)_{1uv} (\nabla f_\xi)_{1bc} \\
			& = \frac{1}{m^2} \sigma''(\rmW \rvx_\alpha)_j \rva_j \delta_{ju}\rvx_{\alpha, k}\rvx_{\alpha, v} \cdot \sigma''(\rmW \rvx_\beta)_j \rva_j \delta_{jb}\rvx_{\beta, k}\rvx_{\beta, c} \cdot \sigma'(\rmW \rvx_\gamma)_u \rva_u\rvx_{\gamma, v} \cdot \sigma'(\rmW \rvx_\xi)_b \rva_b\rvx_{\xi, c} \\
			& = 	
			\frac{1}{m^2}\rvx_\alpha^\top\rvx_\beta \cdot\rvx_\alpha^\top\rvx_\gamma \cdot\rvx_\beta^\top\rvx_\xi \bigg\la\rva^{\odot 4} , \sigma''(\rmW \rvx_\alpha) \odot \sigma''(\rmW \rvx_\beta) \odot \sigma'(\rmW \rvx_\gamma) \odot \sigma'(\rmW \rvx_\xi) \bigg\ra.
		\end{align*}
		The other terms are calculated similarly:
		\begin{align*}
			\RN{3} & = 
			\frac{1}{m^2}\rvx_\alpha^\top\rvx_\gamma \cdot\rvx_\alpha^\top\rvx_\beta \cdot\rvx_\gamma^\top\rvx_\xi \bigg\la\rva^{\odot 4} , \sigma''(\rmW \rvx_\alpha) \odot \sigma'(\rmW \rvx_\beta) \odot \sigma''(\rmW \rvx_\gamma) \odot \sigma'(\rmW \rvx_\xi) \bigg\ra\\
			\RN{4} & = 
			\frac{1}{m^2}\rvx_\alpha^\top\rvx_\beta \cdot\rvx_\alpha^\top\rvx_\xi \cdot\rvx_\beta^\top\rvx_\gamma \bigg\la\rva^{\odot 4} , \sigma''(\rmW \rvx_\alpha) \odot \sigma''(\rmW \rvx_\beta) \odot \sigma'(\rmW \rvx_\gamma) \odot \sigma'(\rmW \rvx_\xi) \bigg\ra\\ 
			\RN{5} & =
			\frac{1}{m^2}\rvx_\beta^\top\rvx_\alpha \cdot\rvx_\beta^\top\rvx_\gamma \cdot\rvx_\beta^\top\rvx_\xi \bigg\la\rva^{\odot 4} , \sigma'(\rmW \rvx_\alpha) \odot \sigma'''(\rmW \rvx_\beta) \odot \sigma'(\rmW \rvx_\gamma) \odot \sigma'(\rmW \rvx_\xi) \bigg\ra\\ 
			\RN{6} & = 
			\frac{1}{m^2}\rvx_\beta^\top\rvx_\gamma \cdot\rvx_\beta^\top\rvx_\alpha \cdot\rvx_\gamma^\top\rvx_\xi \bigg\la\rva^{\odot 4} , \sigma'(\rmW \rvx_\alpha) \odot \sigma''(\rmW \rvx_\beta) \odot \sigma''(\rmW \rvx_\gamma) \odot \sigma'(\rmW \rvx_\xi) \bigg\ra.
		\end{align*}
	\end{enumerate}
	Putting the above terms together gives Equation \eqref{eq: K_4_two_layer}.
\end{proof}

\subsection{Expected Values w.r.t. Gaussian Initialization}

We now consider the expected values of $K^{(r)}_t$ at initialization, where both $\rva_0$ and $\rmW_0$ have i.i.d. $\calN(0, 1)$ entries. We will focus the ReLU activation:
\begin{equation}
	\sigma(x ) = \max(0, x).
\end{equation}
Technically, $\sigma(\cdot)$ only has a subdifferential at zero, but since Gaussian initialization puts zero mass at this point, we can safely write $\sigma'(x ) = \mathbbm{1}\{x\geq 0\}$. Moreover, we have $\sigma''(x ) = \delta(x )$, where $\delta(\cdot)$ is the Dirac delta function, and $\sigma'''(x )= \delta'(x )$, where $\delta'(\cdot)$ is the distributional derivative of $\delta(\cdot)$. In this sense, many terms in $\KKKK_t$ are not well-defined, if we don't take expectation. For example, the following terms are not well-defined functions:
\begin{align*}
	& \bigg\la\rva_t^{\odot 4}, \sigma'''(\rmW\rvx_\alpha) \odot \sigma'(\rmW \rvx_\beta) \odot \sigma'(\rmW \rvx_\gamma) \odot \sigma'(\rmW \rvx_\xi) \bigg\ra \\
	& \bigg\la\rva^{\odot 4}, \sigma''(\rmW \rvx_\alpha) \odot \sigma''(\rmW \rvx_\beta) \odot \sigma'(\rmW \rvx_\gamma) \odot \sigma'(\rmW \rvx_\xi)\bigg\ra \\
	&  \bigg\la\rva^{\odot 4}, \sigma''(\rmW \rvx_\alpha) \odot \sigma(\rmW \rvx_\beta) \odot \sigma'(\rmW \rvx_\gamma) \odot \sigma'(\rmW \rvx_\xi)\bigg\ra.
\end{align*}
So it is necessary to integrate over the Gaussian measure to actually make sense of the above expressions.

On the other hand, the following expressions are well-defined functions:

\begin{align*}
		& \bigg\la\rva^{\odot 2},\sigma'(\rmW \rvx_\alpha) \odot \sigma'(\rmW \rvx_\beta) \odot \sigma'(\rmW \rvx_\gamma) \odot \sigma'(\rmW \rvx_\xi)   \bigg\ra \\
		& \bigg\la \mathbf{1}_m, \sigma'(\rmW \rvx_\alpha) \odot \sigma(\rmW \rvx_\beta) \odot \sigma'(\rmW \rvx_\gamma) \odot \sigma(\rmW \rvx_\xi)  \bigg\ra, 
\end{align*}
because there is no expressions like $\delta(\cdot)$ and $\delta'(\cdot)$.

We now calculate the expectation of $\KK_0, \KKK_0$ and $\KKKK_0$ under Gaussian initialization. First, let us note that $\bbE \KKK_0 = 0$. In fact, since the $r$-th moment of $\calN(0, 1)$ is zero for odd $r$, we have $\bbE K^{(r)} = 0$ for any odd $r$.

\subsubsection{Expectation of the second-order kernel}
\label{subsubsection:label-agnostic-kernel}
For $\KK_0$, we need to calculate the following two quantities:
\[
	\bbE \bigg\la \sigma'(\rmW \rvx_\alpha)\odot\rva_t, \sigma'(\rmW \rvx_\beta)\odot\rva_t  \bigg\ra,  \qquad \bbE \bigg\la \sigma(\rmW \rvx_\alpha), \sigma(\rmW \rvx_\beta) \bigg\ra.
\]
For the first term, we have 
\[
	\bbE \bigg\la \sigma'(\rmW \rvx_\alpha)\odot\rva_t, \sigma'(\rmW \rvx_\beta)\odot\rva_t  \bigg\ra = m \bbE_{Z\sim\calN(0, I_d)} \bigg[\sigma'(\rvx_\alpha^\top Z)\sigma'(\rvx_\beta^\top Z) \bigg],
\]
whereas for the second term, we have
\[
	\bbE \bigg\la \sigma(\rmW \rvx_\alpha), \sigma(\rmW \rvx_\beta) \bigg\ra = m \bbE_{Z\sim\calN(0, I_d)} \bigg[\sigma(\rvx_\alpha^\top Z)\sigma(\rvx_\beta^\top Z)\bigg].
\]
The above quantities are calculated in many literature (see, e.g., \citealt{cho2010large,arora2019exact,bietti2019inductive}). Let $\Delta_{\alpha\beta}\in[0, \pi)$ be the angle between $x_\alpha$ and $x_\beta$. We have
\begin{align*}
	\bbE_{Z\sim\bbN(0, I_d)} \bigg[\sigma'(\rvx_\alpha^\top Z)\sigma'(\rvx_\beta^\top Z) \bigg] & = \frac{\pi - \Delta_{\alpha\beta}}{2\pi}\\
	\bbE_{Z\sim\calN(0, I_d)} \bigg[\sigma(\rvx_\alpha^\top Z)\sigma(\rvx_\beta^\top Z)\bigg] & = \frac{\|x_\alpha\|_2\|x_\beta\|_2 }{2\pi} \cdot \bigg(\sin\Delta_{\alpha\beta} + (\pi-\Delta_{\alpha\beta})\cos\Delta_{\alpha\beta}\bigg).
\end{align*}	
Hence, we have
\begin{equation*}
	\label{eq: K_2_two_layer_expectation}
	\bbE \KK_0(\rvx_\alpha,\rvx_\beta) =\rvx_\alpha^\top\rvx_\beta \cdot \frac{\pi - \Delta_{\alpha\beta}}{2\pi} +  \|x_\alpha \|_2 \|\rvx_\beta\|_2 \cdot \frac{\sin\Delta_{\alpha\beta} + (\pi - \Delta_{\alpha\beta})\cos\Delta_{\alpha\beta}}{2\pi}.
\end{equation*}

\subsubsection{Expectation of the fourth-order kernel}
We now try to compute $\bbE \KKKK_0$. Inspecting Equation \eqref{eq: K_4_two_layer}, we notice that it suffices to calculate the expectation of the following quantities: 
\begin{align*}
	\RN{1}  &= \bigg\la\rva^{\odot 2},\sigma'(\rmW \rvx_\alpha) \odot \sigma'(\rmW \rvx_\beta) \odot \sigma'(\rmW \rvx_\gamma) \odot \sigma'(\rmW \rvx_\xi)   \bigg\ra \\
	\RN{2}  &= \bigg\la \mathbf{1}_m, \sigma'(\rmW \rvx_\alpha) \odot \sigma'(\rmW \rvx_\beta) \odot \sigma(\rmW \rvx_\gamma) \odot \sigma(\rmW \rvx_\xi)  \bigg\ra \\
	\RN{3}  &=  \bigg\la\rva^{\odot 2}, \sigma''(\rmW \rvx_\alpha) \odot \sigma(\rmW \rvx_\beta) \odot \sigma'(\rmW \rvx_\gamma) \odot \sigma'(\rmW \rvx_\xi)\bigg\ra \\
	\RN{4}  &= \bigg\la\rva^{\odot 4}, \sigma''(\rmW \rvx_\alpha) \odot \sigma''(\rmW \rvx_\beta) \odot \sigma'(\rmW \rvx_\gamma) \odot \sigma'(\rmW \rvx_\xi)\bigg\ra \\
	\RN{5}  &= \bigg\la\rva^{\odot 4}, \sigma'''(\rmW\rvx_\alpha) \odot \sigma'(\rmW \rvx_\beta) \odot \sigma'(\rmW \rvx_\gamma) \odot \sigma'(\rmW \rvx_\xi) \bigg\ra.
\end{align*}
The rest of the terms can be calculated similarly.
~\\

{\bf The first term.} For the first term, we have
\[
	\bbE \bigg\la\rva^{\odot 2},\sigma'(\rmW \rvx_\alpha) \odot \sigma'(\rmW \rvx_\beta) \odot \sigma'(\rmW \rvx_\gamma) \odot \sigma'(\rmW \rvx_\xi)   \bigg\ra = m \bbP \bigg(\rvx_\alpha^\top Z \geq 0,\rvx_\beta^\top Z \geq 0,\rvx_\gamma^\top Z \geq 0,\rvx_\xi^\top Z \geq 0\bigg),
\]
Note that this term only depends on the angles, so we can WLOG assume that all four vectors lies on the sphere. We reduce this $d$-dimensional integral to a four-dimensional one. For any orthogonal matrix $Q$, we have
\[
	\bbP \bigg(\rvx_\alpha^\top Z \geq 0,\rvx_\beta^\top Z \geq 0,\rvx_\gamma^\top Z \geq 0,\rvx_\xi^\top Z \geq 0\bigg) = \bbP \bigg((Q\rvx_\alpha)^\top Z \geq 0, (Q\rvx_\beta)^\top Z \geq 0, (Q\rvx_\gamma)^\top Z \geq 0, (Q\rvx_\xi)^\top Z \geq 0\bigg).
\]
We choose a $Q$ s.t $(Q\rvx_{\alpha})_i = 0$ for $i \geq 2$, $(Q\rvx_\beta)_i = 0$ for $i \geq 3$, $(Q\rvx_\gamma)_i  = 0$ for $i \geq 4$, and $(Q\rvx_\xi)_i = 0$ for $i \geq 5$. Moreover, we require $(Q\rvx_\alpha)_1 =1$. Those requirements specify a unique (up to flips in one direction) rotation matrix $Q$. In order the preserve the angles, we necessarily have
\begin{align*}
	& Q\rvx_\alpha  = 
	\begin{pmatrix}
	1 & 0 & 0 & 0 & 0 & \cdots & 0
	\end{pmatrix}^\top \\
	& (Q\rvx_\beta)^\top (Q\rvx_\alpha)   = \cos\Delta_{\alpha\beta}, \ \ \  \|Q\rvx_\beta \|_2 = 1 \\
	& (Q\rvx_\gamma)^\top (Q\rvx_\alpha)  =  \cos\Delta_{\alpha\gamma} , \ \ \
	(Q\rvx_\gamma)^\top (Q\rvx_\beta)  = \cos \Delta_{\beta \gamma}, \ \ \  \|Q\rvx_\gamma \|_2 = 1\\
	& (Q\rvx_\xi)^\top (Q\rvx_\alpha)  = \cos \Delta_{\alpha\xi} , \ \ \ (Q\rvx_\xi)^\top (Q\rvx_\beta) = \cos \Delta_{\beta\xi} , \ \ \ (Q\rvx_\xi)^\top (Q\rvx_\gamma) = \cos \Delta_{\gamma\xi}, \ \ \ \| Q\rvx_\xi\|_2 = 1.
\end{align*}
Solving the above system of equations gives
\begin{align*}
	(Q\rvx_\beta) & = 
	[\cos\Delta_{\alpha \beta} , \ \sin\Delta_{\alpha\beta} , \  0 , \  \cdots, \  0]^\top \\
	(Q\rvx_\gamma) & = \bigg[
	\cos\Delta_{\alpha\gamma}   ,  \
	(\sin^{-1} \Delta_{\alpha\beta})\cdot ({\cos\Delta_{\beta\gamma} - \cos\Delta_{\alpha\beta}\cos\Delta_{\alpha\gamma}}), \ \sqrt{\sin^2\Delta_{\alpha\gamma} - (Q\rvx_\gamma)_2^2} , \ 0, \ \cdots, \ 0 )^\top \\
	(Q\rvx_\xi) & = \bigg[
	\cos\Delta_{\alpha\xi}, \
	(\sin^{-1} \Delta_{\alpha\beta})\cdot ({\cos\Delta_{\beta\xi} - \cos\Delta_{\alpha\beta}\cos\Delta_{\alpha\xi}}) , \ \frac{\cos\Delta_{\gamma\xi} - (Q\rvx_\gamma)_2 (Q\rvx_\xi)_2}{(Q\rvx_\gamma)_3} , \\
	& \qquad \qquad \qquad \qquad \qquad \qquad \qquad\qquad\qquad\qquad  \sqrt{\sin^2\Delta_{\alpha\xi} - (Q\rvx_\xi)_2^2 - (Q\rvx_\xi)_3^2}, \ 0, \ \cdots, \ 0
	\bigg]^\top.
\end{align*}
Let $v_\alpha, v_\beta, v_\gamma, v_\xi$ be the vectors composed of the first four coordinates of $Q\rvx_\alpha, Q\rvx_\beta, Q\rvx_\gamma, Q\rvx_\xi$, respectively. Then we have
\[
	\bbP \bigg(\rvx_\alpha^\top Z \geq 0,\rvx_\beta^\top Z \geq 0,\rvx_\gamma^\top Z \geq 0,\rvx_\xi^\top Z \geq 0\bigg) = \bbP \bigg(v_\alpha^\top Z \geq 0, v_\beta^\top Z \geq 0, v_\gamma^\top Z \geq 0, v_\xi^\top Z \geq 0\bigg),
\]
where with a slight abuse of notation, the vector $Z\sim\calN(0, I_4)$ is now a four-dimensional standard Gaussian. Let
\begin{equation}
	\label{eq: supporting_vectors_in_four_dim}
	V =  V_{\alpha, \beta, \gamma, \xi} = 
	\begin{pmatrix}
		v_\alpha^\top \\
		v_\beta^\top \\
		v_\gamma^\top \\
		v_\xi
	\end{pmatrix} \in \bbR^{4\times 4}.
\end{equation}	
Then the quantity of interest becomes $\bbP(VZ \in \calH)$, where $\calH$ is the positive orthant of $\bbR^4$. It's clear from the construction that $V$ is invertible, so we are interested in $\bbP(Z\in V^{-1}\calH)$. The set $V^{-1}\calH$ is the \emph{positive hull} made by the four columns of $V^{-1}$. Since the law of $Z$ is spherically symmetric, the measure of $V^{-1}H$ under the law of $Z$ is the fraction of the unit sphere $\bbS^3$  in this hull, which is equal to $\Omega/(2\pi^2)$, where $2\pi^2$ is the surface area of $\bbS^3$, and $\Omega$ is the \emph{solid angle} for the four column vectors of $V^{-1}$. The solid angle for $r$ vectors has analytical formulas if $r \leq 3$. For $r = 4$ and higher dimensional, there is a formula in terms of multivariate Taylor series (see, e.g., \citealt{aomoto1977analytic,ribando2006measuring}), but no closed-form formulas are known to the best of our knowledge. However, the probability we are interested in can be efficiently simulated by law of large numbers, because we have reduce the $d$-dimensional Gaussian integral to a four-dimensional one. 

In summary, we have the following expression:
\[
	\bbE \RN{1} = m \bbP(V Z \succeq 0), \qquad Z\sim\calN(0, I_4).
\]
~\\

{\bf The second term.} For the second term, we have
\[
	\bbE \bigg\la \mathbf{1}_m, \sigma'(\rmW \rvx_\alpha) \odot \sigma'(\rmW \rvx_\beta) \odot \sigma(\rmW \rvx_\gamma) \odot \sigma(\rmW \rvx_\xi)  \bigg\ra  = m \bbE \bigg[x_\gamma^\top Z\cdot\rvx_\xi^\top Z\cdot \mathbbm{1}\bigg\{x_\alpha^\top Z \geq 0,\rvx_\beta^\top Z \geq 0,\rvx_\gamma^\top Z \geq 0,\rvx_\xi^\top Z \geq 0\bigg\}\bigg].
\]
Similarly, we have
\begin{align*}
	& \bbE_{Z\sim\calN(0, I_d)} \bigg[x_\gamma^\top Z\cdot\rvx_\xi^\top Z\cdot \mathbbm{1}\bigg\{x_\alpha^\top Z \geq 0,\rvx_\beta^\top Z \geq 0,\rvx_\gamma^\top Z \geq 0,\rvx_\xi^\top Z \geq 0\bigg\}\bigg] \\
	& =  \|\rvx_\gamma\|_2 \|\rvx_\xi \|_2\cdot \bbE_{Z\sim\calN(0, I_4)} \bigg[v_\gamma^\top Z \cdot v_\xi^\top Z \cdot \mathbbm{1}\{V_{\alpha, \beta, \gamma, \xi}Z \in \calH\}\bigg],
\end{align*}	
which gives the following expression:
\[
	\bbE \RN{2} = m \|\rvx_\gamma\|_2 \|\rvx_\xi \|_2\cdot \bbE \bigg[v_\gamma^\top Z \cdot v_\xi^\top Z \cdot \mathbbm{1}\{V_{\alpha, \beta, \gamma, \xi}Z \succeq 0\}\bigg] \qquad Z\sim\calN(0, I_4).
\]
~\\

{\bf The third term.} For this term, we have
\[
	\bbE \RN{3} = 3m \bbE \bigg[\delta(\rvx_\alpha^\top Z)\cdot\rvx_\beta^\top Z \cdot \mathbbm{1}\bigg\{x_\beta^\top Z \geq 0,\rvx_\gamma^\top Z \geq 0,\rvx_\xi^\top Z \geq 0\bigg\}\bigg].
\]
Using the same rotation trick, we have
\[
	\RN{3} = 3m \bbE\bigg[ \delta(\|x_\alpha\|_2 v_\alpha^\top Z ) \cdot \|x_\beta \|_2 v_\beta^\top Z \cdot \mathbbm{1}\bigg\{ v_\beta^\top Z \geq 0, v_\gamma^\top Z \geq 0, v_\xi^\top Z \geq 0  \bigg\} \bigg] ,\qquad Z\sim\calN(0, I_4).
\]
Since $v_\alpha = (1, 0, 0, 0)^\top$, we have
\begin{align*}
	& \bbE \bigg[\delta(\|x_\alpha \|_2 Z_1) \cdot v_\beta^\top Z \cdot \mathbbm{1}\bigg\{ v_\beta^\top Z\geq 0, v_\gamma^\top Z \geq 0, v_\xi^\top Z \geq 0\bigg\}\bigg] \\
	& = \bbE_{Z_2, Z_3, Z_4} \bigg[ \int_{-\infty}^\infty dz_1 \frac{1}{\sqrt{2\pi}} e^{-z_1^2/2} \delta(\|x_\alpha \|_2 z_1) h(z_1, Z_2, Z_3, Z_4) \bigg] \\
	& = \bbE_{Z_2, Z_3, Z_4} \bigg[ \frac{1}{\|\rvx_\alpha\|_2\sqrt{2\pi}} h(0, Z_2, Z_3, Z_4) \bigg] \\
	& = \frac{1}{\|x_\alpha\|_2\sqrt{2\pi}} \bbE\bigg[ v_{\beta, -1}^\top Z_{-1}\cdot \mathbbm{1}\bigg\{ v_{\beta, -1}^\top Z_{-1} \geq 0, v_{\gamma, -1}^\top Z_{-1}\geq 0, v_{\xi, -1}^\top Z_{-1}\geq 0 \bigg\}\bigg],
\end{align*}
where we let $h(Z_1, Z_2, Z_3, Z_4) =  v_\beta^\top Z \cdot \mathbbm{1}\bigg\{ v_\beta^\top Z\geq 0, v_\gamma^\top Z \geq 0, v_\xi^\top Z \geq 0\bigg\}$, and in the second equality, we used the fact that, for $c\neq 0$,
\[
	\int_{-\infty}^{\infty} f(\rvx )\delta(cx) dx = \int_{-\infty}^\infty \frac{1}{c} f(\frac{cx}{c}) \delta(cx) d cx = \frac{1}{c} \int_{\sgn(u)\cdot (-\infty)}^{\sgn(u)\cdot (+ \infty)} f(\frac{u}{c})\delta(u) du  = \frac{1}{|c|} \int_{-\infty}^\infty f(\frac{u}{c}) \delta(u) du = \frac{f(0)}{|c|}.
\]
Note that the above computation isn't too messy, because we rotate $Z$ to align with $x_\alpha$, and the delta function appears only at the $\alpha$ location. Other terms in $\KKKK_t$ with similar structures as $\RN{3}$ should be handled similarly (i.e., rotate $Z$ to align with the axis where the delta function appears).

In summary, we have
\[
	\bbE \RN{3} = \frac{3m}{  \sqrt{2\pi}} \cdot \frac{\|x_\beta\|_2}{\|x_\alpha \|_2} \cdot \bbE\bigg[ v_{\beta, -1}^\top Z_{-1} \cdot \mathbbm{1}\bigg\{ v_{\beta, -1}^\top Z_{-1} \geq 0, v_{\gamma, -1}^\top Z_{-1}\geq 0, v_{\xi, -1}^\top Z_{-1}\geq 0 \bigg\}\bigg].
\]

~\\

{\bf The fourth term.} We have
\allowdisplaybreaks
\begingroup
\begin{align*}
	\bbE \RN{4} & = 3m  \bbE\bigg[\delta(\rvx_\alpha^\top Z) \delta(\rvx_\beta^\top Z)\mathbbm{1}\bigg\{\rvx_\gamma^\top Z \geq 0,\rvx_\xi^\top Z \geq 0 \bigg\}\bigg] \\
	& = 3m \bbE \bigg[\delta(\|x_\alpha \|_2\cdot Z_1)\cdot \delta\bigl(\|x_\beta \|_2 \cdot (v_{\beta, 1} Z_1 + v_{\beta, 2} Z_2)\bigr) \cdot \mathbbm{1}\bigg\{\rvx_\gamma^\top Z \geq 0,\rvx_\xi^\top Z \geq 0 \bigg\}\bigg] \\
	& = 3m \bbE_{Z_3, Z_4} \bigg[\int_{-\infty}^\infty dz_2 \frac{1}{\sqrt{2\pi}}e^{-z_2^2/2} \int_{-\infty}^\infty dz_1 \frac{1}{\sqrt{2\pi}}e^{-z_1^2/2} \cdot \delta(\|\rvx_\alpha\|_2 z_1)\cdot \delta\bigl(\|x_\beta \|_2 \cdot (v_{\beta, 1} z_1 + v_{\beta, 2} z_2)\bigr) \\
	& \qquad \cdot \mathbbm{1}\bigg\{ v_{\gamma, 1} z_1 + v_{\gamma, 2} z_2 + v_{\gamma, 3} Z_3 + v_{\gamma, 4} Z_4\geq 0, v_{\xi, 1} z_1 + v_{\xi, 2} z_2 + v_{\xi, 3} Z_3 + v_{\xi, 4} Z_4\geq 0 \bigg\} \bigg]  \\
	& = \frac{3m}{\|x_\alpha\|_2\cdot 2\pi} \cdot \bbE_{Z_3, Z_4}  \bigg[\int_{-\infty}^\infty dz_2 e^{-z_2^2/2} \cdot \delta(\|x_\beta\|_2 v_{\beta, 2} z_2) \\
	& \qquad \cdot  \mathbbm{1}\bigg\{  v_{\gamma, 2} z_2 + v_{\gamma, 3} Z_3 + v_{\gamma, 4} Z_4\geq 0,  v_{\xi, 2} z_2 + v_{\xi, 3} Z_3 + v_{\xi, 4} Z_4\geq 0 \bigg\} \bigg] \\
	& = \frac{3m}{2\pi \|x_\alpha \|_2\|x_\beta\|_2|v_{\beta, 2}|}  \bbP(v_{\gamma, 3}Z_3 + v_{\gamma, 4}Z_4 \geq 0, v_{\xi, 3}Z_3 + v_{\xi, 4} Z_4 \geq 0).
\end{align*}	
\endgroup
The probability in the RHS can be calculated explicitly. By spherical symmetry of Gaussian, we have
\[
	\bbP(v_{\gamma, 3}Z_3 + v_{\gamma, 4}Z_4 \geq 0, v_{\xi, 3}Z_3 + v_{\xi, 4} Z_4 \geq 0) = \frac{\pi - \angle(v_{\gamma, 3:4}, v_{\xi, 3:4})}{2\pi},
\]
where $\angle(v_{\gamma, 3:4}, v_{\xi, 3:4}) = \arccos(\frac{v_{\gamma, 3:4}^\top v_{\xi, 3:4}}{\|v_{\gamma, 3:4} \|_2\|v_{\xi, 3:4}\|_2})\in [0, \pi)$ is the angle between the two vectors $(v_{\gamma , 3}, v_{\gamma, 4}), (v_{\xi, 3}, v_{\xi ,4})$. Hence, we arrive at
\[
	\bbE \RN{4} =  \frac{3m [\pi - \angle(v_{\gamma, 3:4}, v_{\xi, 3:4})]}{4\pi^2 \|x_\alpha \|_2\|x_\beta\|_2|v_{\beta, 2}|}.
\]
~\\

{\bf The fifth term.} We have 
\begin{align*}
	\bbE \RN{5} & = 3m \bbE \bigg[\delta'(\rvx_\alpha^\top Z)\cdot \mathbbm{1}\bigg\{x_\beta^\top Z \geq 0,\rvx_\gamma^\top Z \geq 0,\rvx_\xi^\top Z \geq 0 \bigg\}\bigg] \\
	& = 3m \bbE \bigg[\delta'(\|x_\alpha\|_2 Z_1) \cdot \mathbbm{1}\bigg\{v_\beta^\top Z \geq 0, v_\gamma^\top Z \geq 0, v_\xi^\top Z \geq 0\bigg\}\bigg] \\
	& =  \frac{3m}{\sqrt{2\pi}} \bbE_{Z_2, Z_3, Z_4} \bigg[\int_{-\infty}^{\infty} dz_1   \cdot \delta'(\|\rvx_\alpha\|_2 z_1) \cdot e^{-z_1^2/2} \\
	& \qquad \cdot \sigma'(v_{\beta, 1}z_1 + v_{\beta, -1}^\top Z_{-1})\sigma'(v_{\gamma, 1}z_1 + v_{\gamma, -1}^\top Z_{-1})\sigma'(v_{\xi, 1}z_1 + v_{\xi, -1}^\top Z_{-1})\bigg] \\
	& = \frac{3m}{\|x_\alpha \|_2\sqrt{2\pi}} \bbE_{Z_1, Z_2, Z_3} \bigg[ \int_{-\infty}^\infty du \cdot \delta'(u) \cdot e^{-{u^2}/{2\|x_\alpha\|_2^2}}\\
	& \qquad \cdot \sigma'\bigg(\frac{v_{\beta, 1}u}{\|x_\alpha\|_2} + v_{\beta, -1}^\top Z_{-1}\bigg)\sigma'\bigg(\frac{v_{\gamma, 1}u}{\|x_\alpha\|_2} + v_{\gamma, -1}^\top Z_{-1}\bigg)\sigma'\bigg(\frac{v_{\xi, 1}u}{\|x_\alpha\|_2} + v_{\xi, -1}^\top Z_{-1}\bigg)\bigg] \\
	& = -\frac{3m}{\|x_\alpha \|_2\sqrt{2\pi}}\bbE_{Z_2, Z_3, Z_4} \bigg\{\frac{d}{du}\bigg[ e^{-{u^2}/{2\|x_\alpha\|_2^2}} \\
	& \qquad \cdot\sigma'\bigg(\frac{v_{\beta, 1}u}{\|x_\alpha\|_2} + v_{\beta, -1}^\top Z_{-1}\bigg)\sigma'\bigg(\frac{v_{\gamma, 1}u}{\|x_\alpha\|_2} + v_{\gamma, -1}^\top Z_{-1}\bigg)\sigma'\bigg(\frac{v_{\xi, 1}u}{\|x_\alpha\|_2} + v_{\xi, -1}^\top Z_{-1}\bigg)\bigg]\bigg|_{u=0}\bigg\},
\end{align*}
where the last equality is by integration by part (which essentially defines $\delta(\cdot)$).
With some algebra, one can see that the derivative term in the RHS is equal to (with $u = 0$)
\begin{align*}
	& \frac{v_{\beta, 1}}{\|x_\alpha\|_2}\cdot \delta(v_{\beta, -1}^\top Z_{-1})  \sigma'(v_{\gamma, -1}^\top z_{-1}) \sigma'(v_{\xi, -1}^\top Z_{-1}) \\
	& \qquad + \frac{v_{\gamma, 1}}{\|x_\alpha\|_2}\cdot \sigma'(v_{\beta, -1}^\top Z_{-1})  \delta(v_{\gamma, -1}^\top z_{-1}) \sigma'(v_{\xi, -1}^\top Z_{-1})  \\
	& \qquad + \frac{v_{\xi, 1}}{\|x_\alpha\|_2}\cdot \sigma'(v_{\beta, -1}^\top Z_{-1})  \sigma'(v_{\gamma, -1}^\top z_{-1}) \delta(v_{\xi, -1}^\top Z_{-1}),
\end{align*}
For the first term in the above display, we have
\begin{align*}
	& \bbE \bigg[\delta(v_{\beta, -1}^\top Z_{-1})  \sigma'(v_{\gamma, -1}^\top z_{-1}) \sigma'(v_{\xi, -1}^\top Z_{-1})\bigg] \\
	& = \frac{1}{|v_{\beta, 2}|\sqrt{2\pi}} \bbE_{Z_3, Z_4}  \bigg[\sigma'(v_{\gamma, 3:4}^\top Z_{3:4}) \sigma'(v_{\xi, 3:4}^\top Z_{3:4})\bigg] \\
	& = \frac{\pi - \angle(v_{\gamma, 3:4}, v_{\xi, 3:4})}{2\pi\sqrt{2\pi} \cdot |v_{\beta, 2}|}.
\end{align*}
Meanwhile, we have
\begin{align*}
	& \bbE \bigg[\sigma'(v_{\beta, -1}^\top Z_{-1})  \delta(v_{\gamma, -1}^\top z_{-1}) \sigma'(v_{\xi, -1}^\top Z_{-1})\bigg] \\
	& = \bbE_{Z_4} \bigg[ \frac{1}{2\pi}\iint_{\bbR^2} e^{-(z_2^2+z_3^2)/2} \delta(v_{\gamma, 2}z_2 + v_{\gamma, 3}z_3) \sigma'(v_{\beta, 2}z_2)\sigma'(v_{\xi, 2}z_2 + v_{\xi, 3}z_3 + v_{\xi, 4} Z_4)\bigg].
\end{align*}
Using the following change of variables:
\[
	u = v_{\gamma, 2} z_2 + v_{\gamma, 3} z_3, \qquad w = z_2,
\]
so that
\[
	z_2 = w, \qquad z_3 = \frac{u - v_{\gamma, 2}w}{v_{\gamma, 3}},
\]
we have
\begin{align*}
	& \bbE \bigg[\sigma'(v_{\beta, -1}^\top Z_{-1})  \delta(v_{\gamma, -1}^\top z_{-1}) \sigma'(v_{\xi, -1}^\top Z_{-1})\bigg] \\
	& = \frac{1}{2\pi}\bbE_{Z_4} \bigg[ \iint_{\bbR^2} \frac{1}{|v_{\gamma, 3}|} \exp\{-\frac{-(w^2 + (v_{\gamma, 2} w/ v_{\gamma, 3})^2)}{2}\} \delta(u) \sigma'(v_{\beta, 2} w) \sigma'(v_{\xi, 2}w + v_{\xi, 3}\frac{u-v_{\gamma, 2}w}{v_{\gamma, 3}} + v_{\xi, 4}Z_4)\bigg] \\
	&  = \frac{1}{2\pi|v_{\gamma, 3}|} \bbE_{Z_4} \int_{-\infty}^{\infty} dw \exp\{-\frac{w^2}{2}\cdot (1+ (v_{\gamma, 2}/v_{\gamma, 3})^2)\} \sigma'(v_{\beta, 2} w) \sigma'\bigg((v_{\xi, 2} - \frac{v_{\xi, 3}v_{\gamma, 2}}{v_{\gamma, 3}}) w + v_{\xi, 4} Z_4\bigg) \\
	& = \frac{1}{\sqrt{2\pi}\cdot |v_{\gamma, 3}|}\bbE_{Z_4, W} \bigg[\sigma'(v_{\beta, 2} w)\sigma'\bigg((v_{\xi, 2} - \frac{v_{\xi, 3}v_{\gamma, 2}}{v_{\gamma, 3}}) w + v_{\xi, 4} Z_4\bigg)\bigg],
\end{align*}
where $W\sim\calN(0, \frac{1}{1 + v_{\gamma,2}^2/v_{\gamma, 3}^2})$. By rescaling, for $(X, Y)\sim\calN(0, I_2)$, we have
\begin{align*}
	& \bbE \bigg[\sigma'(v_{\beta, -1}^\top Z_{-1})  \delta(v_{\gamma, -1}^\top z_{-1}) \sigma'(v_{\xi, -1}^\top Z_{-1})\bigg] \\
	& = \frac{1}{\sqrt{2\pi} |v_{\gamma, 3}|} \bbE_{X, Y} \bigg[ \sigma'\bigg(\frac{v_{\beta, 2}}{\sqrt{1+ v_{\gamma, 2}^2 / v_{\gamma, 3}^2}}\cdot X\bigg)\sigma'\bigg( \frac{v_{\xi, 2}v_{\gamma, 3} - v_{\xi, 3}v_{\gamma, 2}}{v_{\gamma, 3}\sqrt{1+ v_{\gamma, 2}^2 / v_{\gamma, 3}^2}} X + v_{\xi, 4} Y \bigg)\bigg] \\
	& = \frac{1}{2\pi\sqrt{2\pi}|v_{\gamma, 3}|}  \cdot \bigg[\pi - \angle\bigg((\sqrt{1+ v_{\gamma, 2}^2 / v_{\gamma, 3}^2} , 0), (\frac{v_{\xi, 2}v_{\gamma, 3} - v_{\xi, 3}v_{\gamma, 2}}{v_{\gamma, 3}\sqrt{1+ v_{\gamma, 2}^2 / v_{\gamma, 3}^2}}, v_{\xi, 4})  \bigg)\bigg].
\end{align*}
Now let us consider
\begin{align*}
	& \bbE \bigg[\sigma'(v_{\beta, -1}^\top Z_{-1})  \sigma'(v_{\gamma, -1}^\top z_{-1}) \delta(v_{\xi, -1}^\top Z_{-1})\bigg]	 \\
	& = \frac{1}{2\pi\sqrt{2\pi}} \int_{\bbR^3} e^{-(z_2^2 + z_3^2 + z_4^2)/2} \delta(v_{\xi, 2}z_2 + v_{\xi, 3}z_3 + v_{\xi, 4} z_4) \sigma'(v_{\beta, 2}z_2) \sigma'(v_{\gamma, 2}z_2 + v_{\gamma, 3} z_3) dz_2d z_3 dz_4.
\end{align*}
We use the following change of variables:
\[
	x = v_{\xi, 2} z_2 + v_{\xi, 3} z_3 + v_{\xi, 4}, \ \ \ y = z_3, \ \ \ z = z_4,
\]
so that
\[
	z_2 = \frac{x - v_{\xi, 3} y - v_{\xi, 4} z}{v_{\xi, 2}},  \ \ \ z_3 = y, \ \ \ z_4 = z.
\]
Then we have
\begin{align*}
	& \bbE \bigg[\sigma'(v_{\beta, -1}^\top Z_{-1})  \sigma'(v_{\gamma, -1}^\top z_{-1}) \delta(v_{\xi, -1}^\top Z_{-1})\bigg] \\
	& = \frac{1}{2\pi\sqrt{2\pi} |v_{\xi, 2}|} \int_{\bbR^3} \exp\{-\frac{(\rvx  - v_{\xi, 3} y - v_{\xi, 4} z)^2/v_{\xi, 2}^2 + y^2 + z^2}{2}\} \cdot  \sigma'\bigg(\frac{v_{\beta, 2} (\rvx  - v_{\xi, 3} y - v_{\xi, 4}z)}{v_{\xi, 2}}\bigg) \\
	& \qquad \cdot \sigma'\bigg(\frac{v_{\gamma , 2} (\rvx  - v_{\xi, 3}y - v_{\xi, 4} z)}{v_{\xi, 2}}\bigg) \delta(\rvx ) dxdydz,
\end{align*}
where $1/|v_{\xi, 2}|$ is the Jacobian term when we do change of variables. Let us define $\Sigma_\xi$ via
\[
	\Sigma_\xi^{-1} =  
	\begin{pmatrix}
		1 + v_{\xi, 3}^2/v_{\xi, 2}^2 & v_{\xi, 3} v_{\xi, 4} / v_{\xi, 2} \\
		v_{\xi, 3} v_{\xi, 4} / v_{\xi, 2} & 1 + v_{\xi, 4}^2 / v_{\xi, 2}^2
	\end{pmatrix}.
\]
Integrating $x$ out, we get
\begin{align*}
	& \bbE \bigg[\sigma'(v_{\beta, -1}^\top Z_{-1})  \sigma'(v_{\gamma, -1}^\top z_{-1}) \delta(v_{\xi, -1}^\top Z_{-1})\bigg] \\
	& = \frac{\det(\Sigma_\xi)^{1/2}}{\sqrt{2\pi} |v_{\xi, 2}|} \frac{\det(\Sigma_\xi)^{-1/2}}{2\pi} \int_{\bbR^3} \exp\bigg\{-(y, z) \Sigma_\xi^{-1} (y, z)^\top/2\bigg\} \sigma'\bigg(-\frac{v_{\beta, 2} (v_{\xi, 3} y + v_{\xi, 4} z)}{v_{\xi, 2}}\bigg) \\
	& \qquad \cdot \sigma'\bigg((v_{\gamma, 3} - \frac{v_{\gamma, 2} v_{\xi, 3}}{v_{\xi, 2}} ) y - \frac{v_{\gamma, 2} v_{\xi, 4}}{v_{\xi, 2}}z\bigg) dydz \\
	& = \frac{\det(\Sigma_\xi)^{1/2}}{\sqrt{2\pi} |v_{\xi, 2}|} \bbE \bigg[\sigma'\bigg(-\frac{v_{\beta, 2} (v_{\xi, 3} Y + v_{\xi, 4} Z)}{v_{\xi, 2}}\bigg) \cdot \sigma'\bigg((v_{\gamma, 3} - \frac{v_{\gamma, 2} v_{\xi, 3}}{v_{\xi, 2}} ) Y - \frac{v_{\gamma, 2} v_{\xi, 4}}{v_{\xi, 2}}Z\bigg)\bigg],
\end{align*}
where $(Y, Z)\sim \calN(0, \Sigma_\xi)$. Note that for another independent vector $(\tilde Y, \tilde Z)\sim \calN(0, I_2)$, we have
\[
	(Y, Z) =_d \Sigma_\xi^{1/2} (\tilde Y, \tilde Z).
\]
Hence, we have
\begin{align*}
	& \bbE \bigg[\sigma'(v_{\beta, -1}^\top Z_{-1})  \sigma'(v_{\gamma, -1}^\top z_{-1}) \delta(v_{\xi, -1}^\top Z_{-1})\bigg] \\
	& = \frac{\det(\Sigma_\xi)^{1/2}}{\sqrt{2\pi} |v_{\xi, 2}|} \bbE \bigg[\sigma'\bigg(-\frac{v_{\beta, 2}}{v_{\xi, 2}} \cdot (v_{\xi, 3}, v_{\xi, 4}) \cdot \Sigma_\xi^{1/2}\cdot (\tilde Y, \tilde Z)^\top \bigg) \cdot \sigma'\bigg((v_{\gamma, 3} - \frac{v_{\gamma, 2} v_{\xi, 3}}{v_{\xi, 2}} ,   - \frac{v_{\gamma, 2} v_{\xi, 4}}{v_{\xi, 2}} ) \cdot \Sigma_\xi^{1/2} (\tilde Y, \tilde Z)^\top\bigg)\bigg] \\
	& = \frac{\det(\Sigma_\xi)^{1/2}}{2\pi\sqrt{2\pi} |v_{\xi, 2}|} \cdot \bigg[\pi - \angle\bigg( -\frac{v_{\beta, 2}}{v_{\xi, 2}} \cdot  \Sigma_\xi^{1/2} \cdot 
	\begin{pmatrix}
		v_{\xi, 3} \\
		v_{\xi, 4}
	\end{pmatrix}, 
	\Sigma_{\xi}^{1/2}\cdot
	\begin{pmatrix}
		v_{\gamma, 3} - \frac{v_{\gamma, 2} v_{\xi, 3}}{v_{\xi, 2}}\\
		- \frac{v_{\gamma, 2} v_{\xi, 4}}{v_{\xi, 2}}
	\end{pmatrix}
	\bigg)
	\bigg].
\end{align*}

In summary, we get
\begin{align*}
	\bbE \RN{5} & = \frac{-3m}{\|x_\alpha \|_2^2 \sqrt{2\pi}} \cdot \bigg\{ v_{\beta, 1} \cdot T_1 + v_{\gamma, 1} \cdot T_2 + v_{\xi, 1} T_3 \bigg\},
\end{align*}
where 
\begin{align*}
	T_1 & = \frac{\pi - \angle(v_{\gamma, 3:4}, v_{\xi, 3:4})}{2\pi\sqrt{2\pi} \cdot |v_{\beta, 2}|} \\
	T_2 & = \frac{1}{2\pi\sqrt{2\pi}|v_{\gamma, 3}|}  \cdot \bigg[\pi - \angle\bigg(
	\begin{pmatrix}
		\sqrt{1+ v_{\gamma, 2}^2 / v_{\gamma, 3}^2} \\
		0
	\end{pmatrix},
	\begin{pmatrix}
		\frac{v_{\xi, 2}v_{\gamma, 3} - v_{\xi, 3}v_{\gamma, 2}}{v_{\gamma, 3}\sqrt{1+ v_{\gamma, 2}^2 / v_{\gamma, 3}^2}} \\
		 v_{\xi, 4}
	\end{pmatrix} \bigg)\bigg] \\
	T_3 & = \frac{\det(\Sigma_\xi)^{1/2}}{2\pi\sqrt{2\pi} |v_{\xi, 2}|} \cdot \bigg[\pi - \angle\bigg( -\frac{v_{\beta, 2}}{v_{\xi, 2}} \cdot  \Sigma_\xi^{1/2} \cdot 
	\begin{pmatrix}
		v_{\xi, 3} \\
		v_{\xi, 4}
	\end{pmatrix}, 
	\Sigma_{\xi}^{1/2}\cdot
	\begin{pmatrix}
		v_{\gamma, 3} - \frac{v_{\gamma, 2} v_{\xi, 3}}{v_{\xi, 2}}\\
		- \frac{v_{\gamma, 2} v_{\xi, 4}}{v_{\xi, 2}}
	\end{pmatrix}
	\bigg)
	\bigg],
\end{align*}	
where
\[
	\Sigma_\xi =\begin{pmatrix}
		1 + v_{\xi, 3}^2/v_{\xi, 2}^2 & v_{\xi, 3} v_{\xi, 4} / v_{\xi, 2} \\
		v_{\xi, 3} v_{\xi, 4} / v_{\xi, 2} & 1 + v_{\xi, 4}^2 / v_{\xi, 2}^2
	\end{pmatrix}^{-1}. 
\]

\section{Connections and Differences to Previous Works on Label-Aware Kernels}\label{append:connections}
We discuss the relation between our proposed kernels and two lines of research on label-aware kernels, namely the Kernel Target Alignment (KTA) and the Information Bottleneck (IB) principle. 

{\bf Connections to KTA.} Recall that our higher-order regression based kernel is 
\begin{equation*}
	K^{(\textnormal{HR})}(\rvx, \rvx') :  = \bbE_\init K^{(2)}_0(\rvx, \rvx') + \lambda \gZ(\rvx, \rvx', S), 
\end{equation*}
where $\gZ(\rvx, \rvx', S)$ is an estimator of $(\textnormal{label of }\rvx) \times (\textnormal{label of }\rvx')$. From a high-level, this can be regarded as a specific way to to align with the ``optimal'' kernel $\rvy \rvy^\top$, because 
\[
    \la\rmK^{(\textnormal{HR})}, \rvy\rvy^\top\ra = \la\bbE_\init [\rmK^{(2)}_0] , \rvy\rvy^\top\ra + \lambda \la \boldsymbol{\calZ}, \rvy\rvy^\top\ra,
\]
and the term $\la \boldsymbol{\calZ}, \rvy\rvy^\top\ra$ is close to one as $\boldsymbol{\calZ}$ estimates $\rvy\rvy^\top$ by construction.

{\bf Relations to IB.} Consider the following the model fitting process: $Y\to X \to T \to \hat Y$. That is: 1) The nature generates a label $Y \in \{\pm 1\}$, e.g., a cat; 2) Given the label $Y$, the nature further generates a ``raw'' feature $X$, e.g., an image of a cat; 3) We try to find a feature map which maps $X$ to $T$; 4) We use $T$ to generate a prediction $\hat Y$. 

The IB principle gives a way to justify ``what kind of $T$ is optimal''. More explicitly, it poses the following optimization problem:
\[
	\min_{\substack{p_{T|X}\\ p_{Y|T}\\ p_T}} I(p_X; p_T) - \beta I(p_T; p_Y),
\]
where we let $p_{A|B}$ to be the conditional density of $A$ conditional on $B$. Then the ``optimal'' feature $T$ is a randomized map which sends a specific realization $X = \rvx$ to a random feature $T \sim p_{T|X=\rvx}$. 

Note that in the IB formulation, the optimal feature $T$ \emph{has no explicit dependence on $Y$} --- it only depends on $Y$ through $X$. This is in sharp contrast to our formulation, where we allow the feature to have explicit dependence on $Y$.

To further illustrate this point, it has been shown that any optimal solution to the IB optimization problem must satisfy the following set of \emph{self-consistent equations} \citep{tishby2000information}:
\begin{align*}
	p_{T|X}(t|x)& = \frac{p_T(t)}{Z(x; \beta)} \exp\bigg\{ -\beta D_{KL}(p_{Y|X} \| p_{Y|T})\bigg\}\\
	p(t) & = \int p_{T|X}(t | x) p_X(x) dx \\
	p_{Y|T}(y|t) & = \int p_{Y|X}(y|x) p_{X|T}(x|t) dx.
\end{align*}
Note that the distribution of the optimal feature $T$ only has dependence on $x$, and has no explicit dependence on $Y$, because $Y$ is marginalized in the KL divergence term.
\section{Experimental Details and Additional Results}
\label{sec:experimental-details}
\subsection{Details on Figure \ref{fig:le-cat-dog} and \ref{fig:le-chair-bench}}
\label{subsec:different-labeling-systems}
To experiment with different label systems, we use the MS COCO object detection dataset \citep{lin2014microsoft} because there can be various objects in the same image. In this experiment, we consider $50$ images with both cats and chairs, $50$ images with both cats and benches, $50$ images with both dogs and chairs, and $50$ images with both dogs and benches. An image with cats and chairs will include neither dogs nor benches, and similar for other cases. Therefore, in total, there are $100$ images with cats, $100$ images with dogs, $100$ images with chairs and $100$ images with benches. 

We then consider the same two-layer neural network as in Sec.~\ref{subsec:local-elasticity} and plot the relative ratio (Eq. \eqref{eq: relative_ratio}) for $\KK_t$ as $t$ increases, which gives Fig.~\ref{fig:le-cat-dog} and Fig.~\ref{fig:le-chair-bench}.



\subsection{The Architecture of CNN}
\label{subsec:CNN-architecture}
The architecture of the CNN used in Sec. \ref{subsec:kernel-regression} is as follows: there are seven convolutional layers with kernel size $3$ and padding number $1$, followed by a fully-connected layer. The output channels of each convolutional layer are: $16k$, $16k$, $16k$, $32k$, $32k$, $64k$, and $64k$, where by default $k=1$. But we increase $k$ to $16$, $8$, $4$ to get better CNN performance for multi-class classification with $2000$, $5000$, and $10000$ training examples in Table \ref{table:multiple-kernel-regression}. The strides for each convolutional layer is $1$ except the fourth and the last, which are $2$.

\subsection{Details on \lantk-HR}\label{subsec:details-of-Z}
{\bf The hyper-parameter $\lambda$.} Based on our experiments, the test accuracy is usually a concave function of $\lambda$. So we simply choose the best value among $\{0.001, 0.01, 0.1, 1.0\}$.

{\bf Choice of $\boldsymbol{\gZ(\rvx, \rvx', S)}$.} Our first choice of $\gZ$ is based on a kernel regression. Specifically, for \lantk-KR-V1, we consider 
\[
\gZ(\rvx, \rvx', S) = \sum_{i,j} \rvy_i\rvy_j \psi(\bbE_\init \rmK_0^{(2)}(\rvx, \rvx'), \bbE_\init \rmK_0^{(2)}(\rvx_i, \rvx_j)),
\]
and for \lantk-KR-V2, we consider 
\[
\gZ(\rvx, \rvx', S) = \sum_{i,j} ((\E_\init\rmKK_0)^{-1}\rvy)_i((\E_\init\rmKK_0)^{-1}\rvy)_j \psi(\bbE_\init \rmK_0^{(2)}(\rvx, \rvx'), \bbE_\init \rmK_0^{(2)}(\rvx_i, \rvx_j)),
\]
where 
\[
\psi(\phi_{ab}, \phi_{ij}) = \frac{B - (\phi_{ab} - \phi_{ij})^2}{n^2B - \sum_{s,t}(\phi_{ab} - \phi_{st})^2}
\] 
is a normalized similarity measure (smaller $(\phi_{ab} - \phi_{ij})^2$ indicates larger similarity) and $B$ is a constant which is set to be the largest $(\phi_{max} - \phi_{min})^2$ in the training data. Note that the change from $\rvy$ to $(\E_\init\rmKK_0)^{-1}\rvy$ in CNTK-V2 is inspired from the second term in $K^{(\textnormal{NTH})}(\rvx, \rvx')$. 

Our second choice of $\gZ$ is based on a linear regression with Fast-Johnson-Lindenstrauss-Transform (FJLT) \citep{ailon2009fast} and some hand-engineered features. FJLT is used to reduce the computational cost because there are $O(10^8)$ examples in the pairwise dataset\footnote{Our implementation is based on \url{https://github.com/michaelmathen/FJLT} and \url{https://github.com/dingluo/fwht}.}. And the hand-engineered features are as follows:
\begin{enumerate}
    \item For \lantk-FJLT-V1, we use the following features: $\bbE_\init\KK_0(\rvx, \rvx')$ (the label-agnostic kernel), $\cos\left<\rvx, \rvx'\right>$ (cos of the angle), $\rvx^T\rvx'$ (the inner product), $||\rvx||_2||\rvx'||_2$(the product of $L_2$ norms), $||\rvx - \rvx'||_2^2$ (Euclidean distance), $|\rvx - \rvx'|_1^1$($L_1$ distance), $\left<\rvx, \rvx'\right>$ (the angle between two vectors), $\sin\left<\rvx, \rvx'\right>$ (sin of the angle), $RBF(\rvx, \rvx')$ (RBF distance between two vectors), $\rho(\rvx, \rvx')$ (the Pearson correlation coefficient between two vectors). 
    \item For \lantk-FJLT-V2, in addition to the ten features used in \lantk-FJLT-V2, we also include the same $10$ features based on the top five principle components from principal component analysis (PCA).
\end{enumerate}



\subsection{Details on Training}
\label{subsec:experimental-settings}
We use the Adam optimizer \citep{kingma2014adam} with learning rates $3e^{-4}$ for $300$ epochs for the CNN in Sec.~\ref{subsec:kernel-regression} and for $500$ epochs for the 2-layer neural network in Sec.~\ref{subsec:local-elasticity}. For simplicity, we use the parameter with best test performance during the whole training trajectory. 

\end{document}